\definecolor{bg}{HTML}{F8FAFD}
\definecolor{keyword}{HTML}{0077B6}
\definecolor{string}{HTML}{E76F51}
\definecolor{comment}{HTML}{A0AEC0}
\definecolor{number}{HTML}{457B9D}
\definecolor{function}{HTML}{2A9D8F}
\definecolor{class}{HTML}{F4A261}
\definecolor{text}{HTML}{2D3A4A}
\lstdefinestyle{pytorchheros}{
    backgroundcolor=\color{bg},
    basicstyle=\fontfamily{qhv}\selectfont\footnotesize\color{text}, % qhv = TeX Gyre Heros
    keywordstyle=\color{keyword}\bfseries,
    stringstyle=\color{string},
    commentstyle=\color{comment}\itshape,
    numberstyle=\color{number},
    identifierstyle=\color{text},
    classoffset=1,
    morekeywords={Net},
    keywordstyle=\color{class}\bfseries,
    classoffset=0,
    emph={__init__,forward,print}, emphstyle=\color{function},
    frame=single,
    framerule=0pt,
    rulecolor=\color{bg},
    tabsize=4,
    showstringspaces=false,
    breaklines=true,
    linewidth=\linewidth,
    xleftmargin=0em,
    xrightmargin=0em,
    aboveskip=1em,
    belowskip=1em,
    literate={~}{{\textasciitilde}}1
}
\renewcommand\lstlistingname{Algorithm}
\crefname{lstlisting}{\MakeLowercase\lstlistingname}{\MakeLowercase\lstlistingname s}
\Crefname{lstlisting}{\lstlistingname}{\lstlistingname s}
\newcommand{\1}{\mathbf{1}}
\newcommand{\norm}[1]{\left\lVert #1\right\rVert}
\newcommand{\grad}{\nabla}
\newcommand{\Hess}{H}
\newcommand{\tr}{\mathrm{tr}}
\newcommand{\Ball}{\mathrm{B}}
\newcommand{\vol}{v_d}
\crefname{defi}{defn.}{defns.}
\def\1{\bm{1}}
\DeclareMathOperator{\Tr}{Tr}
\def\vtheta{{\bm{\theta}}}
\def\va{{\bm{a}}}
\def\vq{{\bm{q}}}
\def\vt{{\bm{t}}}
\def\vu{{\bm{u}}}
\def\vx{{\bm{x}}}
\def\vy{{\bm{y}}}
\def\vz{{\bm{z}}}
\def\mX{{\bm{X}}}
\def\mZ{{\bm{Z}}}
\DeclareMathAlphabet{\mathsfit}{\encodingdefault}{\sfdefault}{m}{sl}
\SetMathAlphabet{\mathsfit}{bold}{\encodingdefault}{\sfdefault}{bx}{n}
\def\sA{{\mathbb{A}}}
\newcommand{\E}{\mathbb{E}}
\newcommand{\R}{\mathbb{R}}
\newcommand{\Cov}{\mathrm{Cov}}
\begin{document}
\icmlmaketitle

\newtcbtheorem
  [
  crefname={detail}{detail}]
  {detail}% name
  {Experiment Details}% title
  {%
    % breakable,
    % enhanced,
    fontupper=\small,
    colback=orange!5,
    colframe=orange!35!black,
    fonttitle=\bfseries,
    boxsep=1pt,
    left=1.5mm,
    right=1.5mm,
    top=2mm,
    bottom=1mm,
  }% options
  {detail}% prefix

\newtcbtheorem
  [
  crefname={def.}{def.}]
  {definition}% name
  {Definition}% title
  {%
    % breakable,
    % enhanced,
    fontupper=\small,
    colback=green!5,
    colframe=green!35!black,
    fonttitle=\bfseries,
    boxsep=1pt,
    left=1.5mm,
    right=1.5mm,
    top=2mm,
    bottom=1mm,
  }% options
  {def}% prefix

\newtcbtheorem
  [
  crefname={thm.}{thms.}]
  {theorem}% name
  {Theorem}% title
  {%
    % breakable,
    % enhanced,
    fontupper=\small,
    colback=red!5,
    colframe=red!35!black,
    fonttitle=\bfseries,
    boxsep=1pt,
    left=1.5mm,
    right=1.5mm,
    top=2mm,
    bottom=1mm,
  }% options
  {theorem}% prefix
\newtcbtheorem
  [
  crefname={lemma.}{lemmas.}]% init options
  {lemma}% name
  {Lemma}% title
  {%
    % breakable,
    % enhanced,
    fontupper=\small,
    colback=blue!3,
    colframe=blue!35!black,
    fonttitle=\bfseries,
    boxsep=1pt,
    left=1.5mm,
    right=1.5mm,
    top=2mm,
    bottom=1mm,
  }% options
  {lemma}% prefix
\newtcbtheorem
  [
  crefname={prop.}{props.}]% init options
  {proposition}% name
  {Proposition}% title
  {%
    breakable,
    enhanced,
    fontupper=\small,
    colback=red!5,
    colframe=red!35!black,
    fonttitle=\bfseries,
    boxsep=1pt,
    left=1.5mm,
    right=1.5mm,
    top=2mm,
    bottom=1mm,
  }% options
  {proposition}% prefix
\newtcbtheorem
  [
  crefname={cor.}{corrs.}]% init options
  {corollary}% name
  {Corollary}% title
  {%
    breakable,
    enhanced,
    fontupper=\small,
    colback=red!5,
    colframe=red!35!black,
    fonttitle=\bfseries,
    boxsep=1pt,
    left=1.5mm,
    right=1.5mm,
    top=2mm,
    bottom=1mm,
  }% options
  {corollary}% prefix

\begin{figure*}[t!]
    \centering
    \begin{minipage}{0.34\linewidth}
    \includegraphics[width=\linewidth]{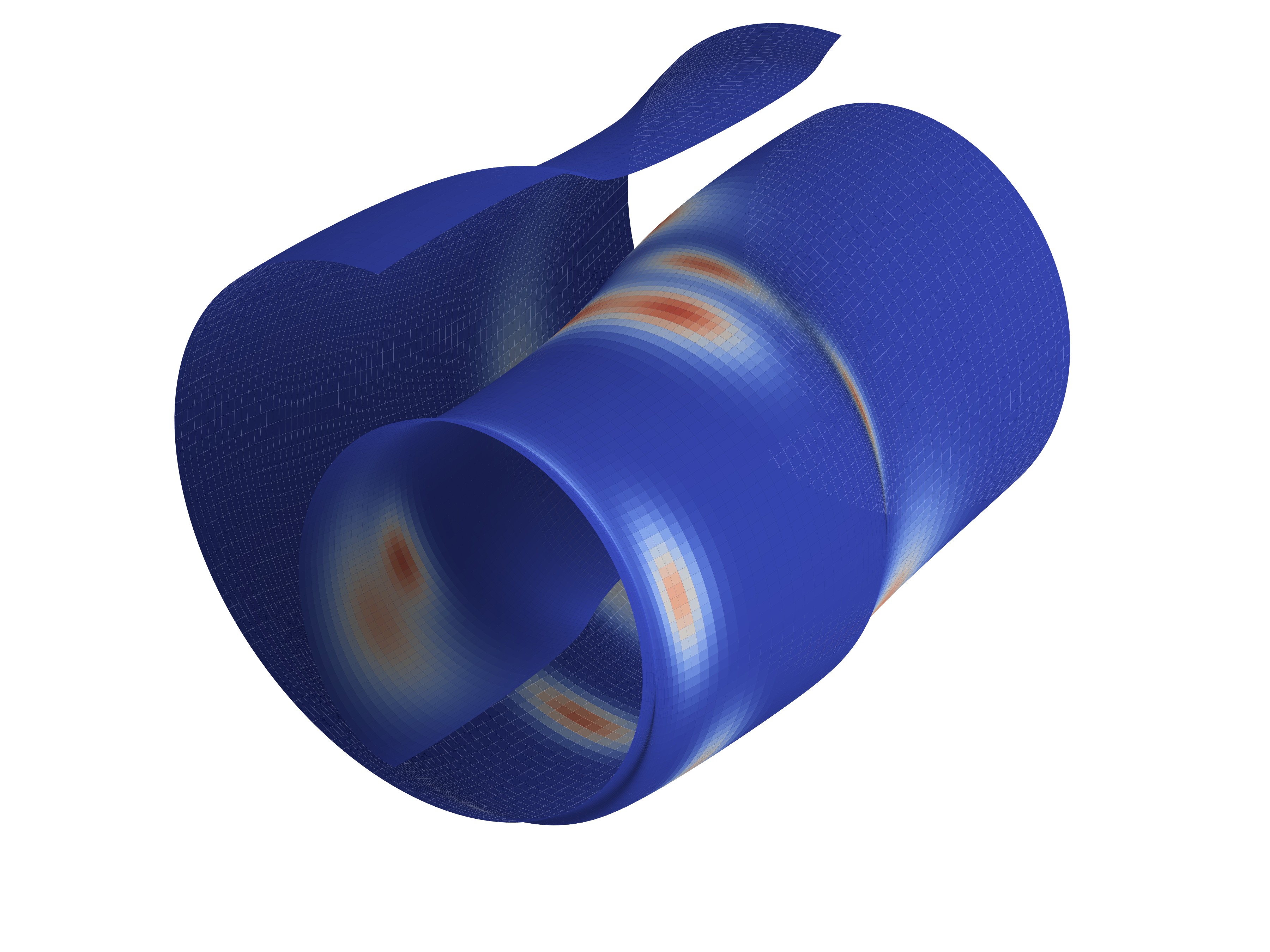}
    \end{minipage}
    \begin{minipage}{0.05\linewidth}
    \hspace{-0.4cm}{\Large $\underset{\rightarrow}{f_{\vtheta}}$}
    \end{minipage}
    \begin{minipage}{0.6\linewidth}    \hspace{-0.3cm}\includegraphics[width=\linewidth]{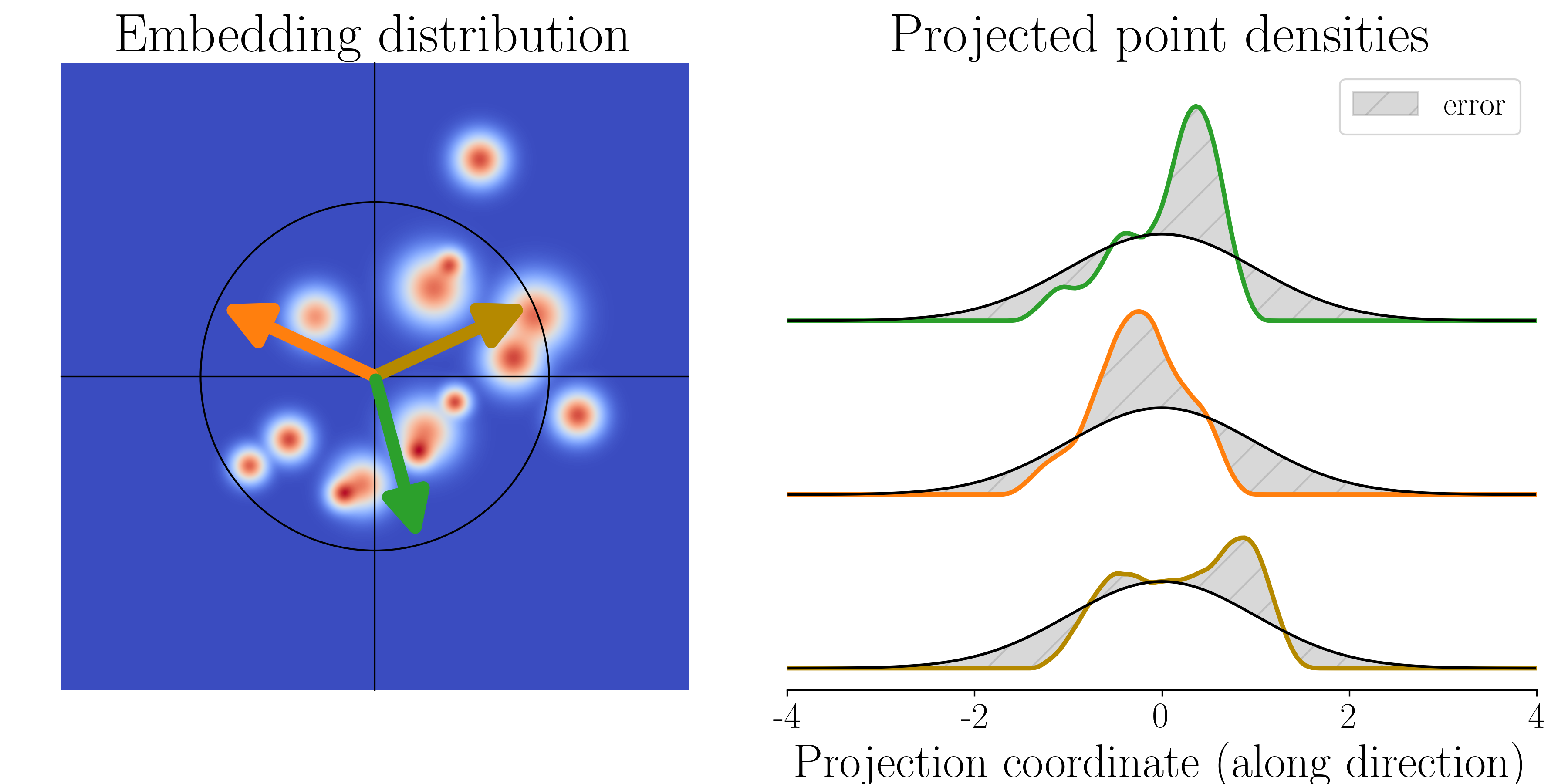}
    \end{minipage}
    \caption{ \textbf{Sketched Isotropic Gaussian Regularization (SIGReg):}~Given some arbitrary input data with density $p_{x}$ with support that may or may not lie on a manifold ({\bf left}), a Deep network (DN) encoder ($f_{\vtheta}$) produces embeddings $\vz=f_{\vtheta}(\vx)$ with some distribution $\vz \sim p_{z}$ ({\bf middle}). Our proposed Backward Cramér-Wold Statistics (\cref{sec:bcs}) objective pushes $p_z$ to match a target distribution $p_t$ by projecting the embeddings along $1d$ directions ({\bf middle, arrows}) and enforcing that the univariate densities ({\bf right, colored lines}) match the distribution of $p_t$, projected along the same directions. Any popular statistical test (provided in \cref{sec:tests}) can assess the goodness-of-fit--in practice we argue for characteristic function tests (\cref{sec:CF_better}). By using SIGReg with $p_t$ isotropic Gaussian ({\bf right, black lines}), we introduce a lean and provably optimal (\cref{sec:gaussian}) JEPA, coined LeJEPA, free of numerous heuristics and able to produce competitive performances (\cref{sec:lejepa,sec:experiments}).}
    \label{fig:bcs_teaser}
\end{figure*}

\section{Introduction}
\label{sec:intro}

Learning manipulable representations of the world and its dynamics is a long‑standing question in AI, with roots dating back centuries ago \citep{von1867handbuch,tolman1948cognitive,gregory1980perceptions,sutton1991dyna,friston2010free}. Across domains, e.g., image recognition, robotics, physics, space exploration, the unifying question is {\em how to learn an organized and actionable high‑dimensional embedding space from observations?} Using Deep Networks--parameterized nonlinear operators $f_{\vtheta}$--to map observations to embeddings is a standard first piece of that puzzle \citep{lecun2015deep,goodfellow2016deep}. The second, less standardized, piece of that puzzle is {\em how to train $f_{\vtheta}$}. Joint-Embedding Predictive Architectures (JEPAs) suggest training $f_{\vtheta}$ by maximizing predictive agreement between the embeddings of semantically related {\em views} \citep{bromley1993signature,lecun2022path,balestriero2023cookbook}. Views can come in two forms: transformations or corruptions.  They can involve masking, cropping, blurring, temporal or spatial translations, geometric or photometric transformations, viewpoint changes, views from different sensor modalities, etc. The supervised forms involve human-produced components such as image-caption pairs, text-code pairs, etc \citep{tian2020makes}. In any case, views are expected to share some degree of semantic relationship to allow the prediction task to align $f_{\vtheta}$'s embeddings towards the underlying knowledge present in the data.

Alas, JEPA's prediction task admits failure modes, such as representation collapse, where $f_{\vtheta}$ maps all inputs to 
nearly identical embeddings ({\em complete collapse}) or to a low-dimensional subspace 
({\em dimensional collapse}) \citep{jing2021understanding}\citep{jing2021understanding,cosentino2022toward,balestriero2022contrastive}. To mitigate such shortcut solutions, state‑of‑the‑art recipes rely on heuristics--stop‑gradient \citep{chen2020simple}, asymmetric view generation \citep{wang2022importance}, teacher–student networks with carefully tuned EMA schedules \citep{caron2021emerging,tian2021understanding}, explicit normalization and whitening layers \citep{ermolov2021whitening,chen2021empirical}--and a delicate balance of hyperparameters. As a result, today's JEPA training is brittle and most research has shifted toward scaling data \citep{vo2024automatic}, models \citep{fan2025scaling} and even post-training \cite{rodas2025diet} while leaving the theoretical foundations of JEPAs largely unexplored.

Our study proposes to break that cycle by questioning some of the fundamental design principles underpinning JEPAs. That introspection will start by asking {\em what are the necessary conditions that JEPAs should abide by?} Those minimal conditions will then act as {\em axioms} for us to design a novel and lean JEPA. We identify two axioms: (i) solving the prediction task while (ii) enforcing an isotropic Gaussian distribution of the embeddings (\cref{sec:gaussian}). While (i) follows standard practice \citep{balestriero2022contrastive}, we introduce  in \cref{sec:bcs} a novel  distribution matching objective--Sketched Isotropic Gaussian Regularization (SIGReg)--to enforce (ii). The use of SIGReg not only removes the need for the numerous heuristics previously employed to prevent representation collapse, but SIGReg also exhibits favorable scaling properties as its {\em memory and computational complexity is linear in dimension and sample size}. Crucially, SIGReg's isotropic Gaussian enforcement solves the collapsed shortcut solution and provably minimizes the model's expected risk over the space of downstream tasks to be encountered post-training. The resulting JEPA solution--coined Latent-Euclidean JEPA (LeJEPA)--is introduced in \cref{sec:lejepa}. Beyond theoretical optimality, LeJEPA offers numerous benefits such as (i) provable statistical guarantees, (ii) removal of heuristics such as teacher-student networks, (iii) linear memory and computational complexity, and most importantly (iv) a unified design with a single trade-off parameter that works out of the box across datasets, architectures and scales (see \cref{sec:experiments}). We summarize our contributions below.

{\bf Contribution 1: We prove the optimal embedding distribution for foundation models.}~We 
establish that the isotropic Gaussian uniquely minimizes downstream prediction risk across 
broad task families. In \cref{sec:gaussian}, we derive this result rigorously for both 
linear (\cref{sec:linear_probing}) and nonlinear probes (\cref{sec:nonlinear_probing}), 
providing the first principled answer to what distribution $f_{\vtheta}$'s embeddings 
should follow. This theoretical result transforms JEPA design from heuristic exploration 
to targeted optimization.
{\bf Contribution 2: We introduce SIGReg, a distribution matching objective that uniquely 
combines provable correctness with computational efficiency at scale.}~We present {\em 
Sketched Isotropic Gaussian Regularization} (SIGReg), a novel objective that enforces 
distributional alignment via random projections and characteristic-function matching 
(\cref{sec:bcs,fig:bcs_teaser}). SIGReg provides statistical guarantees 
(\cref{sec:general_test,sec:tests}) while achieving linear complexity and bounded 
gradients—a combination that existing distribution matching methods do not offer. 
Critically, its projection-based construction defeats the curse of dimensionality 
(\cref{sec:dimension}), making it both theoretically sound and practically efficient 
for high-dimensional embeddings.

{\bf Contribution 3: We design LeJEPA, a statistically optimal JEPA that eliminates 
collapse by construction.}~By combining JEPA's predictive objective with SIGReg targeting 
the isotropic Gaussian, we introduce {\em LeJEPA}—Latent-Euclidean JEPA 
(\cref{sec:lejepa}). LeJEPA requires only a single hyperparameter, 
eliminates representational collapse without stop-gradients or teacher-student 
architectures, and transfers across architectures and datasets without hyperparameter 
tuning. This demonstrates that principled theory directly yields practical simplicity.

{\bf Contribution 4: We validate LeJEPA at scale across diverse architectures and 
establish in-domain pretraining as viable.}~Our experiments (\cref{sec:experiments}) 
span ViTs, ConvNeXts, ResNets, MaxViTs, and Swin Transformers at scales approaching 
1 billion parameters, where LeJEPA matches or exceeds state-of-the-art methods while 
maintaining training simplicity and robustness. Critically, on domain-specific datasets 
(Galaxy10, Food101), LeJEPA outperforms DINOv2-based transfer learning when pretrained 
directly on target data. This challenges the transfer learning paradigm and demonstrates 
that principled SSL can unlock effective in-domain pretraining—previously considered 
impractical for small datasets.
\section{Background and Notations}
\label{sec:background}

We start by introducing some of the notations we will be using throughout our manuscript (\cref{sec:notations}), followed by a review of JEPAs (\cref{sec:JEPA}), and existing literature studying their design (\cref{sec:mi}).

\subsection{Notations and Definitions}
\label{sec:notations}

\begin{figure*}[t!]
% \begin{figure*}[t!]
%     \centering
    \begin{definition}{JEPA}{}
    \begin{align}
    {\rm JEPA}(\vx) \iff& {\rm Enc}\left(\vx_{n,t+1,.}\right) \text { is predictable from }{\rm Enc}\left(\vx_{n,t,.}\right), \forall n,t\text{ and } {\rm Enc}\left(\vx_{.,.,.}\right) \text{ is not degenerate}.\label{def:SSL}
\end{align}
\begin{minipage}{0.32\linewidth}
\includegraphics[width=\linewidth]{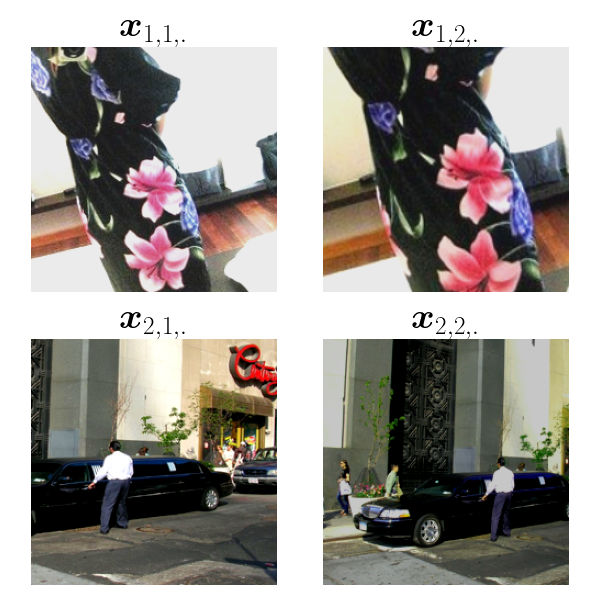}
\end{minipage}
\begin{minipage}{0.32\linewidth}
\includegraphics[width=\linewidth]{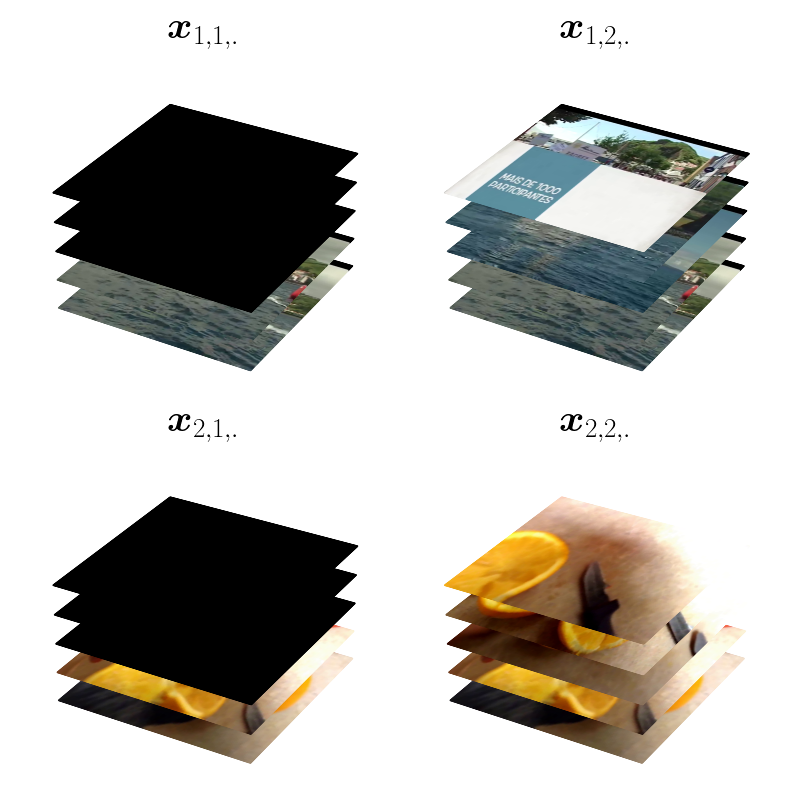}
\end{minipage}
\begin{minipage}{0.32\linewidth}
\includegraphics[width=\linewidth]{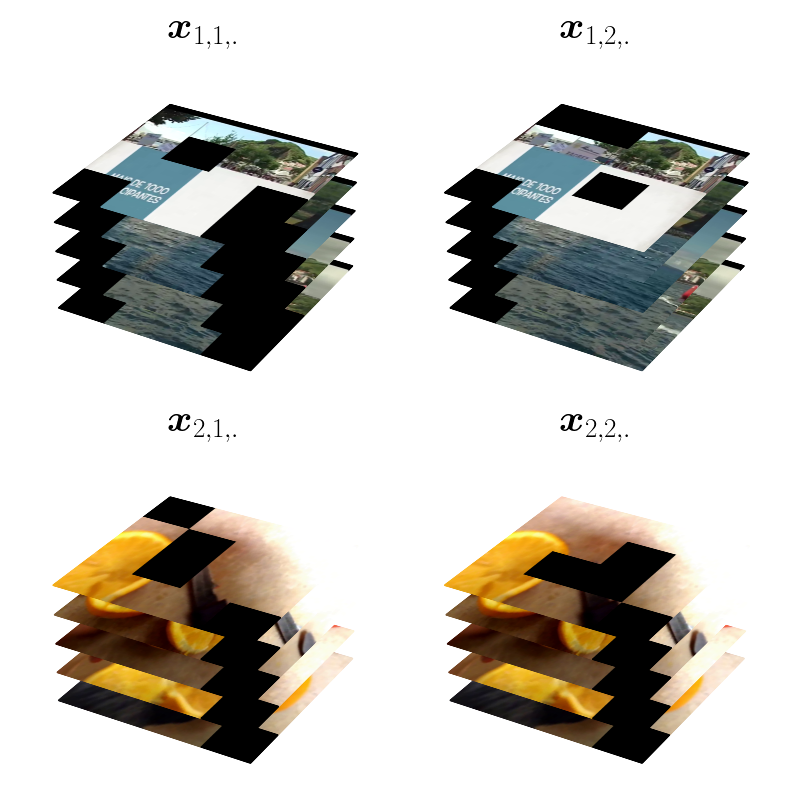}
\end{minipage}
\end{definition}
\end{figure*}

{\bf Data.}~We are in possession of a dataset of shape $(N, V, D) \in {\mathbb{N}^*}^3$ where $N$ is the number of samples, $V$ is the number of views, and $D$ is the dimension. One entry of this dataset is accessed via $\vx_{n,v,d}$. Those dimensions are often interpreted as follows: ({\bf N}) is the number of independent samples, e.g., different images or different videos, ({\bf V}) is the number of {\em views}, e.g., data-augmentations for images, frames for videos, and ({\bf D}) is the dimension of each $\vx_{n,v}$, e.g., number of RGB pixels for images. In many cases the ordering over $V$ is given by {\em time}--but in some cases, e.g., data-augmentation of an image, ordering becomes irrelevant. Our study does not require any particular choice to organize one's dataset into a $(N,V,D)$ tensor--{\em and none of our theory and implementation assumes a particular design decision for that tensor}. However, we will rely on the following two properties, ({\em independence}) the samples $\vx_{n},\vx_{n'}$ have been obtained independently from each other $\forall n\not = n'$, and ({\em identically distributed}) the sampling process was identical among $\vx_{n}, \forall n$.

{\bf Deep Networks.}~Today's AI solutions rely on {\em Deep (Neural) Networks} (DNs), which are compositions of a large number of parameterized linear and nonlinear operators.
We denote the DN's mapping as $f_\vtheta: \R^D \rightarrow \R^K$ with $K$ the dimension of the embedding space. The internals of $f_\vtheta$ are designed by the researcher to incorporate as much prior knowledge about the data as possible. The details of $f_\vtheta$ are irrelevant to our study--as we will see the proposed LeJEPA works out-of-the-box on any $f_\vtheta$. In any case, all the {\em learnable parameters} are gathered in the vector  $\vtheta \in \R^P$, with $P$ counting the total number of parameters. A central challenge in AI research is to design the right architecture and training objective so that $\vtheta$ can be learned from gradient descent to ultimately produce a useful system, or foundation model, $f_{\vtheta}$.

{\bf JEPAs.}~A foundation model is any system, e.g., a DN, able to solve numerous downstream tasks without requiring any change in its internal parameters $\vtheta$. This is in sharp contrast with a supervised model that only considers its training task. JEPAs have formally been introduced by \citet{lecun2022path} as a vehicle to produce foundation models. The core building blocks of JEPAs rely on numerous well-established techniques such as siamese networks \citep{bromley1993signature} and predictive coding \citep{helmholtz1867handbook,bruner1949perception}. While the exact blueprint of JEPAs varies greatly between use-cases, they all rely on two core principles: (i) being able to predict the embedding of a view $\vx_{n,v}$ from the embedding of another view $\vx_{n,v'}, v' \not = v$, all while (ii) ensuring that the embeddings do not become degenerate. Concretely, once a JEPA is designed and trained, it should be able to solve numerous downstream tasks in zero or few shots. 
The JEPA objective function, along with some examples for $\vx$, is provided in \cref{def:SSL}. The {\em predictability} criterion can be done by directly comparing the embeddings of the partial views $Enc(\vx_{n,v,.})$ and $Enc(\vx_{n,v',.})$ with a metric, e.g., $\ell_p$. In some cases, an additional DN coined {\em Pred}, is employed to compare $Pred(Enc(\vx_{n,v,.}))$ against $Enc(\vx_{n,v',.})$--which is only justified when there exists an asymmetry between the information content of the different views, e.g., by conditioning the predictions on observed actions from robotics data \citep{khazatsky2024droid}.

\subsection{The Need for Reliable Pretraining}
\label{sec:JEPA}

The JEPA's prediction task is designed based on a priori knowledge of the data. Its design is often quite natural since it is relatively intuitive to form $\vx$ so that its views share the relevant information content one hope to capture. On the other hand, the design of the ``anti-collapse'' criterion is much closer to a game of Whac-A-Mole. Today's designs rely on many different under-specified safeguards which are carefully combined in the hope that degenerate shortcut solutions are avoided during training. Such mechanisms include (i) feature whitening \citep{ermolov2021whitening,bardes2021vicreg}, (ii) negative samples \citep{chen2020simple,he2020momentum}, and (iii) asymmetric views and teacher-student networks with stop-gradient \citep{caron2021emerging,assran2023self}. Those mechanisms all suffer from at least two of the following limitations: (i) under-specification, i.e., the criteria can be minimized while embeddings are in a degenerate configuration, (ii) quadratic time and memory complexity with mini-batch size and/or embedding dimension, (iii) sensitivity to data distribution, hyperparameters, architecture, and (iv) lack of theoretical understanding and guarantees.

\subsection{The Need for Actionable Theory}
\label{sec:mi}

For decades, the two major solutions for AI were supervised learning \citep{lecun2015deep} and learning by reconstruction \citep{rumelhart1986learning}--sometimes combined together, e.g., for semi-supervised learning \citep{kingma2014semi}. In supervised learning, the labels both ensure that semantically similar samples are close to each other in embedding space while preventing complete representation collapse. In particular, it is possible to measure the amount of collapse in supervised learning as a function of the number of classes \citep{papyan2020prevalence}. The reconstruction objective is similarly well suited to prevent representation collapse as the original input must be recovered from the embeddings, i.e., the embeddings must be as informative about the input as possible--up to some optional denoising tasks that users can setup as part of the training \citep{vincent2010stacked}.

Because supervised and reconstruction-based learning have been widely studied for decades, there exists a large body of work to explain and inform practical designs--as well as studying their limitations in producing foundation models \citep{balestriero2024learning,van2025joint}. This is not the case for the more recent JEPAs where empirical advances quickly outpace anyone hoping to delve into their inner workings. This dynamic led the community to focus on post-hoc theoretical justification of already found solutions \citep{liu2021self,shwartz2024compress,shwartz2022we,zhang2023matrix}. In most cases, those studies involve the {\em Mutual Information (MI)} \citep{shannon1948mathematical,cover1999elements} whose different bounds recover established methods \citep{gutmann2010noise,ma2018noise,oord2018representation,poole2019variational,hjelm2018learning,mcallester2020formal}. Because existing studies focus on explaining and interpreting already developed JEPAs, too little principled guidance and innovation has been brought forward. Instead, most of the recent empirical advances take the form of collecting larger dataset, scaling up pre-existing training recipes \citep{goyal2019scaling,chen2020big,oquab2023dinov2,fan2025scaling}, and deriving novel data curation processes \citep{vo2024automatic,kerdreux2025efficient}. 

In contrast, our goal in the following \cref{sec:gaussian,sec:bcs,sec:lejepa} will be to derive a novel JEPA solution from first principles, i.e., whose design relies on proved necessary conditions for optimality, and with a pretraining recipe that can finally reconcile exploratory research, scalability, and state-of-the-art performances.

\section{Latent Euclidean: Embeddings Should be Isotropic Gaussian}
\label{sec:gaussian}

We address a fundamental question: {\em which distribution should ${\rm Enc}(\vx)$ 
follow to minimize empirical risk on any downstream task?} We prove that the 
isotropic Gaussian is the unique optimal distribution for both linear 
(\cref{sec:linear_probing}) and nonlinear probing (\cref{sec:nonlinear_probing}), 
with geometric intuition provided in \cref{sec:le}. This theoretical result 
establishes the necessary design principle for our JEPA; \cref{sec:bcs} 
then provides the practical implementation to achieve it.

\subsection{Linear Probing}
\label{sec:linear_probing}

We begin by identifying the optimal distribution for $f_\vtheta$'s embeddings 
by analyzing linear probes--one of the most popular methods for frozen 
encoder evaluation. Specifically, we ask: \emph{which distribution for 
$f_\vtheta(\vx)$ would be most favorable for solving arbitrary downstream tasks, 
i.e., for any realization of targets $\vy$?}

Denote as $\mZ \in \mathbb{R}^{N \times K}$ the matrix of $N$ embeddings, each $K$-dimensional, from $f_{\vtheta}(\vx_n)$. The {\em unknown} corresponding labels are denoted as $\vy \in \mathbb{R}^{N}$. Without loss of generality, we consider univariate targets; the 
following analysis extends to multivariate targets. The linear probe minimizes the following least square problem \citep{bishop2006pattern}
\begin{equation}
\hat{\beta} = \underset{\beta \in \mathbb{R}^K}{\arg\min} \|\vy - \mZ\beta\|_2^2+\lambda \|\beta\|_2^2,\tag{OLS}\label{eq:OLS}
\end{equation}
where $\hat{\beta}$ is the optimal probe parameters, and $\lambda \geq 0$ is an hyperparameter controlling the Tikhonov regularizer strength \citep{bishop1995training,golub1999tikhonov}. Despite not knowing $\vy$, it is possible to describe the bias and variance of the estimator $\hat{\beta}$ as a function of the distribution of $\mZ$. Consider two embeddings with identical column spans $\mZ_{\rm aniso}, \mZ_{\rm iso}$. $\mZ_{\rm aniso}$'s covariance matrix eigenvalues are given by $\{\lambda_k\}_{k=1}^K$ with at least two distinct values, while $\mZ_{\rm iso}$'s covariance matrix eigenvalues are all equal to $\frac{1}{K}\sum_{k=1}^{K}\lambda_k$. Hence, the two candidate embeddings $\mZ_{\rm aniso}, \mZ_{\rm iso}$ capture the same intrinsic features and have same energy, but different geometries.

\begin{lemma}[label={thm:linear_probe_bias}]{Anisotropy amplifies bias}{}
Whenever $\lambda_K>\lambda_1$, there always exists a downstream task ($\vy$) for which $\mZ_{\rm aniso}$ produces a higher bias estimator than $\mZ_{\rm iso}$ for $\lambda>0$. (Proof in \cref{proof:linear_probe_bias}.)
\end{lemma}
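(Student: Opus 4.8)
The plan is to write the ridge probe in closed form, read off its bias as a function of the Gram spectrum of $\mZ$, and then hand-pick a target $\vy$ that is adversarial for $\mZ_{\rm aniso}$ but neutral for $\mZ_{\rm iso}$. First, recall that the minimizer of~\eqref{eq:OLS} is $\hat\beta=(\mZ^{\top}\mZ+\lambda I)^{-1}\mZ^{\top}\vy$, so under the linear model $\vy=\mZ\beta^{\star}+\epsilon$ with $\EE[\epsilon]=0$ (equivalently, reading ``bias'' as the deterministic shrinkage of the regularized fit relative to the unregularized one) the bias is
\[
\EE[\hat\beta]-\beta^{\star}=\bigl[(\mZ^{\top}\mZ+\lambda I)^{-1}\mZ^{\top}\mZ-I\bigr]\beta^{\star}=-\lambda(\mZ^{\top}\mZ+\lambda I)^{-1}\beta^{\star},
\]
whose squared norm equals $\lambda^{2}\sum_{k}\langle w_{k},\beta^{\star}\rangle^{2}/(\rho_{k}+\lambda)^{2}$ for $(\rho_{k},w_{k})$ the eigenpairs of $\mZ^{\top}\mZ$. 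Because $\mZ_{\rm aniso}$ and $\mZ_{\rm iso}$ share a $K$-dimensional column span both are full column rank; assuming the embeddings are centered, $\mZ_{\rm aniso}^{\top}\mZ_{\rm aniso}$ has eigenvalues $\rho_{k}=N\lambda_{k}>0$ while $\mZ_{\rm iso}^{\top}\mZ_{\rm iso}=N\bar\lambda\,I$ with $\bar\lambda=\tfrac1K\sum_{k}\lambda_{k}$.

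Next I would make ``the same downstream task'' precise: the vector $\vy$ is shared, but the map $\vy\mapsto\beta^{\star}=\mZ^{+}\vy$ depends on the embedding, so I would substitute $\beta^{\star}=\mZ^{+}\vy$ into the bias and expand via the thin SVDs $\mZ_{\rm aniso}=P_{a}S_{a}Q_{a}^{\top}$, $\mZ_{\rm iso}=P_{i}S_{i}Q_{i}^{\top}$, whose left factors $P_{a},P_{i}$ are orthonormal bases of the common column span. A short computation then gives squared bias $\lambda^{2}\sum_{k}(P_{a}^{\top}\vy)_{k}^{2}\big/\bigl(N\lambda_{k}(N\lambda_{k}+\lambda)^{2}\bigr)$ for the anisotropic embedding and $\lambda^{2}\|\vy_{\parallel}\|^{2}\big/\bigl(N\bar\lambda(N\bar\lambda+\lambda)^{2}\bigr)$ for the isotropic one, with $\vy_{\parallel}$ the projection of $\vy$ onto the common column span. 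The crucial choice is to take $\vy$ to be a unit-norm eigenvector of $\mZ_{\rm aniso}\mZ_{\rm aniso}^{\top}$ for its smallest nonzero eigenvalue $N\lambda_{1}$ (the least-excited direction of $\mZ_{\rm aniso}$, where ridge shrinks hardest): then $P_{a}^{\top}\vy=e_{1}$ and $\|\vy_{\parallel}\|=1$, so the two squared biases reduce to $\lambda^{2}\big/\bigl(N\lambda_{1}(N\lambda_{1}+\lambda)^{2}\bigr)$ and $\lambda^{2}\big/\bigl(N\bar\lambda(N\bar\lambda+\lambda)^{2}\bigr)$ respectively. If one prefers to compare prediction bias $\|\mZ\hat\beta-\mZ\beta^{\star}\|$, the push-through identity $\mZ(\mZ^{\top}\mZ+\lambda I)^{-1}\mZ^{\top}=G(G+\lambda I)^{-1}$ with $G=\mZ\mZ^{\top}$ yields the same comparison with the factor $N\lambda_{k}$ removed.

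To conclude: whenever the covariance spectrum of $\mZ_{\rm aniso}$ is not flat, i.e. $\lambda_{1}<\lambda_{K}$, its smallest eigenvalue satisfies $\lambda_{1}<\bar\lambda$; and since $t\mapsto Nt(Nt+\lambda)^{2}$ is strictly increasing on $(0,\infty)$ for $\lambda>0$, we get $N\lambda_{1}(N\lambda_{1}+\lambda)^{2}<N\bar\lambda(N\bar\lambda+\lambda)^{2}$, so $\mZ_{\rm aniso}$ produces the strictly larger bias on this task; at $\lambda=0$ both biases vanish, which is why $\lambda>0$ is required. The step I expect to be the main obstacle is the middle one: making ``the same task'' meaningful across two embeddings with different internal geometry forces the bias to be parametrized by $\vy$ rather than $\beta^{\star}$ and then aimed along the weakest eigendirection of $\mZ_{\rm aniso}$; the centering/full-rank remarks that guarantee $\lambda_{1}>0$ are routine bookkeeping but should be stated explicitly.
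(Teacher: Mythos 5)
Your proposal is correct and follows essentially the same route as the paper: write the ridge bias in closed form as $-\lambda(\mZ^{\top}\mZ+\lambda I)^{-1}\beta^{\star}$, aim the task along the least-excited eigendirection of $\mZ_{\rm aniso}$, and conclude from $\lambda_{\min}<\bar\lambda$ together with monotonicity of the resulting denominator. The only difference is that you parametrize the task by $\vy$ and pull back through $\mZ^{+}$ (introducing the extra $1/(N\lambda_k)$ factors), whereas the paper parametrizes directly by $\beta_{\rm true}$ and compares $\tfrac{\lambda}{\lambda_{\min}+\lambda}\|\beta_{\rm true}\|$ against $\tfrac{\lambda}{\bar\lambda+\lambda}\|\beta_{\rm true}\|$; your reading is the more careful one about what ``the same downstream task'' means across two embeddings, but it does not change the structure or the conclusion of the argument.
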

\begin{lemma}[label={thm:linear_probe_variance}]{Anisotropy amplifies variance}{}
With $\lambda=0$, the total variance of $\hat{\beta}$ \eqref{eq:OLS} is minimized for $\mZ_{\rm iso}$ with
$\text{tr}(\text{Var}(\hat{\boldsymbol{\beta}}_{\text{aniso}})) > \text{tr}(\text{Var}(\hat{\boldsymbol{\beta}}_{\text{iso}}))$. (Proof in \cref{proof:linear_probe_variance}.)
\end{lemma}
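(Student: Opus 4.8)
\emph{Proof plan.} The plan is to reduce the claim to the elementary fact that, among positive reals with a prescribed sum, the sum of reciprocals is uniquely minimized at the uniform configuration. First I would hold the frozen embeddings $\mZ$ fixed and adopt the standard linear-probe noise model: the unknown target is $\vy = \mZ\beta^\star + \vepsilon$ with $\EE[\vepsilon]=\vzero$ and $\Cov(\vepsilon)=\sigma^2\mI_N$. With $\lambda=0$ the problem \eqref{eq:OLS} has the closed form $\hat{\beta} = (\mZ^\top\mZ)^{-1}\mZ^\top\vy$, which is well defined because $\mZ_{\rm aniso}$ and $\mZ_{\rm iso}$ share the same column span and that span has full rank $K$ (otherwise the covariance carries a zero eigenvalue and the variance is infinite). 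Then $\hat{\beta}-\beta^\star = (\mZ^\top\mZ)^{-1}\mZ^\top\vepsilon$, so $\Cov(\hat{\beta}) = \sigma^2 (\mZ^\top\mZ)^{-1}$, independently of the realized $\beta^\star$ and hence of the downstream task.

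Next I would take the trace and diagonalize. Writing $\mu_1\le\dots\le\mu_K$ for the eigenvalues of $\mZ^\top\mZ$ — which, under the usual centering/intercept convention, equal $N\lambda_k$ with $\lambda_k$ the covariance eigenvalues of the corresponding embedding — one gets $\tr(\Var(\hat{\beta})) = \sigma^2\sum_{k=1}^K \mu_k^{-1} = \tfrac{\sigma^2}{N}\sum_{k=1}^K \lambda_k^{-1}$. The hypotheses that the two candidates ``capture the same intrinsic features and have same energy'' give $\sum_k \lambda_k^{\rm aniso} = \sum_k \lambda_k^{\rm iso} =: S$, while by construction $\lambda_k^{\rm iso}=S/K$ for every $k$.

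The core step is then a strict-convexity argument. Since $t\mapsto 1/t$ is strictly convex on $(0,\infty)$, Jensen's inequality (equivalently AM--HM) yields $\tfrac{1}{K}\sum_k \lambda_k^{-1} \ge \big(\tfrac{1}{K}\sum_k \lambda_k\big)^{-1} = K/S$, with equality if and only if all $\lambda_k$ coincide. Because $\mZ_{\rm aniso}$ has at least two distinct eigenvalues by assumption, the inequality is strict for it whereas it is an equality for $\mZ_{\rm iso}$; hence $\tr(\Var(\hat{\beta}_{\rm aniso})) = \tfrac{\sigma^2}{N}\sum_k (\lambda_k^{\rm aniso})^{-1} > \tfrac{\sigma^2 K^2}{NS} = \tr(\Var(\hat{\beta}_{\rm iso}))$, and in fact the isotropic profile $\lambda_k\equiv S/K$ is the unique minimizer over all embeddings with the given column span and energy.

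The main obstacle I anticipate is not the inequality but pinning down the variance model so the statement is unambiguous: one must be explicit that $\mZ$ is held fixed while $\vy$ carries homoscedastic noise, and must carefully track the centering/intercept convention relating $\mZ^\top\mZ$ to the covariance matrix so that the ``same energy'' hypothesis genuinely fixes $\sum_k\mu_k$ for both candidates. Once this bookkeeping is in place, the conclusion is the one-line AM--HM step.
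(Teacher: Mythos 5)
Your proposal is correct and follows essentially the same route as the paper's proof: compute $\Cov(\hat{\beta})=\sigma^2(\mZ^\top\mZ)^{-1}$ under the homoscedastic noise model, express the trace as $\sigma^2\sum_k \lambda_k^{-1}$ via eigendecomposition, and apply strict convexity of $t\mapsto 1/t$ (Jensen/AM--HM) with the equal-total-variance constraint to get the strict inequality. Your extra bookkeeping about the $N$-scaling and centering convention is a welcome clarification but does not change the argument.
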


From the above \cref{thm:linear_probe_variance,thm:linear_probe_bias} we obtain that the distribution of features must be isotropic. We now move to nonlinear probing where the standard Gaussian will emerge as the unique optimum.

\subsection{Nonlinear Probing}
\label{sec:nonlinear_probing}

To allow for more flexible evaluation of the pretrained encoder $f_{\vtheta}$, it has become increasingly common to work with a nonlinear probe. We analyze two widely-used nonlinear methods: radius-based k-NN \citep{taunk2019brief,sun2010adaptive,zhang2017efficient,abu2019effects} for 
its simplicity and kernel methods \citep{nadaraya1964estimating,watson1964smooth} for their theoretical tractability. 

As in \cref{sec:linear_probing}, we ask ourselves which distribution of embeddings would be preferable for a foundation model. We first define our prediction function. The training data consists of the $N$ embeddings along with their training labels $\{(\vz_n,\vy_{n})\}_{n=1}^{N}$. The prediction, using radius-based k-NN for a query vector $\vq$ is formed as
\begin{align}
\widehat{\vy}(\vq) := \frac{1}{|\mathcal{N}_{r_0}(\vq)|}\sum_{n \in \mathcal{N}_{r_0}(\vq)}\vy_n,
\tag{kNN}\label{eq:kNN}
\end{align}
where $\mathcal{N}_{r_0}(\vq) = \{n : \|\vz_n - \vq\| \le r_0\}$. The specific choice of radius $r_0$ controls how many neighbors predictions are averaged to form the query's prediction. The kernel's prediction at a query $\vq\in\mathbb{R}^K$ is given by
\begin{align}
\widehat \vy(\vq)\triangleq \frac{\sum_{n=1}^N K_h(\vq-\vz_n)\vy_n}{\sum_{n=1}^N K_h(\vq-\vz_n)}.\tag{Kernel}\label{eq:NW}
\end{align}

We search over all distributions of Z subject to a fixed total variance 
constraint, e.g., $\Tr(\Cov(\mZ)) = \kappa_1$ or $\|\Cov(\mZ)\|_F=\kappa_2$. The specific value 
of $\kappa$ does not affect the optimal distribution shape.
Following the same type of derivations as done in the linear regime--with the exception of some additional regularity conditions--we are able to precisely identify the isotropic Gaussian as the unique optimum to minimize bias as formalized below.

\begin{theorem}[label={thm:nonlinear_optimal}]{isotropic Gaussian Optimality}{}
The integrated square bias (ISB) over query points is given by
\begin{align*}
\text{ISB}_{k\text{-NN}} =\frac{r_0^4}{(K+2)^2}\tau_g^2J(p)+O(r_0^4),&&\text{(k-NN)}\\
\text{ISB}_{\text{kernel}} \le \Big(\frac{h^2\mu_2(K)}{2}\Big)^2 \Big(2 B^2 + 8 L^2J(p)\Big)+o(h^4),&&\text{(kernel)}
\end{align*}
and among distributions with a scalar-based covariance constraint, the isotropic Gaussian  is the unique minimizer of the integrated square bias. (Proof in \cref{proof:knn_optimal,proof:kernel_optimal}.)
\end{theorem}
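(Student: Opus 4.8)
The plan is to separate the result into three pieces: (i) a local bias expansion for the radius k-NN estimator, (ii) the analogous (and essentially classical) expansion for the Nadaraya--Watson estimator, and (iii) a variational argument showing that the $p$-dependent factor common to both, the functional $J(p)$, is the Fisher information, which is minimized over the covariance-constrained class uniquely by the isotropic Gaussian. I would set up the standard nonparametric model $\vy_n = m(\vz_n)+\varepsilon_n$ with $m$ twice continuously differentiable and $\varepsilon_n$ mean-zero; for either estimator the bias at a query $\vq$ is $\mathbb{E}[\widehat{\vy}(\vq)]-m(\vq)$, and the ISB is its square integrated against the query law. The whole point is that both biases depend on the embedding density $p$ only through $J(p)=\int\norm{\nabla\log p(\vz)}^2 p(\vz)\,d\vz$ and are strictly increasing in it, so the optimization over $p$ reduces to minimizing $J$.

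For the k-NN expansion I would condition on $\{\vz_n\in\mathcal{B}_{r_0}(\vq)\}$, Taylor-expand $m$ to second order and $p$ to first order around $\vq$, so that the conditional law of $\vz_n-\vq$ inside the ball is the uniform law on $\mathcal{B}_{r_0}(0)$ tilted by the factor $1+\nabla\log p(\vq)^\top(\vz_n-\vq)$ to leading order. Using the elementary ball moments $\int_{\mathcal{B}_{r_0}}\vu\vu^\top d\vu\big/\!\int_{\mathcal{B}_{r_0}}d\vu=\tfrac{r_0^2}{K+2}I$, the conditional mean offset is $\tfrac{r_0^2}{K+2}\nabla\log p(\vq)+o(r_0^2)$, giving bias $=\tfrac{r_0^2}{K+2}\big(\nabla m(\vq)^\top\nabla\log p(\vq)+\tfrac12\Delta m(\vq)\big)+o(r_0^2)$. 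Squaring, integrating over queries, and absorbing the $m$-dependent magnitudes into the constant $\tau_g$ produces $\mathrm{ISB}_{k\text{-NN}}=\tfrac{r_0^4}{(K+2)^2}\tau_g^2 J(p)$ plus a remainder of higher order, which is the stated form.

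For the kernel estimator I would run the textbook Nadaraya--Watson computation: substitute $\vu=(\vq-\vz)/h$, Taylor-expand numerator and denominator, and use $\int K=1$, $\int \vu\,K(\vu)\,d\vu=0$, $\int\vu\vu^\top K(\vu)\,d\vu=\mu_2(K)I$ to get bias $=\tfrac{h^2\mu_2(K)}{2}\big(\Delta m(\vq)+2\nabla m(\vq)^\top\nabla\log p(\vq)\big)+o(h^2)$; applying $(a+b)^2\le 2a^2+2b^2$, integrating, and bounding $\Delta m$ by $B$ and $\norm{\nabla m}$ by $L$ yields the claimed inequality with the same $J(p)$. Then for the optimization I would invoke the multivariate Fisher-information (Cram\'er--Rao-type) inequality $J(p)\ge\Tr(\Cov(\mZ)^{-1})$, with equality iff $p$ is Gaussian with that covariance, and then minimize the right-hand side over admissible covariances: under $\Tr(\Cov(\mZ))=\kappa_1$ the AM--HM inequality gives $\Tr(\Sigma^{-1})\ge K^2/\kappa_1$ with equality iff all eigenvalues coincide, i.e.\ $\Sigma=(\kappa_1/K)I$, and under $\norm{\Cov(\mZ)}_F=\kappa_2$ a Lagrange/Cauchy--Schwarz argument again forces equal eigenvalues; the joint equality case in both constraints is exactly the isotropic Gaussian, hence the unique minimizer.

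I expect the k-NN bias expansion and its remainder control to be the main obstacle: making the ball-average/density-tilt argument rigorous needs uniform smoothness of $m$ and $p$ and care in regions where $p$ is small (sparse or empty neighbor sets, where $\nabla\log p$ is large and the $o(r_0^2)$ ceases to be uniform) -- this is presumably where the ``additional regularity conditions'' enter, e.g.\ $p$ bounded below on a compact support or a truncation of the query domain. A secondary technical point is the equality characterization in the multivariate Fisher inequality, which relies on the integration-by-parts identity $\int\nabla\log p\cdot(\vz-\mu)\,dP=K$ and hence on standard decay hypotheses on $p$; the kernel part, by contrast, is routine.
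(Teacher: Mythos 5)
Your proposal matches the paper's proof essentially step for step: the same ball-moment Taylor expansion yielding the pointwise k-NN bias $\tfrac{r_0^2}{K+2}\bigl(\nabla m^\top\nabla\log p+\tfrac12\Delta m\bigr)$, the same textbook Nadaraya--Watson expansion with the $(a+b)^2\le 2a^2+2b^2$ bound, and the same two-step optimization via the Cramér--Rao inequality $J(p)\ge\Tr(\Sigma^{-1})$ (equality iff Gaussian) followed by eigenvalue equalization under the scalar covariance constraint. The only cosmetic difference is that the paper makes the "absorb $m$ into $\tau_g$" step explicit through an isotropic mean-zero prior on $\nabla\eta$ and tracks the cross and curvature terms separately as $p$-independent $O(r_0^4)$ contributions, and it also records the regularity hypotheses ($p\in C^3$ with bounded derivatives, finite Fisher information) that you correctly anticipated as the delicate point.
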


Numerous additional details and discussions on the regularity assumptions we employed are provided in \cref{sec:additional_nonlinear}.
Together, these results establish the isotropic Gaussian distribution as the optimal design to minimize the worst-case risk of a foundation model across downstream tasks.

\begin{figure}[t!]
    \centering
    \includegraphics[width=\linewidth]{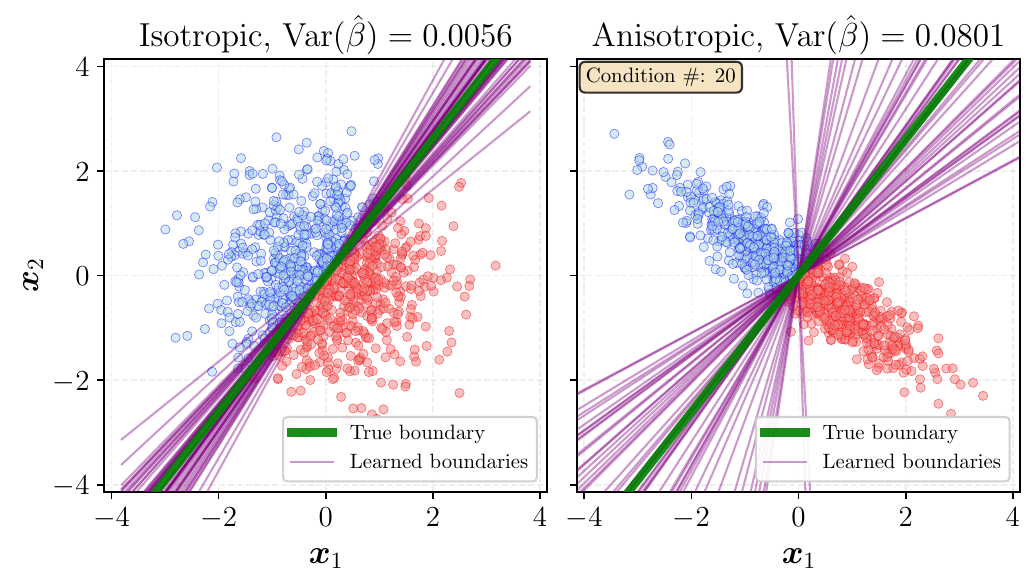}
    \caption{Illustration of \cref{thm:linear_probe_variance} showcasing how anisotropic ({\bf right}) embeddings lead to higher variance estimator compared to isotropic embeddings ({\bf left}). We sample $100$ training points for the $2$-class classification task and fit a logistic regression--repeating the process over numerous training set sample. Each sampling results in a decision boundary ({\bf purple}).}
    \label{fig:boundaries}
\end{figure}

\subsection{Geometric and Practical Insights}
\label{sec:le}

We now empirically validate that the isotropic Gaussian is optimal when 
no information about downstream tasks is available. We focus on linear 
probing (\cref{sec:linear_probing}), where all considered distributions 
have the same total variance.

When employing a linear probe, an anisotropic distribution increases 
both bias (with Tikhonov regularization) and variance. Examining bias 
first (\cref{thm:linear_probe_bias}), we present in \cref{fig:ols_bias} 
visualizations for both continuous regression and discrete classification 
tasks. We observe that the cosine similarity between estimated and 
ground-truth parameters equals 1 only for isotropic distributions, 
degrading for anisotropic cases regardless of sample size or 
regularization strength.
Regarding variance (\cref{thm:linear_probe_variance}), we show in 
\cref{fig:boundaries} that learned parameters vary significantly more 
across training sets when the covariance is anisotropic (right) compared 
to isotropic (left)—even when using logistic regression instead of OLS. 
\cref{fig:beta_distributions} further illustrates this effect, showing 
the distribution of learned $\beta$ parameters across different training 
samples for both cases. The anisotropic distribution clearly produces 
higher-variance estimators.

These theoretical and empirical results establish our design principle 
for LeJEPA: \emph{embeddings $f_{\vtheta}(\vx)$ should follow an isotropic Gaussian 
distribution to minimize worst-case risk across downstream tasks 
encountered post-training}. \Cref{sec:bcs} introduces a novel 
regularizer to achieve this distribution.

\section{SIGReg: Reliable Isotropic Gaussian Regularization in High-Dimension}
\label{sec:bcs}
Having established the isotropic Gaussian as the optimal embedding distribution 
(\cref{sec:gaussian}), we now introduce {\em Sketched Isotropic Gaussian 
Regularization} (SIGReg)--a distribution matching objective that is simultaneously 
(i) {\em differentiable}, (ii) {\em scalable}, (iii) {\em provable}, and 
(iv) {\em interpretable}. SIGReg builds on three key innovations. First, we 
formulate distribution matching as a statistical test under the null hypothesis 
$P_{\vtheta}=Q$ (\cref{sec:general_test}). Second, we identify a test that 
guarantees bounded gradients and curvature while maintaining linear complexity 
and efficient multi-GPU scaling (\cref{sec:CF_better}). Third, SIGReg bypasses 
the curse of dimensionality, eliminating collapsed shortcut solutions entirely 
(\cref{sec:dimension}).

\subsection{Hypothesis Testing as a Judge}
\label{sec:general_test}

Asking for $f_{\vtheta}(\vx)$'s distribution $P_{\vtheta}$ to match a target distribution $Q$ is typically done by creating various measures of distance or divergence, and estimating them in high-dimension. We propose a different starting point grounded in statistics. 
Consider the hypothesis testing framework \citep{fisher1928statistical,neyman1933ix} given by
\begin{align}
H_0: P_{\vtheta}=Q \quad \text{vs.} \quad H_1: P_{\vtheta}\neq Q,    \label{eq:null}
\end{align}
with $H_0$ being referred to as the {\em null hypothesis}. That is, we are asking in \cref{eq:null} if there is enough empirical evidence to reject the null. To answer that question, one (i) employs a {\em test-statistic}, i.e., a single scalar value summarizing the evidence from the empirical samples, (ii) determines a critical value $\tau_{\alpha}$ for the test-statistic based on the probability $\alpha$ of Type I error, i.e., of mistakenly rejecting a true null hypothesis, (iii) compares the test-statistic to the critical value $\tau_{\alpha}$; if the test-statistic exceeds $\tau_{\alpha}$, reject the null hypothesis. If the null is not rejected, we can only claim that {\em there is not sufficient empirical evidence against $P_{\vtheta}=Q$}.

As it stands, \cref{eq:null} remains impractical in large dimension as existing tests have at least quadratic complexity with the number of samples considered (more details in \cref{sec:multivariate_tests}). We thus propose to derive a sketching strategy by decomposing \cref{eq:null} into simpler univariate tests.
Denoting the push-forward distributions $P_\vtheta^{(\va)}\triangleq (\va^\top)_\# P_\vtheta$ and $Q^{(\va)}\triangleq (\va^\top)_\# Q$, we can define the following {\em directional} univariate test
\begin{align}
    H_0(\va):  P_\vtheta^{(\va)} = Q^{(\va)}\;\;\text{vs.} \;\; H_1(\va): P_\vtheta^{(\va)} \not = Q^{(\va)},   \label{eq:HUV}
\end{align}
for a given directional unit-norm vector $\va \in \mathcal{S}^{K-1}$. The corresponding \emph{directional test-statistic} of \cref{eq:HUV} is computed as $T(\{\va^\top f_{\vtheta}(\vx_n)\}_{n=1}^N)$. Examples of tests $T$ will be provided in the later \cref{sec:tests}. Repeating that process over a set of $M$ directions $\sA\triangleq \{ \va_1,\dots,\va_M\}$ and aggregating the individual values lead to the following \emph{global test-statistic}
\begin{align}
T_{\sA}(\{f_{\vtheta}(\vx_n)\}_{n=1}^N)\triangleq \max_{\va \in \sA} T(\{\va^\top f_{\vtheta}(\vx_n)\}_{n=1}^N).\label{eq:T_max}
\end{align}
We now provide a formal statement asserting the consistency of \cref{eq:T_max} to test the original multivariate null hypothesis from \cref{eq:null}. Our result leverages the well-known union-intersection principle \citep{roy1953heuristic}, and a slightly modified Cramér-Wold theorem. We denote by $\stackrel{d}{=}$ equality in distribution.

\begin{lemma}[label={thm:spherical_cramer}]{Hyperspherical Cramér-Wold}{}
Let $X,Y$ be $\mathbb{R}^d$-valued random vectors, then
\[
\langle \vu,X\rangle \stackrel{d}{=} \langle \vu,Y\rangle, \forall \vu \in \mathbb{S}^{d-1}\iff X \stackrel{d}{=} Y.
\]
Convergence in distribution also holds. (Proof in \cref{proof:spherical_cramer}.)
\end{lemma}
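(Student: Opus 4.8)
The plan is to reduce the statement to the classical Cram\'er--Wold theorem by exploiting the positive homogeneity of $t\mapsto\langle t,X\rangle$: restricting the test directions to the unit sphere loses no information precisely because of this homogeneity. The implication $X\stackrel{d}{=}Y\Rightarrow\langle u,X\rangle\stackrel{d}{=}\langle u,Y\rangle$ for every $u$ is immediate, since each $u^{\top}$ is a fixed measurable (indeed linear) map and pushforwards preserve equality in law. So the only real work is the forward direction.

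For the forward direction I would argue through characteristic functions. Fix $t\in\mathbb{R}^{d}$. If $t=0$, then $\langle t,X\rangle=0=\langle t,Y\rangle$ almost surely. If $t\neq 0$, set $u=t/\|t\|\in\mathbb{S}^{d-1}$, so that $\langle t,X\rangle=\|t\|\,\langle u,X\rangle$. By hypothesis the real random variables $\langle u,X\rangle$ and $\langle u,Y\rangle$ have the same law, hence the same one-dimensional characteristic function $s\mapsto\EE[e^{is\langle u,X\rangle}]$ on all of $\mathbb{R}$; evaluating at $s=\|t\|$ gives $\EE[e^{i\langle t,X\rangle}]=\EE[e^{i\langle t,Y\rangle}]$. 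Since $t$ was arbitrary, the $d$-dimensional characteristic functions of $X$ and $Y$ agree on $\mathbb{R}^{d}$, and L\'evy's uniqueness theorem for characteristic functions yields $X\stackrel{d}{=}Y$.

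For the convergence-in-distribution addendum, suppose $\langle u,X_{n}\rangle\RightarrowDist\langle u,X\rangle$ for every $u\in\mathbb{S}^{d-1}$, with $X$ a genuine random vector. The same homogeneity together with the continuous mapping theorem (multiplication by the constant $\|t\|$) upgrades this to $\langle t,X_{n}\rangle\RightarrowDist\langle t,X\rangle$ for every $t\in\mathbb{R}^{d}$ (the case $t=0$ being trivial), and the classical Cram\'er--Wold device then gives $X_{n}\RightarrowDist X$.

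The only point demanding care---and the closest thing to an obstacle---is being explicit that projecting onto unit vectors rather than all of $\mathbb{R}^{d}$ is costless, which is exactly the homogeneity observation above, and invoking the correct uniqueness/continuity theorems for characteristic functions rather than re-deriving them. There is no genuine analytic difficulty: the lemma is a bookkeeping variant of Cram\'er--Wold adapted to the unit-norm sketching directions that SIGReg uses.
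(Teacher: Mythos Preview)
Your proposal is correct and follows essentially the same route as the paper's proof: both argue via the polar decomposition $t=\|t\|\cdot(t/\|t\|)$ to show that equality of laws on spherical directions forces the full characteristic functions to agree (hence $X\stackrel{d}{=}Y$ by uniqueness), and both handle the convergence addendum by upgrading spherical projections to all of $\mathbb{R}^d$ via the continuous mapping theorem and then invoking L\'evy's continuity theorem (equivalently, the classical Cram\'er--Wold device).
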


\begin{theorem}[label={thm:bcs}]{Sufficiency of directional tests}{}
\Cref{eq:T_max} is a valid statistical test for \cref{eq:HUV} as
\begin{align*}
P=Q&\implies\limsup_{n\to\infty}\Pr\left(T_{\sA}(\{f_{\vtheta}(\vx_n)\}_{n=1}^N) \ge \tau_\alpha\right)\le \alpha,\text{\bf (level)}
\\
P\neq Q& \implies\limsup_{n\to\infty}\Pr\left(T_{\sA}(\{f_{\vtheta}(\vx_n)\}_{n=1}^N) \ge \tau_\alpha\right)=1,\text{\bf (power)}
\end{align*}
(Proof in \cref{proof:bcs}.)
\end{theorem}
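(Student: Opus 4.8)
The plan is to read \cref{eq:T_max} as a union--intersection test in the sense of Roy: the global null $P=Q$ is the intersection $\bigcap_{\va\in\mathbb{S}^{K-1}}H_0(\va)$ of the directional nulls \cref{eq:HUV}, and we reject as soon as the worst directional statistic is large. I would prove the \textbf{(level)} and \textbf{(power)} halves separately; in both, \cref{thm:spherical_cramer} is the bridge linking ``all directional nulls hold'' to ``$P=Q$'', and the real content is pushed onto the behaviour of the univariate base test $T$ along individual directions (properties deferred to \cref{sec:tests,sec:CF_better}).

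For \textbf{(level)}, assume $P=Q$. Because $Q$ is the isotropic target, $P_\vtheta^{(\va)}=Q^{(\va)}$ for every $\va\in\mathbb{S}^{K-1}$, and these univariate null laws coincide across directions, so each directional statistic $T(\{\va^\top f_\vtheta(\vx_n)\}_{n=1}^N)$ is evaluated under its own null $H_0(\va)$. Granting that the base test is asymptotically calibrated, one may fix $\tau_\alpha$ either by Bonferroni — calibrate each directional test at level $\alpha/M$, so the union bound gives $\Pr(T_\sA\ge\tau_\alpha)\le\sum_{\va\in\sA}\Pr(T(\{\va^\top f_\vtheta(\vx_n)\}_{n=1}^N)\ge\tau_\alpha)\to\alpha$ — or, less conservatively, as the $(1-\alpha)$-quantile of the limiting joint law of $(T(\va_1),\dots,T(\va_M))$ (for the characteristic-function statistic each coordinate converges to a weighted sum of $\chi^2$'s, and the vector converges jointly by a functional CLT plus continuous mapping), in which case $\limsup_n\Pr(T_\sA\ge\tau_\alpha)\le\alpha$ by construction.

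For \textbf{(power)}, assume $P\neq Q$. By the contrapositive of \cref{thm:spherical_cramer} there is at least one direction $\va_0\in\mathbb{S}^{K-1}$ with $P_\vtheta^{(\va_0)}\neq Q^{(\va_0)}$; provided $\sA$ contains such a direction — guaranteed in the idealized case $\sA=\mathbb{S}^{K-1}$, and with probability one when the directions of $\sA$ are sampled from a full-support law on the sphere and enriched — the univariate alternative $H_1(\va_0)$ holds, and consistency of the base test against it forces $T(\{\va_0^\top f_\vtheta(\vx_n)\}_{n=1}^N)$ to diverge in probability, i.e.\ $\Pr(T(\{\va_0^\top f_\vtheta(\vx_n)\}_{n=1}^N)\ge C)\to1$ for every fixed $C$. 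Since $T_\sA\ge T(\{\va_0^\top f_\vtheta(\vx_n)\}_{n=1}^N)$ and $\tau_\alpha$ is a fixed constant, $\Pr(T_\sA\ge\tau_\alpha)\to1$.

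The delicate point is the power half: one needs a \emph{separating} direction to actually lie inside $\sA$ — this is where the genericity of the separating set enters (that $\widehat P-\widehat Q$ vanishing along an entire ray confines $\va$ to a nowhere-dense subset of the sphere, together with the ``dense-enough'' reading of $\sA$ and continuity of $\va\mapsto\widehat P(t\va)-\widehat Q(t\va)$) — and one needs the \emph{specific} base test $T$ to be consistent against the univariate alternatives that arise; both are supplied by the construction in \cref{sec:tests,sec:CF_better}. On the level side the Bonferroni route is essentially immediate, and the only substantive work there is pinning down the joint asymptotic distribution of the $M$ directional statistics if one wants the sharper, non-conservative threshold.
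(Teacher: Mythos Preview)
Your proposal follows the same union--intersection skeleton as the paper (Cram\'er--Wold to identify $P=Q$ with $\bigcap_{\va}H_0(\va)$, level by calibration of the max, power by divergence along a separating direction), but the technical device in the \textbf{power} half is genuinely different. You argue that a separating direction $\va_0$ actually lies in $\sA$ --- either because $\sA=\mathbb{S}^{K-1}$ or because random full-support sampling avoids the non-separating set almost surely --- and then use pointwise consistency of the base test at $\va_0$. The paper instead lets the direction set $\sA_n$ depend on $n$ with mesh $\Delta(\sA_n)\to 0$, and assumes \emph{uniform} consistency over a neighbourhood $U\ni\va^\star$: for large $n$ some $\va_n\in\sA_n\cap U$ exists by dense coverage, and $\Pr(T_{\va_n,n}\ge\tau_\alpha)\to 1$ by the neighbourhood assumption. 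Your route is closer to the actual implementation (random resampling of $\sA$) but needs the non-separating set $\{\va:P^{(\va)}_\vtheta=Q^{(\va)}\}$ to be null on the sphere, which you only gesture at; the paper's route sidesteps that measure-theoretic claim at the price of a uniform-in-$\va$ consistency assumption on the base test. On the \textbf{level} side you give more than the paper does: the paper simply \emph{assumes} a calibrated threshold $u_n(\alpha)$ with $\Pr(M_n\ge u_n(\alpha))\le\alpha$, whereas you sketch two concrete constructions (Bonferroni and the joint limit law). One caveat: if you pair Bonferroni with a growing $M=|\sA_n|$, then $\tau_\alpha$ is no longer a fixed constant and your power argument's last line (``$\tau_\alpha$ is a fixed constant'') needs the divergence rate of the alternative statistic to beat the growth of the Bonferroni threshold --- the paper absorbs this into its neighbourhood assumption.
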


The assumptions required in the proof of \cref{thm:bcs} hold for classical consistent univariate tests $T$ such as the ones presented in the following \cref{sec:tests}.

\begin{figure}[t!]
    \begin{minipage}{0.49\linewidth}
\includegraphics[width=\linewidth]{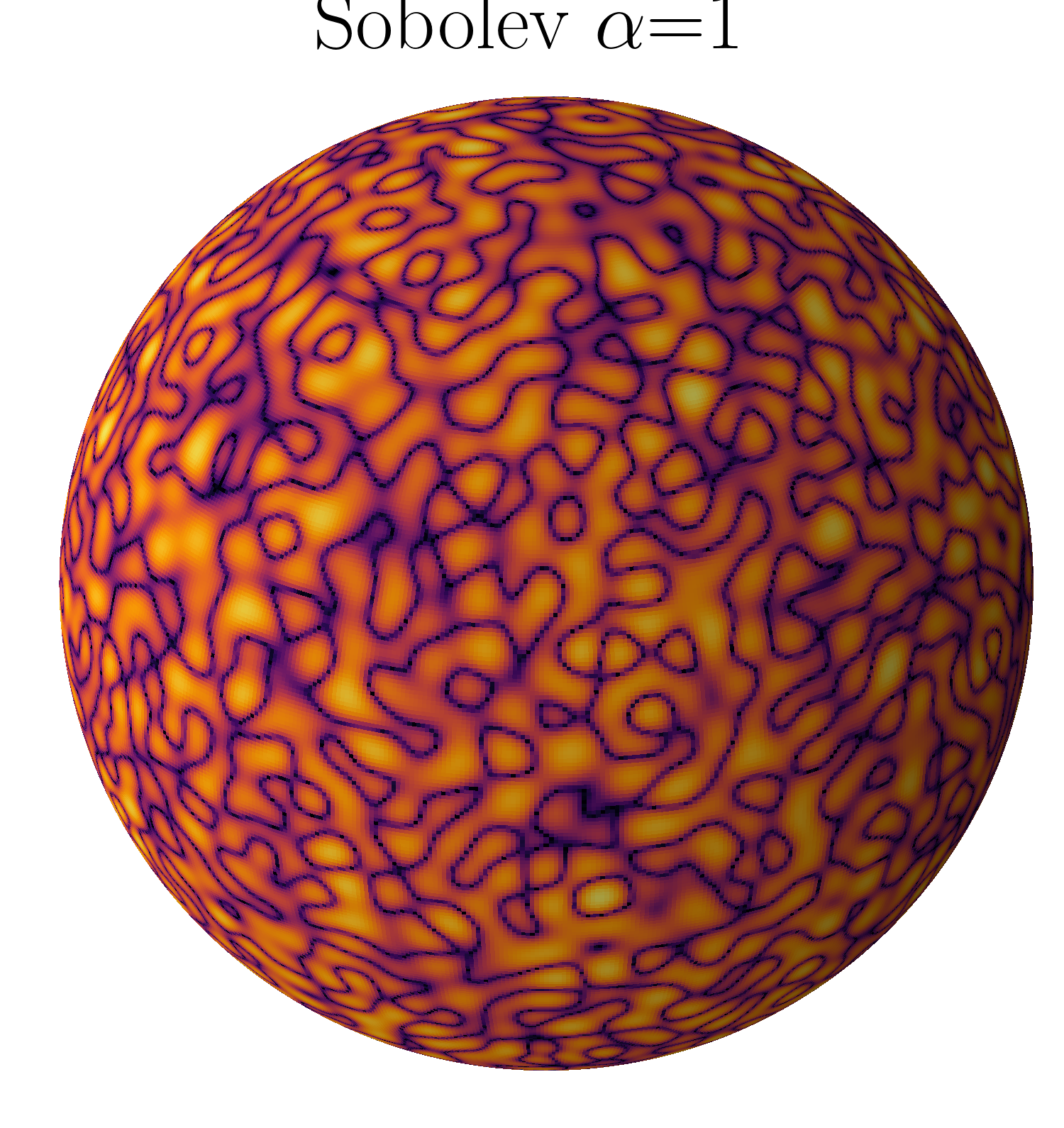}
    \end{minipage}
    \begin{minipage}{0.49\linewidth}
\includegraphics[width=\linewidth]{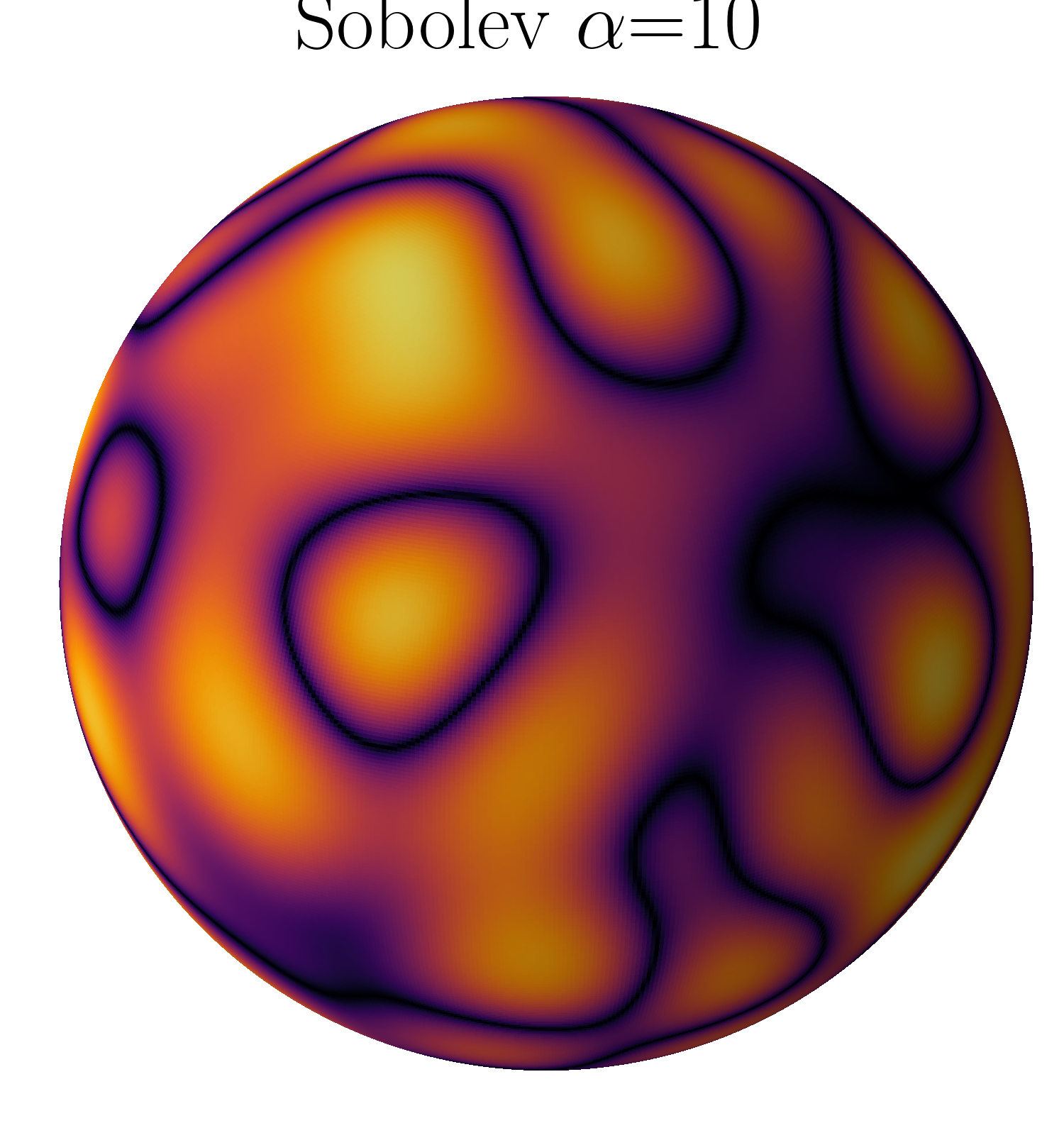}
    \end{minipage}
    \caption{Examples of distributions living on the surface of the sphere with varying Sobolev smoothness coefficients $\alpha$. As per \cref{thm:spherical_bounds}, the greater $\alpha$ is, the more global will be the impact of SIGReg for a given number of directions $M$. Practically, this represents the distribution of the encoder's output. Because the target density (isotropic Gaussian) is smooth, the $\alpha$ coeffcients of the embedding will quickly grow hereby making SIGReg (\cref{def:bcs}) immune to the curse of dimensionality.}
    \label{fig:sobolev}
\end{figure}

\begin{figure*}[t!]
    \centering
    \includegraphics[width=\linewidth]{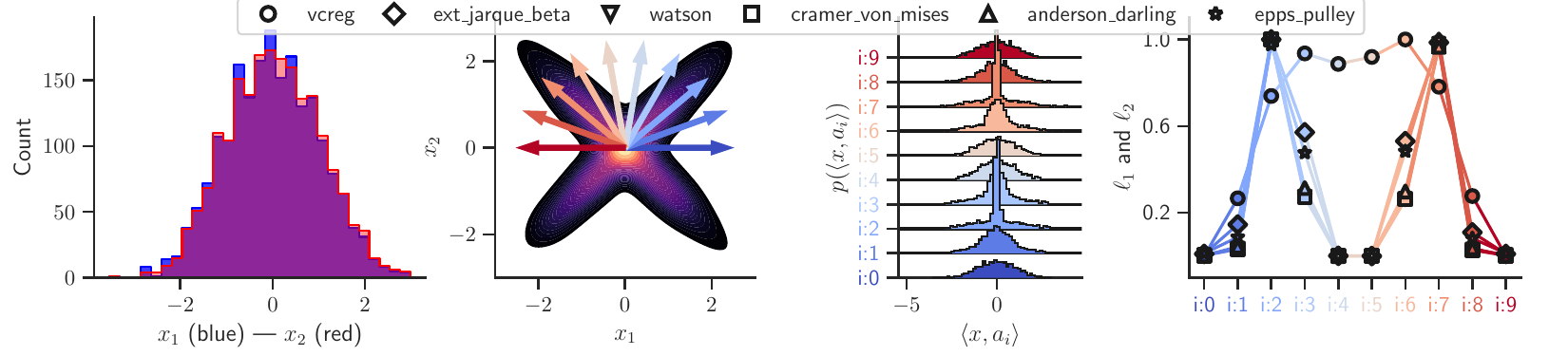}
    \caption{\small Constructed data density with ``X'' distribution whose marginals are standard Gaussian and whose covariance is identity ({\bf left densities}). Applying $M=10$ projections on the half circle directions produces $10$ univariate distributions that can be compared against a standard Gaussian ({\bf left}) using any preferred statistic from \cref{sec:tests}. The appropriate direction is able to capture the degenerate distribution of the data hereby creating a spike in the statistic value.}
    \label{fig:toy_2d}
\end{figure*}

\subsection{SIGReg: Sketching the Epps-Pulley Test is Stable and Scalable}
\label{sec:CF_better}
\label{sec:tests}

Our proposed regularizer--coined Sketched Isotropic Gaussian Regularization (SIGReg)--follows directly from \cref{thm:bcs} using any statistical test $T$ targeted towards the isotropic Gaussian, illustrated in \cref{fig:bcs_teaser,fig:toy_2d}, and formalized below.

\begin{definition}[label={def:bcs}]{SIGReg (PyTorch code in \cref{lst:epps-pulley-pytorch})}{}
SIGReg sketches a statistical test $T$ towards isotropic Gaussian
\begin{multline}
    {\rm SIGReg}_{T}(\sA,\{f_{\vtheta}(\vx_n)\}_{n=1}^{N})\triangleq\frac{1}{|\sA|}\sum_{\va\in\sA}T(\{\va^\top f_{\vtheta}(\vx_n)\}_{n=1}^{N}),\tag{SIGReg}\label{eq:BCS}
\end{multline}
where we recommend the Epps-Pulley test (\cref{sec:cf_tests}) for $T$.
\end{definition}

We replace the maximum over $\va\in\sA$ in \cref{thm:bcs} by an average in \eqref{eq:BCS} to avoid sparse gradient over the directions in $\sA$. We now delve on the choice of $T$ for which we compare well-known candidate tests in the field of statistics that are categorized into (i) moment based (\cref{sec:moment_tests}), (ii) CDF based (\cref{sec:cdf_tests}), and (iii) CF based (\cref{sec:cf_tests}) statistics--ultimately justifying our choice of the Epps-Pulley statistic.

\subsubsection{Moments are Unstable and Insufficient}
\label{sec:moment_tests}

The first family of statistics we consider are moment-based. Taking the standard Gaussian as an instanciation for the moments, we can define the Jarque-Bera \citep{jarque1980efficient} test that compares the third and fourth moments, i.e., skewness and kurtosis, as
\begin{align}
    {\rm JB}(\vu)\triangleq&\frac{N}{6}\left(\widehat{\rm skew}(\vu)^{2}+\left(\frac{\widehat{\rm kurt}(\vu)-3}{2}\right)^{2}\right)\tag{Jarque-Bera}\label{eq:jarque},
\end{align}
where $\widehat{\rm skew}$ is the skewness computed from the data as $\frac{\frac{1}{n} \sum_{i=1}^{n}\left(x_{i}-\hat{\mu}\right)^{3}}{\hat{\sigma}^{3}}$ and $\widehat{\rm kurt}$ is the kurtosis $\frac{\frac{1}{n} \sum_{i=1}^{n}\left(x_{i}-\hat{\mu}\right)^{4}}{\hat{\sigma}^4}$. Typically, the \eqref{eq:jarque} test is used to see if a density follows a Gaussian distribution of any mean and variance--hence it only looks at moments 3 and 4. In our case we aim for a standard Gaussian test and thus add the usual statistics on the first two moments, leading to the extended test
\begin{align}    
{\rm EJB}(\vu)\triangleq&\frac{N\hat{\mu}(\vu)^2}{\hat{\sigma}(\vu)^2}+\frac{(N-1)\left(\hat{\sigma}(\vu)^2-1\right)^2}{2}+{\rm JB}\tag{Extended Jarque-Bera}(\vu)\label{eq:ejarque}.
\end{align}
The \eqref{eq:ejarque} acts as a moment matching problem over the first four moments. Such moment matching methods have proven powerful not only for statistical tests but also as mean to learn parametric and nonparametric models of data.

{\bf The Stability and Identifiability Conundrum.}~We now explain why moment-based tests--albeit powerful--will not be suited for LeJEPA. The $k^{th}$ of a distribution $P$ is denoted as $m_k(P)$. The first observation is that well-behaved distributions abiding the Carleman's condition $
\sum_{k=1}^\infty m_{2k}(Q)^{-1/(2k)} = \infty$ \citep{carleman1926fonctions}, such as the Gaussian,  or for distributions with finite interval \citep{hausdorff1923momentprobleme} are uniquely determined by their moments. However, using a finite number of moments creates the following non-identifiability issue which well-known in statistics and often used as a motivation to use {\em all} moments \citep{lehmann2005testing}. 

\begin{theorem}[label={thm:moment_conendrum}]{Insufficiency of K Moments}{}
Minimizing the following objective with $c_k>0, \forall k$
\[
\sum_{k=1}^K c_k\left(m_k\left(P_{\vtheta}^{(\va)}\right)-m_k\left(Q^{(\va)}\right)\right)^2,
\]
for finite $K$ does not imply $P_{\vtheta}^{(\va)}=Q^{(\va)}$. (Proof in \cref{proof:moment_conendrum}.)
\end{theorem}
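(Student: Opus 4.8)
The plan is to prove the statement by producing a concrete counterexample: two distinct univariate laws that agree on their first $K$ moments. Since the displayed objective is a sum of squares with positive weights $c_k$, it is nonnegative and equals $0$ exactly when $m_k(P_{\vtheta}^{(\va)}) = m_k(Q^{(\va)})$ for all $k\le K$. Hence, if we can exhibit any probability distribution $P^\star \neq Q^{(\va)}$ whose first $K$ moments coincide with those of $Q^{(\va)}$, then $P^\star$ is also a global minimizer of the objective (it attains value $0$), so ``minimizing the objective'' cannot determine the law — which is precisely the claim. The whole content therefore reduces to the classical non-identifiability of a distribution from finitely many of its moments.

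For the target of interest, $Q^{(\va)}$ is a one-dimensional marginal of the isotropic Gaussian, hence has density $\phi(x) = \tfrac{1}{\sqrt{2\pi}}e^{-x^2/2}$ with all moments finite. I would build $P^\star$ as a compactly supported signed perturbation of $\phi$. Fix a nontrivial interval bounded away from the origin, say $I = [1,2]$, and consider the $K+1$ linear functionals $g \mapsto \int_I g(x)\,x^j\,dx$, $j=0,\dots,K$, on the infinite-dimensional space $C_c^\infty(I)$; a dimension count yields a nonzero $g\in C_c^\infty(I)$ annihilated by all of them. Set $p_\epsilon(x) = \phi(x) + \epsilon\,g(x)$. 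For $|\epsilon|$ small enough, $p_\epsilon \ge 0$ everywhere (on $I$, $\phi$ is bounded below by a positive constant while $g$ is bounded; off $I$, $p_\epsilon=\phi>0$), and $\int p_\epsilon = 1$ because $\int g = 0$, so $p_\epsilon$ is a genuine probability density; moreover $\int x^j p_\epsilon(x)\,dx = \int x^j \phi(x)\,dx$ for $j=1,\dots,K$. Thus $P^\star := p_\epsilon\,dx$ shares the first $K$ moments of the standard Gaussian yet $P^\star \neq \mathcal{N}(0,1)$ since $g\not\equiv 0$, establishing the theorem.

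As an independent, purely algebraic alternative, Gauss--Hermite quadrature furnishes a discrete measure supported on $K+1$ nodes with positive weights that reproduces the first $2K+1 \ge K$ moments of $\mathcal{N}(0,1)$; this discrete $P^\star$ is manifestly different from the Gaussian and again drives the objective to $0$. The only point requiring care in either route is keeping the perturbation a bona fide probability measure (nonnegativity and unit mass) while remaining moment-orthogonal to $1,x,\dots,x^K$: the ``compact support away from the origin'' device secures nonnegativity, and $\int g = 0$ secures unit mass, so this step is routine rather than an obstacle. The conceptual takeaway — that any finite truncation of the moment sequence under-determines the law, in stark contrast to the full sequence under the Carleman condition invoked above — is exactly what motivates moving beyond moment-based statistics in what follows.
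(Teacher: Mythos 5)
Your proof is correct. It rests on the same underlying observation as the paper's — finitely many moment functionals are linear constraints with a nontrivial kernel on the space of (signed) measures, so one can perturb within that kernel while remaining a probability measure — but the concrete construction differs. The paper works entirely in finite dimensions: it picks $K+2$ atoms, notes that the map sending a weight vector to its first $K$ moments (including mass) has rank at most $K+1$, takes a nonzero kernel vector $v$ with $\sum_j v_j = 0$, and perturbs a strictly positive weight vector by $\pm\varepsilon v$ to get two distinct discrete laws with identical moments up to order $K$. You instead perturb the standard Gaussian density by a smooth, compactly supported $g$ orthogonal to $1,x,\dots,x^K$, which yields an absolutely continuous counterexample anchored to the actual target distribution $Q^{(\va)}$ of the paper (the Gaussian marginal); your Gauss--Hermite alternative is a third, fully explicit discrete witness. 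The paper's version is more minimal and self-contained (pure linear algebra, no smoothness or positivity-of-$\phi$ considerations); yours is arguably more pertinent to the surrounding discussion, since it shows the non-identifiability persists even within the class of smooth densities close to the Gaussian, which is the regime the encoder's embeddings are being pushed toward. Both correctly reduce the theorem to exhibiting a single $P^\star\neq Q^{(\va)}$ attaining the objective's global minimum of zero.
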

Hence \cref{thm:moment_conendrum} prescribes us with the guideline to employ as large $K$ as possible to remove collapsed shortcut solution by making sure our distribution matching is accurate. Yet, doing so leads to unstable gradient-based training due to
the gradient norm scaling as $O(k)$, and the variance of Monte Carlo gradient estimates growing as $O(k^2 m_{2(k-1)})$ 
for the $k$-th moment since $
\big\|\nabla_\theta m_k(P_{\vtheta}^{(\va)})\big\|=\|\mathbb{E}\big[k(\va^\top f_{\vtheta}(\vx))^{k-1}\va^\top J_{f_\vtheta}(\vx)\big]\|$, with $J_{f_\vtheta}(\vx)\in\mathbb{R}^{K\times P}$ the Jacobian matrix--hereby creating an impractical situation where training stability and identifiability can not be achieved simultaneously.

\begin{figure*}[t!]
    \centering
    \includegraphics[width=\linewidth]{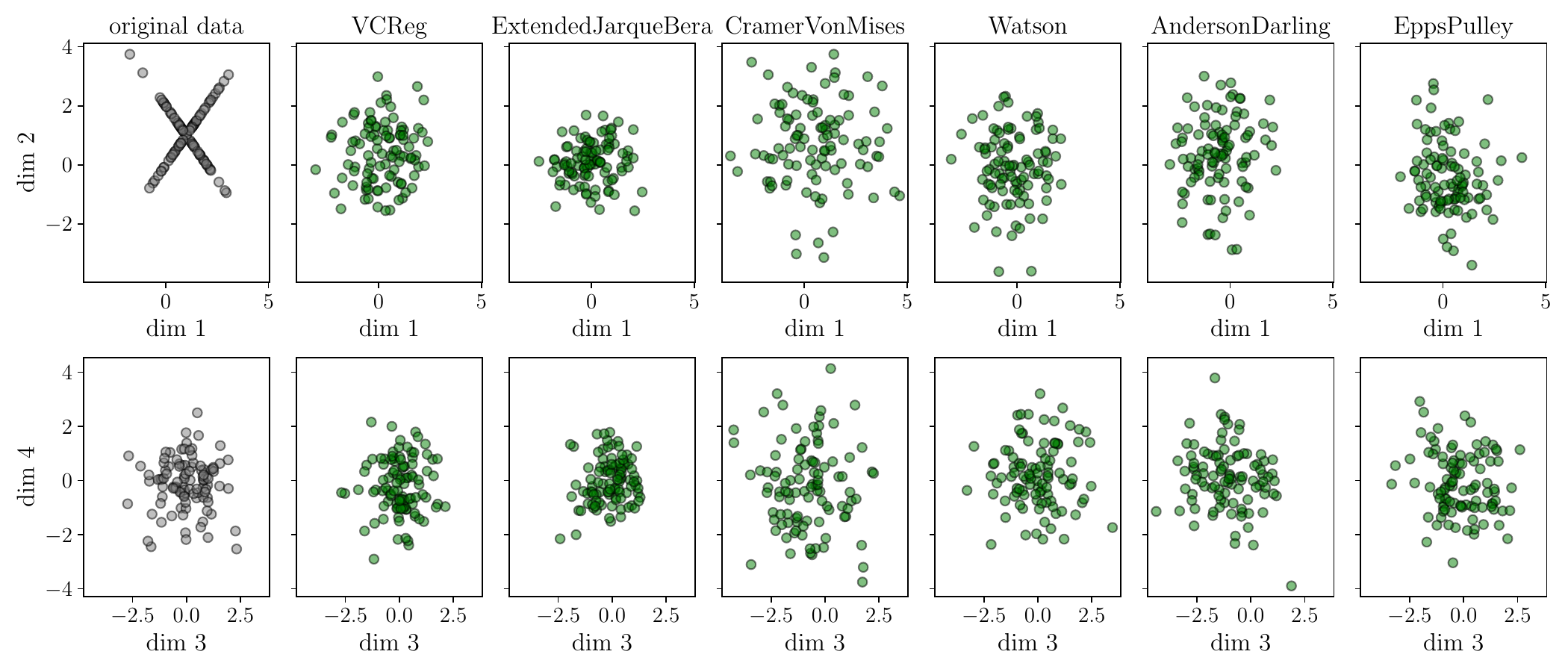}
    \caption{\small $N=100$ samples are drawn from a $1024$-dimensional standard Gaussian, and the first $2$ coordinates are altered to produce the ``X'' distribution from \cref{fig:toy_2d} ({\bf left-most column}). For each statistic ({\bf all other columns}), we perform gradient descent on the samples to minimize their value, at each iteration step with sample $M=10$ random directions to evaluate SIGReg (recall \cref{def:bcs}). We obtain that albeit this is a high-dimensional distribution with limited number of samples, SIGReg is able to capture the degenerate subspace and adapt the data accordingly to match an isotropic Gaussian distribution. Additional figures with varying dimensions and number of 1d projections are provided in \cref{fig:extra_nonparametric}.}
    \label{fig:nonparametric}
\end{figure*}

\subsubsection{Cumulative Density Functions are Impractical}
\label{sec:cdf_tests}

The second family of tests acts upon the CDF. Because those tests require sorting, let's denote the $k^{\rm th}$ order-statistics of $N$ samples by $x_{k:N}$. Two highly standard tests are quadratic Empirical Density Function statistics with different weighting known as Cramér-von Mises \citep{cramer1928composition,von1981probability} and Anderson Darling \citep{anderson1952asymptotic}, and given by
\begin{align}
    T_{w}&=N \int_{-\infty}^{\infty}\left(F_{N}(x)-F(x)\right)^{2} w(x) d F(x)\nonumber\\
    w(x)&=1,\tag{Cramér-von Mises}\label{eq:cramer}\\
    w(x)&=[F(x)(1-F(x))]^{-1}, \tag{Anderson-Darling}\label{eq:anderson_darling}
\end{align}
where $w(x)$ is a weighting function. Adding the $U^2$ statistics on top of \cref{eq:cramer} recovers the Watson test \citep{watson1961goodness}
\begin{equation}
    U^{2}=T_{w}-N\left(\bar{F}-\frac{1}{2}\right)^{2}.\tag{Watson} \label{eq:watson}
\end{equation}
We do not consider the Kolmogorov-Smirnov test \citep{kolmogorov1933} as it employs the $\ell_{\infty}$-norm instead of the $\ell_2$-norm hereby producing sparse gradients. Another common test is the Shapiro-Wilk test \citep{shapiro1965analysis} which we found to be unstable in practice--details are provided in \cref{sec:shapiro_wilk}.

{\bf Lack of Scalability and Differentiability.}~CDF-based tests require sorting that have been highly optimized, e.g., with the $\mathcal{O}(N \log (N))$ Quicksort algorithm \citep{quicksort} but that nonetheless breaks the embarrassingly parallel 
nature of SGD--especially on multi-GPU \citep{tanasic2013comparison,maltenberger2022evaluating} due to synchronization requirements. Moreover, these tests involve non-differentiable operations (sorting and 
order statistics), making them unsuitable for gradient-based optimization 
without relaxations \citep{cuturi2019differentiable,grover2019stochastic,petersen2022monotonic}. While there exists intricate sketching solutions \citep{dunning2019computing,masson2019ddsketch,dunning2021t}, each of those solutions introduce numerous additional hyper-parameters--going against our first motivation for LeJEPA.

\subsubsection{Characteristic Functions are Stable, Scalable and Identifiable}
\label{sec:cf_tests}

The third family of tests is concerned with Empirical Characteristic Functions (ECF) which are the Fourier transform of the density function. The Epps–Pulley test \citep{epps1983test} is one of the most popular test and simply compares in weighted $\ell_2$-norm the ECF of the data against a target CF
\begin{equation}
    EP = N \int_{-\infty}^{\infty} \left| \hat{\phi}_X(t) - \phi(t) \right|^2 w(t)  dt.\tag{Epps–Pulley}
    \label{eq:epps_pulley}
\end{equation}
The first crucial observation is that the ECF being defined as $\hat{\phi}_X(t) = \frac{1}{n} \sum_{j=1}^{n} e^{itX_j}$ is naturally differentiable and easily computed in distributed settings via efficient {\tt all\_reduce} operations, 
as the ECF is a simple average of complex exponentials. The weight function is typically Gaussian, such as $w(t) = e^{- t^2/\sigma^2}$ with $\sigma$ commonly set to $1$.

Other tests, e.g., based on the Entropy \citep{szekely2005new} are not considered here as they require numerous additional design choices for the univariate Entropy estimation \citep{silverman2018density,beirlant1997nonparametric}, e.g., using kernels \citep{joe1989estimation}, or M-estimators \citep{miller2003new}.

{\bf Epps-Pulley has bounded loss, gradient and curvature.}~We now consider the remaining two families of tests: moment-based and CF-based. First, recall that moments are polynomial in the data and with extreme growth rate for higher moment--assuming they even exist. Even for well-behaved distributions, raising values to a power of $k$ can quickly lead to exploding gradients. This comes in sharp contrast with the ECF which is always bounded and with bounded gradients for any input distribution for the projected samples $z_i = \va^\top f_\theta(\vx_n)$, $n=1,\ldots,N$.

\begin{theorem}[label={thm:ecf_stability}]{Stability of Epps-Pulley Test}{}
\eqref{eq:epps_pulley} 
satisfies for samples $z_1,\dots,z_N$
\[
\left|\frac{\partial EP(\mathbf{a})}{\partial z_i}\right| \le \frac{4\sigma^2}{N}, 
\quad
\left|\frac{\partial^2 EP(\mathbf{a})}{\partial z_i^2}\right| \le \frac{C\sqrt{\pi}\sigma^3}{2N},
\]
with constant $C$, and bandwidth $\sigma$. (Proof in \cref{proof:ecf_stability}.)
\end{theorem}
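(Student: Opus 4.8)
The plan is to differentiate the (appropriately normalized) Epps--Pulley functional under the integral sign, bound the integrand pointwise in $t$ by a Gaussian-weighted polynomial, and finish with two elementary Gaussian integrals. Write $z_i=\va^\top f_{\vtheta}(\vx_i)$, $\hat\phi_X(t)=\tfrac1N\sum_{j=1}^N e^{itz_j}$, $\phi(t)=e^{-t^2/2}$ (the post-projection standard-Gaussian target), and $w(t)=e^{-t^2/\sigma^2}$. All of the $z_i$-dependence sits inside $\hat\phi_X$, with $\partial_{z_i}\hat\phi_X(t)=\tfrac{it}{N}e^{itz_i}$ and $\partial_{z_i}^2\hat\phi_X(t)=-\tfrac{t^2}{N}e^{itz_i}$, hence $|\partial_{z_i}\hat\phi_X(t)|=|t|/N$ and $|\partial_{z_i}^2\hat\phi_X(t)|=t^2/N$ uniformly in the remaining samples. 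Since both $\hat\phi_X$ and $\phi$ are characteristic functions, $|\hat\phi_X(t)-\phi(t)|\le 2$.

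First I would justify interchanging $\partial_{z_i}$ with $\int_{\mathbb{R}}$: the $z_i$-derivatives of the integrand $|\hat\phi_X(t)-\phi(t)|^2 w(t)$ are dominated (uniformly over a neighbourhood of the samples) by multiples of $|t|\,w(t)$ and $t^2\,w(t)$, both integrable, so dominated convergence applies to $T(\va)\triangleq\int_{\mathbb{R}}|\hat\phi_X(t)-\phi(t)|^2 w(t)\,dt$; note $T(\va)$ is $1/N$ times the $EP$ of \eqref{eq:epps_pulley}, and it is this per-sample-normalized statistic that enters the SIGReg loss, which is where the $1/N$ in the bounds comes from. Then, using $\partial_{z_i}|\hat\phi_X-\phi|^2=2\,\mathrm{Re}\big(\overline{(\hat\phi_X-\phi)}\,\partial_{z_i}\hat\phi_X\big)$ and the triangle inequality,
\[
\Big|\tfrac{\partial T(\va)}{\partial z_i}\Big|\le \int_{\mathbb{R}}2\cdot 2\cdot\tfrac{|t|}{N}w(t)\,dt=\tfrac{4}{N}\int_{\mathbb{R}}|t|e^{-t^2/\sigma^2}\,dt=\tfrac{4\sigma^2}{N},
\]
using $\int_{\mathbb{R}}|t|e^{-t^2/\sigma^2}dt=\sigma^2$.

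For the curvature, differentiating once more gives $\partial_{z_i}^2|\hat\phi_X-\phi|^2=2|\partial_{z_i}\hat\phi_X|^2+2\,\mathrm{Re}\big(\overline{(\hat\phi_X-\phi)}\,\partial_{z_i}^2\hat\phi_X\big)$, whose modulus is at most $2t^2/N^2+4t^2/N\le 6t^2/N$ for $N\ge1$; integrating against $w$ and using $\int_{\mathbb{R}}t^2 e^{-t^2/\sigma^2}dt=\tfrac{\sqrt\pi}{2}\sigma^3$ yields $|\partial_{z_i}^2 T(\va)|\le \tfrac{3\sqrt\pi\,\sigma^3}{N}$, i.e.\ the claimed form $C\sqrt\pi\,\sigma^3/(2N)$ with $C=6$ (one may take $C\to 4$ keeping only the dominant $4t^2/N$ term when $N$ is large).

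The proof is essentially mechanical; the only genuine care-points are (i) pinning down the normalization of the test statistic so that it matches the quantity actually optimized (this accounts for the $1/N$ factors and for the constant $C$), and (ii) the domination estimate licensing differentiation under the integral, which is painless because the Gaussian weight $w$ suppresses every polynomial factor in $t$. Crucially, no assumption on the distribution of the $z_i$ is used beyond $|e^{itz_j}|=1$, so the bounds are fully distribution-free---exactly the robustness property the theorem is meant to capture, and which moment-based tests (\cref{thm:moment_conendrum}) lack.
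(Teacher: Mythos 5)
Your proof is correct and follows essentially the same route as the paper's: differentiate under the integral (justified by the Gaussian weight dominating every polynomial in $t$), use $|\hat\phi_X(t)-\phi(t)|\le 2$ and $|\partial_{z_i}\hat\phi_X(t)|=|t|/N$, and evaluate $\int|t|e^{-t^2/\sigma^2}dt=\sigma^2$ and $\int t^2e^{-t^2/\sigma^2}dt=\tfrac{\sqrt{\pi}}{2}\sigma^3$. The only differences are cosmetic: you adopt the bandwidth convention $w(t)=e^{-t^2/\sigma^2}$ that matches the theorem statement (the paper's proof writes $w_s(t)=e^{-s^2t^2}$, so its bounds appear as $4/(Ns^2)$ and $C\sqrt{\pi}/(2Ns^3)$ with $s=1/\sigma$), you make the curvature constant explicit ($C=6$) where the paper leaves it abstract, and you correctly flag the $N$-prefactor normalization mismatch between \eqref{eq:epps_pulley} and the stated $1/N$ bounds, which the paper's proof handles silently by working with the un-prefactored functional.
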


\begin{lstfloat}[t!]
\begin{lstlisting}[caption={SIGReg with Epps-Pulley statistic with DDP support and $\mathcal{O}(N)$ time and memory complexity. x is a (N, K) tensor, num\_slices is $|\sA|$ in \cref{def:bcs}, `global\_step` is used for sync. sampling across GPUs and can be omited for single-GPU training. An optimized implementation with caching is also provided in our official codebase, computation times provided in \cref{tab:times}.},label={lst:epps-pulley-pytorch}]
def SIGReg(x, global_step, num_slices=256):
    # slice sampling -- synced across devices --
    dev = dict(device=x.device)
    g = torch.Generator(**dev)
    g.manual_seed(global_step)
    proj_shape = (x.size(1), num_slices)
    A = torch.randn(proj_shape,  generator=g, **dev)
    A /= A.norm(p=2, dim=0)
    # -- Epps-Pulley stat. see Sec. 4.3 for alt. --
    # integration points
    t = torch.linspace(-5, 5, 17, **dev)
    # theoretical CF for N(0, 1) and Gauss. window
    exp_f = torch.exp(-0.5 * t**2)
    # empirical CF -- gathered across devices --
    x_t = (x @ A).unsqueeze(2) * t  # (N, M, T)
    ecf = (1j * x_t).exp().mean(0)
    ecf = all_reduce(ecf, op="AVG")
    # weighted L2 distance
    err = (ecf - exp_f).abs().square().mul(exp_f)
    N = x.size(0) * world_size
    T = torch.trapz(err, t, dim=1) * N
    return T
\end{lstlisting}
\end{lstfloat}

By the chain rule, \cref{thm:ecf_stability} directly gives $
\left\|\nabla_\theta EP(\mathbf{a})\right\| \le \frac{4\sigma^2}{N} \sum_{i=1}^N \|\mathbf{a}^\top \nabla_\theta f_\theta(\mathbf{x}_i)\|$, 
providing stable gradients. The limitations of moment-based and CDF-based tests coupled with \cref{thm:ecf_stability} justifies our choice of the \eqref{eq:epps_pulley}: (i) DDP-friendly and scalable, (ii) 
uniformly bounded gradients and curvature regardless of input distribution, and (iii) hyper-parameter free implementation. Lastly, we highlight that {\em our implementation has a linear memory and computational complexity of $\mathcal{O}(N)$, with $N$ the minibatch size}. The implementation of SIGReg using that statistical test is provided in \cref{lst:epps-pulley-pytorch}, along with computation times of the forward-backward pass in \cref{tab:times}.

As a last step before introducing LeJEPA, we ought to study the requirements on the number of directions ($|\sA|$) for \eqref{def:bcs} to be effective in high-dimension.

\subsection{How SIGReg Beats the Curse of Dimensionality}
\label{sec:dimension}

This last section seeks to characterize how many slices in $\sA$ one must sample for \eqref{eq:BCS} to be an effective statistical test. That design is crucial if we hope for LeJEPA to successfully converge towards  isotropic Gaussian embeddings.

\subsubsection*{Smoothness Beats the Curse of Dimensionality}

Our first argument arguing for a favorable scaling of $|\sA|$ with the embedding dimension $K$ relies on the smoothness of $P_{\vtheta}$ as measured by its Sobolev regularity $\alpha$ \citep{adams2003sobolev}. We formalize below a bound on the directional test from \cref{eq:HUV} over all possible directions $\va$ when the test statistic is minimized over $|\sA|=M$ directions. While we provide bounds on the expected discrepancy over random directions $\va$ 
when the EP test is satisfied (equals zero) on a finite set of directions, the provided proof includes the case of moment-based and CDF-based tests as well.

\begin{figure}[t!]
    \centering
    \includegraphics[width=\linewidth]{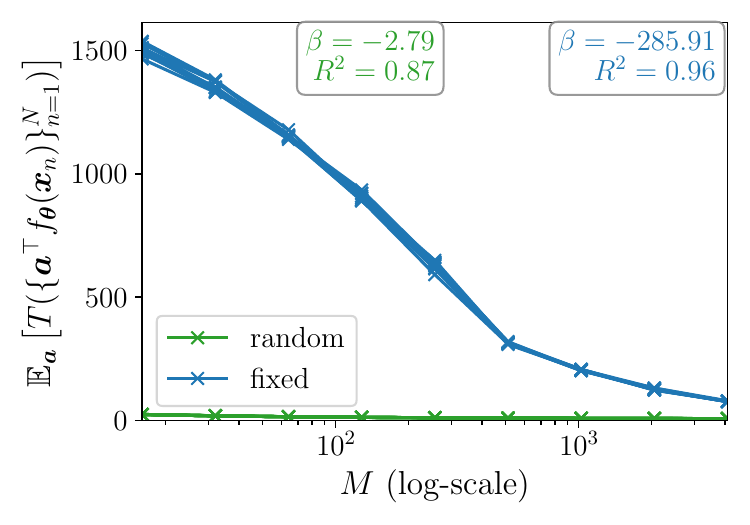}
    \caption{Expected directional statistic at the end of training ({\bf y-axis}) for varying $M$ (number of directions used at each training step, {\bf x-axis}). The $M$ directions are either resampled ({\bf green}) or kept fixed ({\bf blue}) at each training step. While for fixed directions we benefit from \cref{thm:spherical_bounds} bound where increasing $M$ reduces the overall expected loss, being able to resample at every step provides significant coverage boost for free.}
    \label{fig:resampling}
\end{figure}

\begin{theorem}[label={thm:spherical_bounds}]{Unified Error Bounds}{}
Let $p_{\vtheta}\in H^\alpha(\mathbb{R}^K)$, $\va \sim \mathcal{U}(\mathcal{S}^{K-1})$, and \eqref{eq:epps_pulley}$=0$, i.e., $P_\theta^{(\mathbf{a})} = Q^{(\mathbf{a})}, \forall \va \in \sA$, then
\begin{multline*}
        \mathbb{E}_{\va} \left[ \int_{\mathbb{R}} \left| \varphi_a(t) - \varphi_{\mathcal{N}}(t) \right|^2 dt \right]
    \leq
    C(K, \alpha) |\sA|^{-2\alpha/(K-1)} \\\times\int_0^\infty \left\| \varphi_{\cdot}(r) - \varphi_{\mathcal{N}}(r) \right\|_{H^\alpha(\mathcal{S}^{K-1})}^2 dr,
    \end{multline*}
    (Proof in \cref{proof:spherical_bounds}.)
\end{theorem}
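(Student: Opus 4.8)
The plan is to read the left-hand side as a squared $L^{2}(\mathcal{S}^{K-1})$–norm of a one-parameter family of functions on the sphere that vanish on the finite design $\sA$, and then to apply a Sobolev sampling (fill–distance) inequality on $\mathcal{S}^{K-1}$. For each $t\in\mathbb{R}$, view $g_{t}(\va)\triangleq\varphi_{a}(t)-\varphi_{\mathcal{N}}(t)$ as a function of $\va\in\mathcal{S}^{K-1}$; since $\varphi_{a}(t)=\mathbb{E}_{Z\sim P_{\vtheta}}[e^{i t\,\va^{\top}Z}]$, this is exactly the restriction to the sphere of radius $|t|$ of the (difference of) multivariate characteristic functions. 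Because the Epps–Pulley weight $w$ is strictly positive, the hypothesis $P_{\vtheta}^{(\va)}=Q^{(\va)}$ for $\va\in\sA$---equivalently, the population Epps–Pulley functional vanishing at those $\va$, via continuity/analyticity of characteristic functions---gives $g_{t}(\va)=0$ for every $\va\in\sA$ and every $t\in\mathbb{R}$. The same reduction is valid verbatim for the moment– and CDF–based tests after replacing $g_{t}$ by the relevant moment– or CDF–difference functional, which is why the statement is phrased uniformly.

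Next I would invoke the classical sampling inequality on the sphere: if $f\in H^{\alpha}(\mathcal{S}^{K-1})$ with $\alpha>(K-1)/2$ (so that pointwise evaluation is well defined) and $f$ vanishes on a point set of fill distance $h$, then $\|f\|_{L^{2}(\mathcal{S}^{K-1})}^{2}\le C(K,\alpha)\,h^{2\alpha}\,\|f\|_{H^{\alpha}(\mathcal{S}^{K-1})}^{2}$. I would prove it by a spherical-harmonic band split $f=f_{\le L}+f_{>L}$ at level $L$ of order $h^{-1}$: the high-frequency tail obeys $\|f_{>L}\|_{L^{2}}^{2}\le L^{-2\alpha}\|f\|_{H^{\alpha}}^{2}$ directly from the Laplace–Beltrami weights defining $H^{\alpha}$, while for the band-limited part a Marcinkiewicz–Zygmund inequality gives $\|f_{\le L}\|_{L^{2}}^{2}\le \tfrac{C}{|\sA|}\sum_{\va\in\sA}|f_{\le L}(\va)|^{2}=\tfrac{C}{|\sA|}\sum_{\va\in\sA}|f_{>L}(\va)|^{2}$ (using $f|_{\sA}=0$), and a trace/stability estimate bounds the last sum by $C\bigl(\|f_{>L}\|_{L^{2}}^{2}+h^{2\alpha}\|f_{>L}\|_{H^{\alpha}}^{2}\bigr)\le C\,L^{-2\alpha}\|f\|_{H^{\alpha}}^{2}$; adding the two pieces proves the inequality. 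For a quasi-uniform design the fill distance is of order $|\sA|^{-1/(K-1)}$, hence $h^{2\alpha}$ is of order $|\sA|^{-2\alpha/(K-1)}$.

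Applying this with $f=g_{t}$ for each fixed $t$, and using $\mathbb{E}_{\va}\!\bigl[|g_{t}(\va)|^{2}\bigr]=|\mathcal{S}^{K-1}|^{-1}\|g_{t}\|_{L^{2}(\mathcal{S}^{K-1})}^{2}$, gives $\mathbb{E}_{\va}[|g_{t}(\va)|^{2}]\le C(K,\alpha)\,|\sA|^{-2\alpha/(K-1)}\,\|g_{t}\|_{H^{\alpha}(\mathcal{S}^{K-1})}^{2}$. Integrating over $t\in\mathbb{R}$, exchanging $\mathbb{E}_{\va}$ and $\int$ by Tonelli, and using the Hermitian symmetry $g_{-t}=\overline{g_{t}}$ on $\mathcal{S}^{K-1}$ (which leaves both $|\,\cdot\,|$ and $\|\cdot\|_{H^{\alpha}(\mathcal{S}^{K-1})}$ invariant) to replace $\int_{\mathbb{R}}dt$ by $2\int_{0}^{\infty}dr$---the factor $2$ absorbed into $C(K,\alpha)$---yields exactly the claimed inequality. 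The hypothesis $p_{\vtheta}\in H^{\alpha}(\mathbb{R}^{K})$ enters only to guarantee the right-hand side is finite and that the sphere restrictions $\va\mapsto g_{r}(\va)$ genuinely belong to $H^{\alpha}(\mathcal{S}^{K-1})$: their order-$m$ tangential derivatives are, up to powers of $r$, sphere restrictions of the Fourier transforms of $z\mapsto z^{\otimes m}p_{\vtheta}(z)$, so the decay controlled by $p_{\vtheta}\in H^{\alpha}(\mathbb{R}^{K})$---together with the Gaussian target having all moments---transfers to sphere-Sobolev bounds that are integrable in $r$.

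\textbf{Main obstacle.} Two points carry the real weight. First, the clean exponent $|\sA|^{-2\alpha/(K-1)}$ needs $\sA$ to be (near-)quasi-uniform, so that its fill distance actually behaves like $|\sA|^{-1/(K-1)}$; an i.i.d.\ uniform $\sA$ only reaches a fill distance of order $(\log|\sA|/|\sA|)^{1/(K-1)}$, introducing an extra logarithmic factor that must be absorbed into $C(K,\alpha)$ or removed with a designed point set---this is morally why resampling the directions at each step (\cref{fig:resampling}) helps in practice. Second, the regularity-transfer step---converting $H^{\alpha}(\mathbb{R}^{K})$ smoothness of the density into $H^{\alpha}(\mathcal{S}^{K-1})$ smoothness of the sliced characteristic functions with an $r$-integrable bound---is the genuinely technical part of the argument, and it is precisely where the smoothness/moment assumptions on $p_{\vtheta}$ (automatic since the target is an isotropic Gaussian) are used.
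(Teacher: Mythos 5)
Your proposal is correct and follows essentially the same route as the paper's proof: fix $t$, view the characteristic-function difference as a function of the direction on $\mathcal{S}^{K-1}$ that vanishes on $\sA$, bound its $L^2$ norm by $C(K,\alpha)\,|\sA|^{-2\alpha/(K-1)}$ times its $H^\alpha(\mathcal{S}^{K-1})$ norm via spherical harmonics and Marcinkiewicz--Zygmund inequalities for quasi-uniform point sets, then integrate over $t$. Your treatment is in fact somewhat more careful than the paper's, since you make explicit the band-splitting argument, the requirement $\alpha>(K-1)/2$ for pointwise evaluation, and the logarithmic loss incurred when $\sA$ is i.i.d.\ uniform rather than quasi-uniform --- a caveat the paper's proof glosses over by assuming quasi-uniformity outright.
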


As $|\sA| \to \infty$, the bound decays as $|\sA|^{-2\alpha/(K-1)}$, showing that 
$|\sA| = O(K)$ directions suffice for $\epsilon$-approximation when $\alpha$ is large. Some examples of embedding densities with varying $\alpha$ are provided in \cref{fig:sobolev}. The following statement characterizes how the $M$ directions actually constrain the entire space as a function of $\alpha$.
The constant $C(K, \alpha) = \frac{2^{2\alpha} \pi^{(K-1)/2} \Gamma\left(\alpha + \frac{K-1}{2}\right)}{(K-1) \Gamma(\alpha) \Gamma\left(\frac{K-1}{2}\right)}$ is visualized in \cref{fig:bound_example} (left) depicting how $\alpha$ and $|\sA|$ interact. In words, we obtain that thanks to the natural smoothness of DN--either stemming from the architecture or the implicit and explicit regularizers used during training--applying SIGReg on $|\sA|$ directions can be sufficient to tightly constrain the entire space. We note that considering the worst case over $\va$ or using low-discrepancy sequences for $\va$ does not impact the asymptotic bounds, details provided in \cref{sec:low_discrepancy}.

\subsubsection*{SGD Beats the Curse of Dimensionality}

Our second argument leverages the iterative nature of DN training. 
Although we may use only $|\sA|$ to be a few hundreds, the cumulative 
number of sampled directions grows linearly with training time. This resampling effect 
(illustrated in \cref{fig:resampling}, bottom) enables rapid convergence. Even small $|\sA|$ achieves tight distributional matching compared to keeping the set $\sA$ fixed throughout minibatches (recall \cref{thm:spherical_bounds}). Our experiments show that even with 
$|\sA|$ as low as $16$ can easily outperform a fixed set with $|\sA|$ of order of thousands thanks to the compounding effect of resampling at each minibatch.

\subsubsection*{Empirical Validation on Synthetic Data}
\label{sec:exp_tests}

We conclude this section with a controlled experiment applying \eqref{eq:BCS} with gradient-based training to produce isotropic embeddings. In this setup, we directly consider embeddings $\mZ$ which we will differentiate and optimized to minimize \eqref{eq:BCS}. By directly optimizing the embeddings we are able to observe the impact of the loss without any possible constraint and regularization that would come from the architecture.
We sample $N$ i.i.d. samples $\vx_{n}$ in a $D$-dimensional space. This sampling is based on an isotropic Gaussian distribution--but the first two dimensions are again set to the adversarial ``X'' shape. That is, among the $D$ dimensions, only two must be transformed as all the other ones already obey the isotropic Gaussian target distribution. We then make the samples $\vx_{n}$ differentiable and optimize then to minimize the value of the different statistical tests compute on $M$ random $M$ random directions. Those directions are resampled after each gradient step--which follows the procedure we will employ in LeJEPA. We present the results in \cref{fig:nonparametric} demonstrating that even in challenging case, i.e., $D=512$ and $M=16$, SIGReg is able to detect the two degenerate dimensions and unfold them back to how they should look like under the target distribution.

\section{LeJEPA: Stable and Scalable Implementation}
\label{sec:lejepa}

Having established that isotropic Gaussians are the optimal embedding distribution 
for foundation models (\cref{sec:gaussian}) and introduced SIGReg to achieve this 
distribution (\cref{def:bcs}), we now present the complete LeJEPA framework. We 
first evaluate candidate statistical tests (\cref{sec:moment_tests,sec:cdf_tests}) 
and identify characteristic function-based tests as optimal for gradient-based 
training (\cref{sec:cf_tests}). The full LeJEPA implementation follows in 
\cref{sec:lejepa_implementation}.

\subsection{LeJEPA: SIGReg + Prediction Loss}
\label{sec:lejepa_implementation}

\begin{lstfloat}[t!]
\begin{lstlisting}[caption={LeJEPA implementation--works out-of-the-box on any dataset, with DDP, with any backbone, e.g., torchvision or timm. For non-ViT architectures (e.g., ResNet), set global\_views = all\_views. We use bs for the minibatch size, SIGReg is from \cref{lst:epps-pulley-pytorch}.},label={code:lejepa}]
def LeJEPA(global_views, all_views, lambd):
    """global_views and all_views are lists of tensors, lambd is a scalar"""

    # embedding of global views
    g_emb = forward(torch.cat(glob_views))
    # embedding of local views
    # if resnet: skip with a_emb=g_emb
    a_emb = forward(torch.cat(all_views))
    
    # LeJEPA loss
    centers = g_emb.view(-1, bs, K).mean(0)
    a_emb = a_emb.view(-1, bs, K)
    sim = (centers - a_emb).square().mean()
    sigreg = mean(SIGReg(emb, global_step) for emb in a_emb)
    return (1-lambd)*sim + lambd*sigreg
\end{lstlisting}
\end{lstfloat}

We now discuss the implementation of LeJEPA starting with SIGReg and followed by the prediction and total losses.

{\bf The SIGReg Loss.}~We chose \eqref{eq:epps_pulley} for its provable boundedness (\cref{thm:ecf_stability}) and its scalability. Its implementation follows exactly the equation except for the integrate which is estimated using a quadrature approximation. We find that the simple trapezoidal quadrature rule is sufficient even with as few knots as $17$, as ablated in \cref{fig:quadrature}. In particular, we leverage the symmetry of the integrand to double the number of knots for free, see the official code. On the other hand, the use of minibatches introduces a bias vanishing at rate $\mathcal{O}(1/N)$, as formalized below.

\begin{theorem}[label={thm:gradient_bias}]{Vanishing gradient bias}{}
The expectation of \eqref{eq:epps_pulley} satisfies
\[
\mathbb{E}\left[\widehat{L}_n(\theta)\right]
=
L(\theta)+\frac{1}{N}\int_{\mathbb{R}} w_s(t)\big(1-|\varphi_P(t)|^2\big)dt,
\]
therefore both the loss and its derivative have a bias of order $O(1/n)$. (Proof in \cref{proof:gradient_bias}.)
\end{theorem}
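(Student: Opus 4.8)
Fix a direction $\va\in\sA$ and write $z_j\triangleq\va^\top f_\theta(\vx_j)$, $j=1,\dots,N$, for the one‑dimensional pushforward samples---i.i.d.\ because the $\vx_j$ are. Let $\varphi_P(t)\triangleq\mathbb E[e^{itz_1}]$ be their true characteristic function, $\varphi_{\mathcal N}(t)=e^{-t^2/2}$ the target CF, and $\widehat\varphi(t)=\frac1N\sum_{j=1}^N e^{itz_j}$ the empirical CF, so that
\[
\widehat L_n(\theta)=\int_{\mathbb R} w_s(t)\,\big|\widehat\varphi(t)-\varphi_{\mathcal N}(t)\big|^2\,dt,\qquad L(\theta)=\int_{\mathbb R} w_s(t)\,\big|\varphi_P(t)-\varphi_{\mathcal N}(t)\big|^2\,dt.
\]
The plan is (i) a pointwise bias--variance split of $|\widehat\varphi(t)-\varphi_{\mathcal N}(t)|^2$; (ii) Fubini to integrate over $t$, which already yields the loss identity; (iii) dominated convergence to commute $\nabla_\theta$ with $\mathbb E$ and with $\int dt$, from which the $O(1/N)$ gradient bias is read off.

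For (i): $\mathbb E[\widehat\varphi(t)]=\varphi_P(t)$ by linearity, so decomposing $\widehat\varphi-\varphi_{\mathcal N}=(\widehat\varphi-\varphi_P)+(\varphi_P-\varphi_{\mathcal N})$ and expanding the squared modulus, the cross term vanishes in expectation since the random factor $\widehat\varphi(t)-\varphi_P(t)$ has mean zero while $\varphi_P-\varphi_{\mathcal N}$ is deterministic, and independence together with $|e^{itz_1}|=1$ give $\mathbb E|\widehat\varphi(t)-\varphi_P(t)|^2=\tfrac1N\Var(e^{itz_1})=\tfrac1N(1-|\varphi_P(t)|^2)$. Hence pointwise $\mathbb E|\widehat\varphi(t)-\varphi_{\mathcal N}(t)|^2=|\varphi_P(t)-\varphi_{\mathcal N}(t)|^2+\tfrac1N(1-|\varphi_P(t)|^2)$. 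For (ii): since $|\widehat\varphi(t)-\varphi_{\mathcal N}(t)|^2\le 4$ a.s.\ and $w_s\in L^1(\mathbb R)$ (Gaussian window), Fubini integrates the last display against $w_s$ to give exactly $\mathbb E[\widehat L_n(\theta)]=L(\theta)+\frac1N\int_{\mathbb R} w_s(t)\big(1-|\varphi_P(t)|^2\big)\,dt$; the SIGReg average over $\va\in\sA$ is a finite convex combination of such identities, preserving the form.

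For (iii): under the stated regularity of $\theta\mapsto f_\theta(\vx)$ (differentiable, with Jacobian dominated by an integrable envelope in $\vx$), combined with \cref{thm:ecf_stability} which bounds the integrand's sample‑gradient $|\partial_{z_i}EP|$ and curvature uniformly, dominated convergence justifies differentiating the loss identity through $\mathbb E$ and through $\int dt$, giving $\mathbb E[\nabla_\theta\widehat L_n(\theta)]=\nabla_\theta L(\theta)+\frac1N\nabla_\theta\int_{\mathbb R} w_s(t)\big(1-|\varphi_P(t)|^2\big)\,dt$. Because $\nabla_\theta|\varphi_P(t)|^2=2\operatorname{Re}\!\big(\overline{\varphi_P(t)}\,\mathbb E[it\,e^{itz_1}\va^\top J_{f_\theta}(\vx_1)]\big)$ is, after integration against the Gaussian weight $w_s$, a bounded vector (the stray factor $|t|$ is absorbed by $w_s$ and the Jacobian norm by the envelope), the correction term and its $\theta$‑gradient are both $O(1/N)$; therefore $\widehat L_n$ and $\nabla_\theta\widehat L_n$ are biased estimators of $L$ and $\nabla_\theta L$ with bias of order $O(1/N)$. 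The computations in (i)--(ii) are routine; the real obstacle is (iii): verifying the domination conditions that license the two interchanges, and certifying that the $\theta$‑derivative of the correction integral does not grow with $t$---precisely where \cref{thm:ecf_stability}'s uniform bounds and the smoothness of $f_\theta$ enter.
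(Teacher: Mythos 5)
Your proposal is correct and follows essentially the same route as the paper: the pointwise identity $\mathbb{E}|\widehat\varphi(t)-\varphi_{\mathcal N}(t)|^2=|\varphi_P(t)-\varphi_{\mathcal N}(t)|^2+\tfrac1N\bigl(1-|\varphi_P(t)|^2\bigr)$ (you obtain it via a bias--variance split, the paper via direct expansion of the double sum $\tfrac{1}{N^2}\sum_{j,l}\mathbb E[Z_j\overline{Z_l}]$ --- algebraically the same computation), followed by Fubini over $t$ and dominated convergence to pass the bias to the gradient. Your step (iii) is in fact more careful than the paper's, which simply invokes dominated convergence without spelling out the domination conditions.
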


Hence, the gradients we obtain from using \eqref{eq:epps_pulley} are biased by an explicit $\mathcal{O}(1/N)$ term. We found this bias to be minimal and not a concern even for minibatches as small as 16. Unbiased alternatives include using U-statistic debiasing of $|\phi_\theta|^2$ or sample splitting, which we do not explore in this study. Our final implementation of the SIGReg term with Epps-Pulley statistic is provided in \cref{lst:epps-pulley-pytorch}.

{\bf The Prediction Loss.}~To standardize notations, we adopt the DINO \citep{caron2021emerging} setup of generating $V_{g}$ global views and $V_{l}$ local views, leading to a total of $V=V_{g}+V_{l}$ views. We set the first $1,\dots,V_{g}$ indices of each $\vz_{n,v}$ as the global views. For the cases without local views, simply set $V_{l}=0$. The prediction loss is then given by having all views predict the global views as
\begin{align}
    \mathcal{L}_{\rm pred}(\{\vz_{n,v}\}_{v=1}^{V})=&\frac{1}{V_g}\sum_{v=1}^{V_g}\frac{1}{V}\sum_{v'=1}^{V}\left\| \vz_{n,v} - \vz_{n,v'} \right\|_2^2\label{eq:pred1}\\
    =&\frac{1}{V}\sum_{v'=1}^{V}\left\| \frac{1}{V_{\rm g}}\sum_{v=1}^{V_{\rm g}}\vz_{n,v} - \vz_{n,v'} \right\|_2^2\label{eq:pred2}\\
    \triangleq& \frac{1}{V}\sum_{v'=1}^{V}\left\| \bm{\mu}_{n} - \vz_{n,v'} \right\|_2^2,\label{eq:pred_as}
\end{align}
where we denote $\bm{\mu}_n \triangleq \frac{1}{V_g}\sum_{v=1}^{V_g}\vz_{n,v}$, the \cref{eq:pred1} to \cref{eq:pred2} derivations are detailed in \cref{proof:pred}.

{\bf LeJEPA Loss.}~The final total loss simply combines the above prediction loss along with SIGReg on each views as per
\begin{multline}
    \mathcal{L}_{\rm LeJEPA}(\{\vx_{n,v}\}_{n,v=1}^{B,V})=\frac{\lambda}{V}\sum_{v=1}^{V}{\rm SIGReg}(\{\{\vz_{n,v}\}_{n=1}^{B}\})\\
    +\frac{1-\lambda}{B}\sum_{n=1}^{B}\mathcal{L}^{(V_{\rm g})}_{\rm pred}(\{\vz_{n,v}\}_{v=1}^{V}).\tag{LeJEPA}\label{eq:lejepa}
\end{multline}

We present \eqref{eq:lejepa}'s implementation in \cref{code:lejepa}. Altogether, the entire implementation--besides the usual model definitions, optimizers, and data loaders--only takes a few dozens lines in PyTorch (\cref{lst:epps-pulley-pytorch,code:lejepa}). The absence of prototypes, stop-gradients, and teacher-student networks 
makes \eqref{eq:lejepa} appealing as it only contains one hyperparameter, $\lambda$, balancing the trade-off between the prediction and isotropic Gaussian terms.

\subsection{Relation to Prior Work}

Prior to presenting our experiments (\cref{sec:experiments}), we conclude  by discussing how our proposed LeJEPA and SIGReg objective relate to existing frameworks in the literature.

While there is no existing solution employing such slicing and distribution matching for JEPAs, there exists similar pipelines for generative models and optimal transport. Notably, the Sliced Score Matching \citep{song2020sliced} proposes to leverage univariate slicing of the space to ease the estimation of a density for generative models. In a similar vein, the sliced Wasserstein distance \citep{bonneel2015sliced,nguyen2023energy} uses such strategy to speed up and improve optimal transport. Furthermore, when the integral of the \eqref{eq:epps_pulley} test is computed exactly, as opposed to our quadrature, each slice loss value recovers the kernel MMD \citep{sriperumbudur2010hilbert,gretton2012kernel,chwialkowski2016kernel} measuring the distance between two distributions--albeit with a quadratic complexity. Lastly, it is possible to recover some existing SSL frameworks in the limit by employing LeJEPA with a particular test--instead of the preferred \eqref{eq:epps_pulley}. For example,  
Setting $T(\{x_n\}_{n=1}^{B})={\rm mean}(\{x_n\}_{n=1}^{B})^2+({\rm std}(\{x_n\}_{n=1}^{B})-1)^2$ and using that $T$ with SIGReg in LeJEPA recovers the VICReg SSL method in the limit of large number of slices. In fact, SIGReg will enforce in expectation that  $\mathbb{E}[\mathbf{Z}] = \mathbf{0}$ and $\mathrm{Cov}(\mathbf{Z}) = \mathbf{I}_d$, where $\mathbf{I}_d$ denotes the $d \times d$ identity matrix--derivations provided in \cref{proof:vcreg}. And since our invariance term is simply the $\ell_2$ distance between the views' embeddings, LeJEPA recovers VICReg for this degenerate statistical test. Based on \cref{thm:moment_conendrum}, we however strongly advocate against such a setting as it would lead to shortcut solutions--a phenomenon already observed in VICReg.

\section{LeJEPA: Empirical Validation}
\label{sec:experiments}

We now use the LeJEPA implementation described in \cref{sec:lejepa_implementation} 
to demonstrate its effectiveness through comprehensive experiments. We show that 
LeJEPA: (i) trains reliably across diverse architectures and datasets 
(\cref{sec:hparams}), (ii) provides an informative training loss for model 
selection (\cref{sec:loss_corr}), (iii) outperforms frontier vision models on 
small-scale in-domain pretraining (\cref{sec:galaxy}), (iv) scales successfully 
to nearly 1 billion parameters on 
ImageNet-1k (\cref{sec:scale}), and (v) learns rich semantic segmentation 
features without explicit supervision.

\subsection{LeJEPA's Stability Across Hyper-Parameters and Architectures}
\label{sec:hparams}

We now demonstrate LeJEPA's stability across hyperparameters, architectures, 
and experimental setups. Additional cross-domain stability results are 
presented in \cref{sec:galaxy}.

\begin{figure}[t!]
    \centering
    \includegraphics[width=\linewidth]{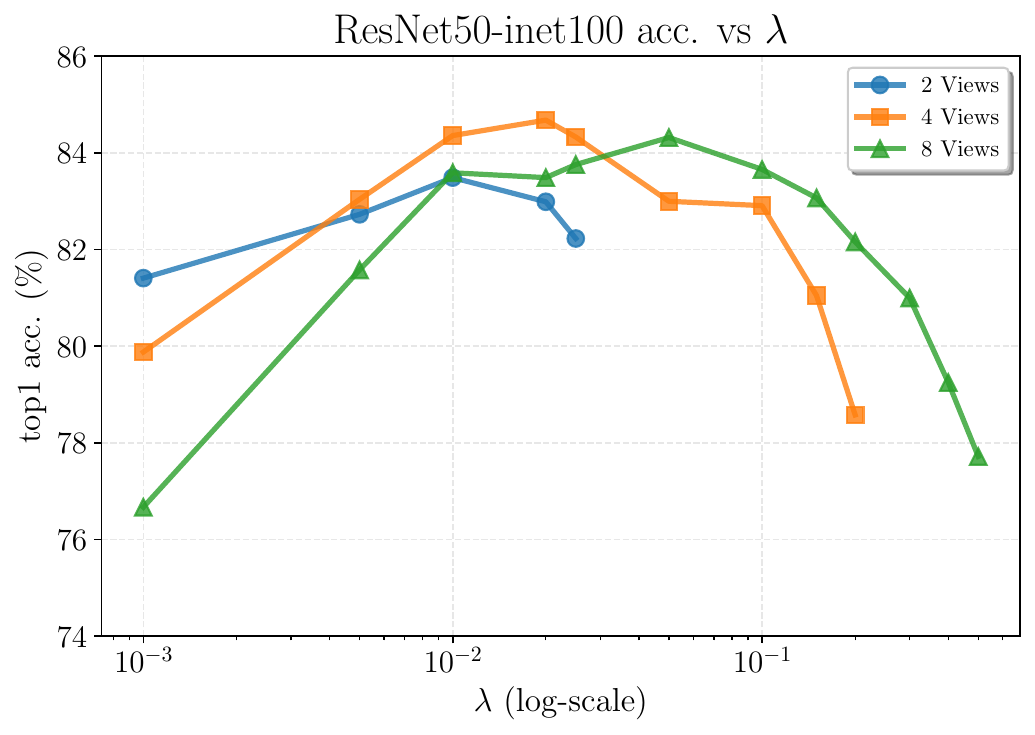}
    \caption{Inet100 with $400$ pretraining epochs and resnet50 backbone. We depict linear probe performances as a function of $\lambda$ and the number of views $V$ (recall \eqref{eq:lejepa}). We observe that performances are stable over $\lambda$--with {\bf peak performance obtain by slightly adjust $\lambda$ proportionally to the number of views}. The corresponding performance values are  provided in \cref{tab:lambda_perf}.}
    \label{fig:lambda_views}
\end{figure}
\begin{table}[t!]
\centering
\caption{ViT/Large-14, on inet1k pretraining for 100 epochs and evaluated with frozen backbone linear probing (top1 accuracy, \%).{\bf LeJEPA's performance is stable across all its hyperparameters} and while some may slightly improve performance, e.g., the number of slices $|\sA|$ and the projector sizes, none of the choices lead to a catastrophic collapse.}

\label{tab:ablations}
\vspace{0.5em}
\textbf{(a) \eqref{eq:epps_pulley} parameters}
\vspace{0.5em}
\begin{tabular}{lllll}
\toprule
integration & num\_slices & \multicolumn{3}{c}{config/bstat\_n\_points} \\
\cmidrule(lr){3-5}
 &  & 5 & 17 & 41 \\
\midrule
$[-1,1]$ & 512 & 71.82 & 72.13 & 72.04 \\
  & 2048 & 72.88 & 72.30 & 72.69 \\
$[-3,3]$ & 512 & 73.95 & 74.16 & 74.04 \\
  & 2048 & 75.02 & 74.68 & 74.77 \\
$[-5,5]$ & 512 & 73.71 & 74.21 & 74.15 \\
  & 2048 & 74.50 & 74.80 & 74.77 \\
\bottomrule
\end{tabular}
\\
\vspace{0.5em}
\textbf{(b) Number of local/global views}
\vspace{0.5em}
\begin{tabular}{llll}
\toprule
\# global\_views ($V_{\rm g}$) & 1 & 2 & 4 \\
\# views ($V=V_{\rm g}+V_{\rm l}$) &  &  &  \\
\midrule
4 & 53.06 & 72.26 & – \\
6 & 58.65 & 73.07 & 73.68 \\
8 & 64.46 & 74.24 & 73.94 \\
10 & 68.97 & 74.06 & 75.08 \\
\bottomrule
\end{tabular}
\\
\vspace{0.5em}
\textbf{(c) Mini-batch size}
\vspace{0.5em}
\begin{tabular}{lllll}
\toprule
batch\_size & 128 & 256 & 512 & 1024 \\
&  &  &  &  \\
\midrule
& 72.20 & 74.15 & 74.72 & 74.07 \\
\bottomrule
\end{tabular}
\\
\vspace{0.5em}
\textbf{(d) Embedding/Projector dim.}
\vspace{0.5em}
\begin{tabular}{lllll}
\toprule
num\_slices & \multicolumn{2}{c}{1024} & \multicolumn{2}{c}{4096} \\
emb. dim. & 512 & 2048 & 512 & 2048 \\
proj. dim. &  &  &  &  \\
\midrule
64 & 75.29 & 75.32 & 75.50 & 75.65 \\
128 & 74.77 & 75.09 & 75.26 & 75.47 \\
256 & 74.56 & 74.66 & 75.08 & 75.02 \\
512 & 73.94 & 74.11 & 74.81 & 74.65 \\
1024 & 73.65 & 73.94 & 74.71 & 74.79 \\
\bottomrule
\end{tabular}
\\
\vspace{0.5em}
\textbf{(e) Register tokens}
\vspace{0.5em}
\begin{tabular}{llllll}
\toprule
reg\_tokens & 0 & 1 & 2 & 4 & 8 \\
num\_slices &  &  &  &  &  \\
\midrule
1024 & 75.14 & 75.18 & 75.08 & 75.34 & 75.23 \\
4096 & 75.61 & 75.58 & 75.67 & 75.63 & 75.84 \\
\bottomrule
\end{tabular}
\end{table}

\begin{figure*}[t!]
    \centering
    \includegraphics[width=\linewidth]{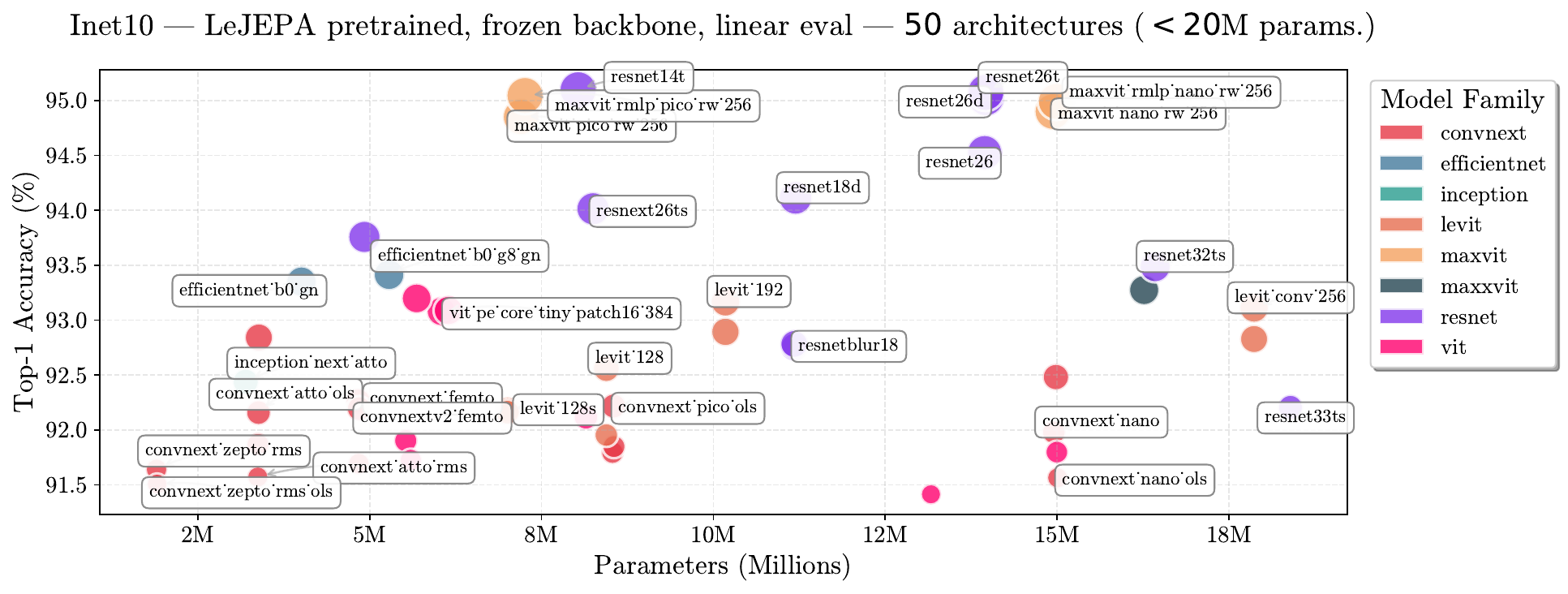}
    \caption{INet10 pretraining and frozen backbone linear evaluation across $50$ timm models using LeJEPA out of the box. We cross-validate the learning rate and weight-decay. While there is a small variation between the best and worst performing model, we clearly see that {\bf across $50$ models spanning $8$ families, LeJEPA is able to produce non-trivial representations able to solve the downstream task at SOTA levels}.}
    \label{fig:inet10_backbones}
\end{figure*}

{\bf Stability across standard hyperparameters.}~We begin by evaluating LeJEPA 
on ImageNet-100 and ImageNet-1K. On ImageNet-100, we train a ResNet-50 and 
vary the number of views and the loss weighting $\lambda$ (\cref{fig:lambda_views}). 
Performance remains stable across both dimensions, leading us to recommend 
$\lambda=0.05$ as a robust default.
On ImageNet-1K, we train a ViT-Large/14 and explore batch size, as well as 
the number of global ($V_{\rm g}$) and local ($V_{\rm l}$) views 
(\cref{tab:ablations}b). We find that the configuration commonly used in 
prior work ($V_{\rm g}=2, V_{\rm l}=8$) transfers well to LeJEPA. Notably, 
LeJEPA achieves competitive performance with batch sizes as small as 128 
on ImageNet-1K (\cref{tab:ablations}c), suggesting reduced memory requirements compared to existing 
methods. {\em We thus recommend to use  $\lambda=0.05$, $V_{\rm g}=2$, $V_{\rm l}=8$, and 
batch size $\geq 128$ as starting points}.

{\bf Stability across Epps-Pulley hyperparameters.}~We next examine 
hyperparameters specific to LeJEPA: the number of slices $|\mathcal{A}|$ 
in SIGReg, the integration domain for the Epps-Pulley test \eqref{eq:epps_pulley}, and the number of quadrature points for numerical integration. 
\cref{tab:ablations}a shows ablations on ImageNet-1K with ViT-Large/14.
Both the integration domain and number of quadrature points have negligible 
impact on performance. This is expected: since the characteristic function 
is accurate at zero, the moments of the distribution are well-characterized 
even with a modest integration range. The number of slices $|\mathcal{A}|$ 
has a modest effect—while more slices slightly improve performance, even 
512 slices yield competitive results. {\em We thus recommend to use 17 integration points, an integration domain of 
$[-5,5]$, and 1024 slices as starting points}.

{\bf Stability across architectures.}~A key advantage of LeJEPA over recent 
methods (e.g., IJEPA, DINOv2) is its architecture-agnostic design. While 
most modern self-supervised methods are tailored to Vision Transformers, 
LeJEPA works across diverse architecture families without modification.
To validate this claim, we pretrain approximately 50 architectures from 
8 different families on ImageNet-10, selecting all models in the timm 
library with fewer than 20M parameters. All models are able to learn high-quality representations reaching between 91.5\% to 95\% top 1 accuracy with frozen backbone linear probing. It seems that models performing well in supervised learning setups are also the ones to favor for LeJEPA, such as resnets and ViTs. {\em We thus recommend to use standard architectures such as ResNets and ViTs over specialized models like EfficientNet as stating point.}

{\bf Removal of popular heuristics.}~In addition to providing reliable performance across models and datasets, LeJEPA's provable construction enables us to {\em remove} many heuristics traditionally used to prevent collapse. First, prior work has shown both empirically and theoretically that predictors in image JEPA (without asymmetric information) and teacher-student architectures serve primarily to prevent collapse \citep{grill2020bootstrap,jing2021understanding,tian2021understanding,caron2021emerging,chen2021empirical}. Removing these components produces {\em collapsed} encoders, i.e., with performances at chance-level. Thanks to LeJEPA's SIGReg loss, we can remove both the predictor and teacher-student architecture without suffering from collapse, as shown in \cref{tab:proj_pred}. While a teacher-student configuration does provide a small performance boost for ViT models—consistent with observations in supervised learning via Stochastic Weight Averaging \citep{izmailov2019averagingweightsleadswider}—it is not necessary to prevent collapse. In our setup, we apply SWA on the encoder producing $\mu$ in \cref{eq:pred2}. Second, recent work demonstrated that register tokens are needed to prevent training instabilities in vision models \citep{oquab2023dinov2,simeoni2025dinov3,darcet2023vision}. We show in \cref{tab:ablations} that such instabilities likely stem from poorly conditioned training objectives. In contrast, LeJEPA {\em does not} require register tokens and achieves stable performance with or without them. {\em We thus recommend training without a predictor or register tokens, and optionally applying SWA with ViTs for a possible performance gain.}

\begin{detail}[label={def:detail}]{}{}
We strive for {\bf simplicity} and thus adopt a unified pretraining pipeline. The following parameters apply to {\em all} experiments and figures unless stated otherwise in the corresponding caption and come from \cref{sec:hparams}:
\begin{itemize}[nosep,before=\vspace{-0.1em},after=\vspace{-0.4em}]
    \item LeJEPA's implementation is given in \cref{code:lejepa} with hyperparameter $\lambda$ 
    \item All backbones are from ${\rm timm}$ and all optimizers/schedulers are from ${\rm PyTorch}$ without modifications
    \item We employ eight views ($V=8$) containing two global views ($V_{\rm g}=2$) with resolution 224x224 and 96x96 for the local views
    \item AdamW optimizer with ${\rm lr} \in \{5e-3,5e-4\}$ and ${\rm wd} \in \{1e-1,1e-2,1e-5\}$--no scheduler on weight-decay, standard linear warm-up cosine-annealing for ${\rm lr}$
    \end{itemize}
\end{detail}

\subsection{LeJEPA's Training Loss is Informative of Downstream Performance}
\label{sec:loss_corr}

\begin{figure*}[t!]
    \includegraphics[width=0.33\linewidth]{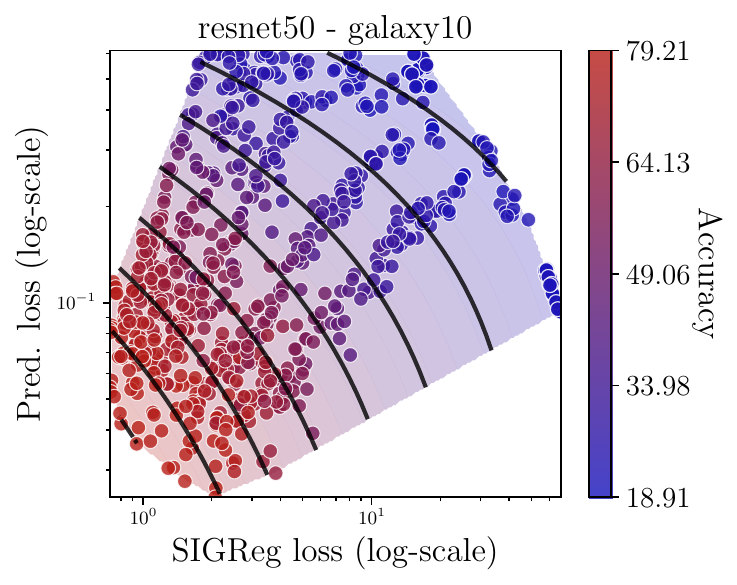}
    \includegraphics[width=0.33\linewidth]{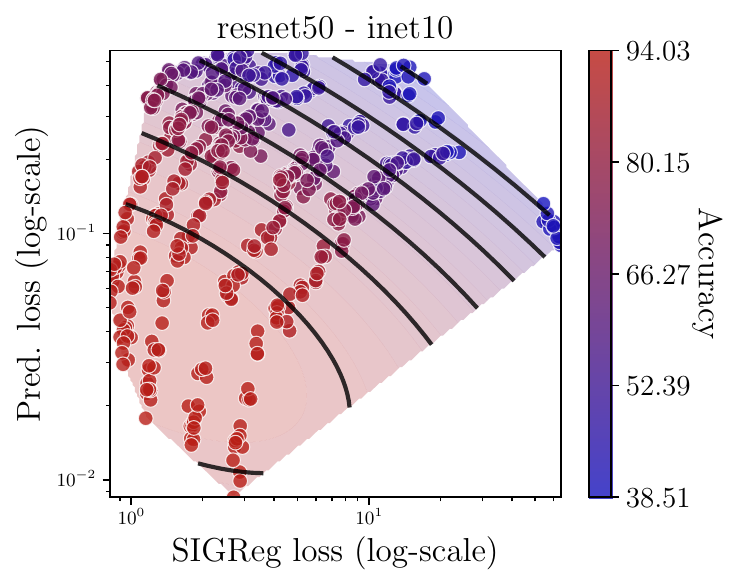}
    \includegraphics[width=0.33\linewidth]{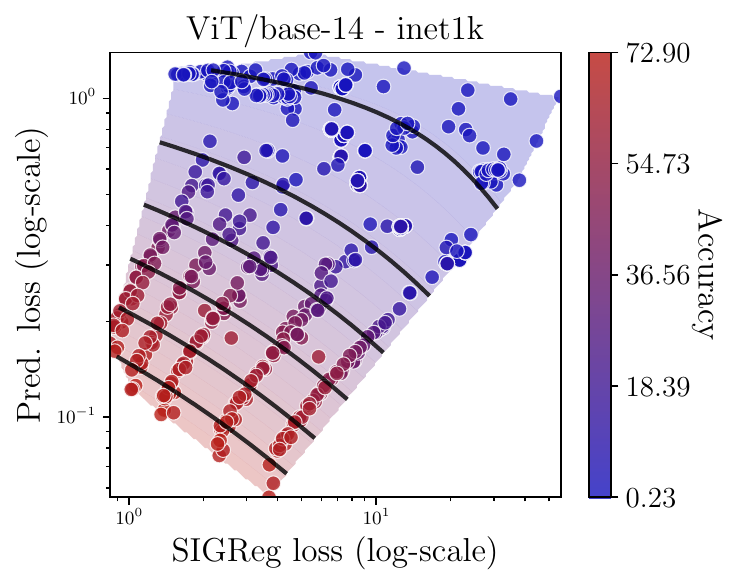}
    \caption{({\rm SIGReg, prediction loss)} $2d$-plane with downstream task accuracy shown with colors from {\bf blue} (low) to {\bf red} (high). We clearly observe that within this plane, {\bf there exists trade-off fronts between the two terms of LeJEPA producing similar downstream performance} corresponding to different values of $\lambda$. Yet, those fronts are linear and pointed towards the lower left corner, i.e., LeJEPA's training loss informs of downstream test performance across models and datasets ({\bf columns}). Additional models and datasets provided in \cref{fig:extra_heatmap_loss}.}
    \label{fig:heatmap_loss}
\end{figure*}

\begin{figure}[t!]
\includegraphics[width=\linewidth]{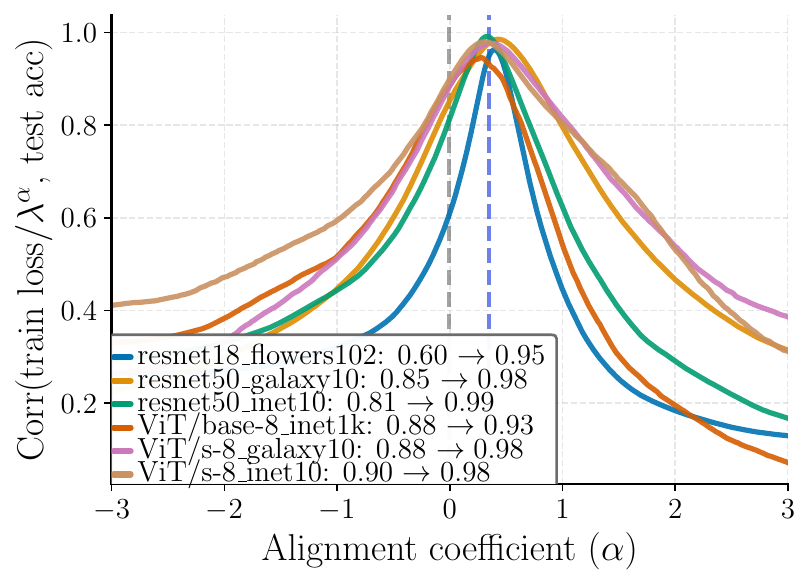}
    \caption{Spearman correlation ({\bf y-axis)} between LeJEPA's training loss and downstream accuracy on the dataset's classification task with a frozen backbone and linear evaluation. The {\bf x-axis} varies $\alpha$ in \cref{eq:corr} following our scaling law of the loss w.r.t. $\lambda$. Using $\alpha=0$ recovers the plain training loss. We clearly observe a very high correlation already for $\alpha=0$, which further increases up to $99\%$ for $\alpha=0.4$. The entire set of points is obtained across numerous hyper-parameters such as learning rate, weight decay, number of epochs, $\lambda$--demonstrating how {\bf LeJEPA's training loss is strongly predictive of downstream performance} which can be used for label-free cross-validation.}
    \label{fig:alignment_loss}
\end{figure}

A major challenge in SSL pretraining is the lack of reliable 
signals conveying the quality of the learned representation. As a result, it is common to monitor a supervised downstream task 
performance, sometimes supplemented with 
unsupervised embedding statistics \citep{agrawal2022alpha,garrido2023rankme,
thilak2023lidar}. This process is highly limiting since it requires labeled data that is costly and overly specialized. This is further exacerbated in the latest JEPA models where 
training losses exhibit low correlation with downstream performance--and 
may not even decrease monotonically during training.

In contrast, we find that LeJEPA's training loss behaves much more favorably--providing us with a meaningful signal on model quality. First, we provide in \cref{fig:heatmap_loss}, the 2D plane spanned by 
the SIGReg and prediction losses where a clear trend with downstream task accuracy can be observed. More strikingly, the combined training loss \eqref{eq:lejepa} with mixing coefficient $\lambda$
exhibits very high Spearman correlation \citep{spearman1961proof}, denoted as  $\rho_s$, of about $85\%$ with 
downstream accuracy--which is considered a strong signal. This strong relationship holds across datasets and architectures. As a result, a lower LeJEPA training loss reliably indicates a better downstream 
performance.

We can further improve this correlation through a simple scaling law based upon the trade-off weighting hyperparameter $\lambda$
\begin{align}
C^{(\alpha)} = \rho_s\left(\frac{\text{train\_loss}}{\lambda^{\alpha}}, 
\text{test\_accuracy}\right). \label{eq:corr}
\end{align}
By setting $\alpha \approx 0.4$, LeJEPA's training loss is able to achieve nearly 99\% correlation 
with downstream performance across multiple datasets and models. We depict the changes in $C^{(\alpha)}$ as a function of $\alpha$ on multiple datasets and models in 
\cref{fig:alignment_loss}, as well as the training LeJEPA loss against downstream performance in \cref{fig:corr_loss_extra}.  
{\bf The strong alignment between LeJEPA's training loss and model quality enables label-free SSL model selection and cross-validation}.

\subsection{In-Domain LeJEPA Outperforms Frontier Model Transfer Learning}
\label{sec:galaxy}

\begin{figure*}[t!]
    \centering
    \includegraphics[width=\linewidth]{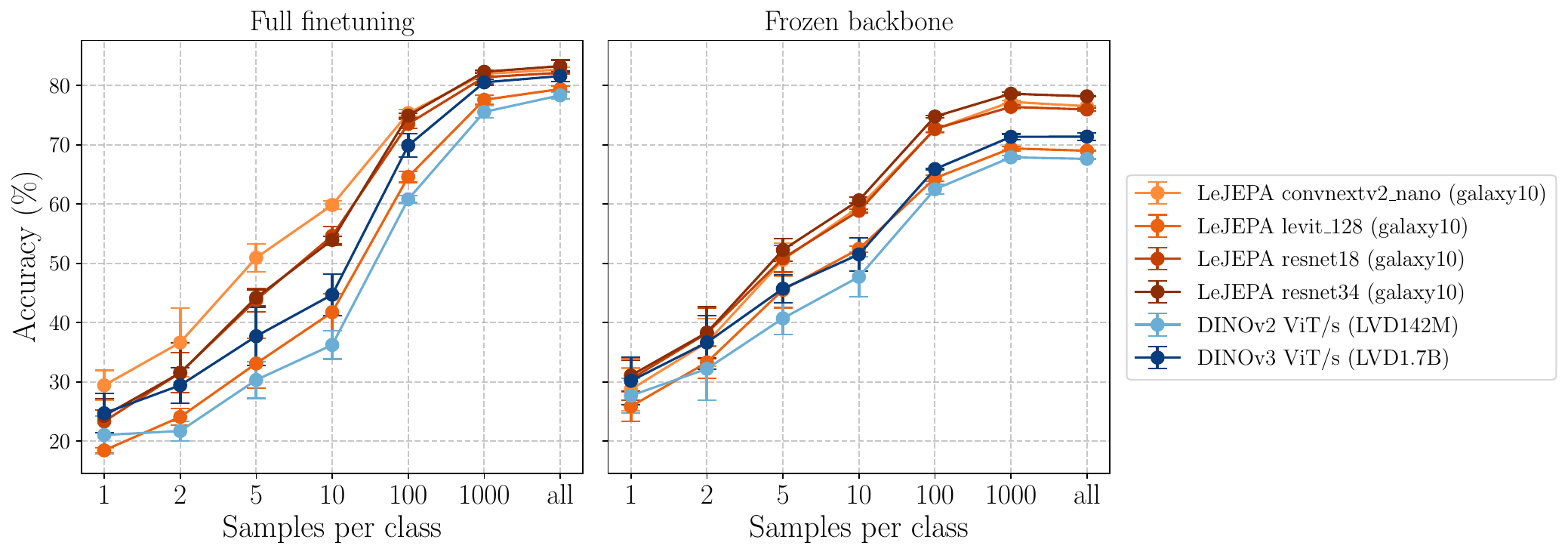}
    \caption{{\bf Small architecture in-domain (Galaxy10) LeJEPA pretraining} with linear probe evaluation using frozen backbone or full finetuning ({\bf columns}) and with varying number of samples per  class ({\bf x-axis)}. We compare against state-of-the-art foundation models (DINOv2/v3, IJEPA) over $3$ different random seeds. We observe that {\bf LeJEPA enables in-domain pretraining out of the box across architectures and able to outperform frontier foundation models}. Corresponding numbers are provided in \cref{tab:galaxy10}.}
    \label{fig:galaxy10}
\end{figure*}

\begin{table*}[t!]
\small
\centering
\setlength{\tabcolsep}{3pt}
\caption{Few-shot classification accuracy (percentages) on 8 datasets spanning textures, objects, and fine-grained categories. \textbf{Our LeJEPA achieves superior performance on fine-grained tasks (DTD, flowers102, food101) while requiring only 100 pretraining epochs compared to I-JEPA's 300 epochs—a 3× reduction in training time and computational resources without sacrificing downstream task performance.} This efficiency gain is particularly valuable for practical applications where training budget is limited. Bold indicates best performance within the IN-1K comparison group, all numbers are percentages.}
\label{tab:transfer}
\begin{tabular}{llllllrrrrrrr|r}
\toprule
 & & & & & \multicolumn{9}{c}{Dataset} \\
\cmidrule(lr){6-14}
shots & model & params & pretrain & epochs & DTD & aircr. & cars & cifar10 & cifar100 & flowers102 & food & pets & avg. \\
\midrule
\multirow[c]{5}{*}{1} & LeJEPA ViT-L & 304M & IN-1K & 100 & \textbf{33.21} & 9.37 & 3.40 & 51.65 & 27.01 & 48.53 & 17.14 & 46.11 & 29.55 \\
 & LeJEPA ConvNeXtV2-H & 660M & IN-1K & 100 & 32.15 & 8.07 & 4.28 & 50.95 & 31.48 & \textbf{48.74} & \textbf{17.95} & 58.98 & 31.58 \\
 & I-JEPA ViT-H & 632M & IN-1K & 300 & 27.71 & 9.86 & 4.33 & \textbf{56.52} & 30.58 & 44.69 & 14.53 & 53.38 & 30.20 \\
 & I-JEPA ViT-H + STOP & 632M & IN-1K & 300 & 26.60 & \textbf{11.18} & \textbf{4.75} & 56.27 & \textbf{35.20} & 47.17 & 15.75 & \textbf{59.47} & 32.05 \\
\cmidrule{2-14}
 & \textit{I-JEPA ViT-H (22K)} & \textit{632M} & \textit{IN-22K} & \textit{900} & \textit{27.98} & \textit{13.00} & \textit{3.45} & \textit{61.84} & \textit{34.70} & \textit{89.72} & \textit{19.62} & \textit{30.86} & \textit{35.15} \\
\midrule
\multirow[c]{5}{*}{10} & LeJEPA ViT-L & 304M & IN-1K & 100 & \textbf{64.72} & 35.25 & 22.25 & 85.15 & 59.77 & \textbf{92.53} & \textbf{50.90} & 77.00 & 60.95 \\
 & LeJEPA ConvNeXtV2-H & 660M & IN-1K & 100 & 61.84 & 30.67 & 24.46 & 85.74 & 63.29 & 91.78 & 49.32 & 78.53 & 60.70 \\
 & I-JEPA ViT-H & 632M & IN-1K & 300 & 57.68 & 33.82 & 21.96 & 88.77 & 66.42 & 88.24 & 43.97 & 83.23 & 60.51 \\
 & I-JEPA ViT-H + STOP & 632M & IN-1K & 300 & 57.00 & \textbf{39.77} & \textbf{25.21} & \textbf{90.09} & \textbf{70.32} & 90.16 & 45.68 & \textbf{85.13} & 62.92 \\
\cmidrule{2-14}
 & \textit{I-JEPA ViT-H (22K)} & \textit{632M} & \textit{IN-22K} & \textit{900} & \textit{58.74} & \textit{43.52} & \textit{18.27} & \textit{94.83} & \textit{75.23} & \textit{98.94} & \textit{49.06} & \textit{67.66} & 63.28 \\
\midrule
\multirow[c]{5}{*}{all} & LeJEPA ViT-L & 304M & IN-1K & 100 & \textbf{78.30} & 57.01 & 57.28 & 96.50 & 83.71 & \textbf{91.21} & \textbf{82.05} & 89.74 & 79.48 \\
 & LeJEPA ConvNeXtV2-H & 660M & IN-1K & 100 & 76.60 & 52.99 & 54.88 & 96.15 & 81.34 & 91.11 & 77.64 & 89.76 & 77.56 \\
 & I-JEPA ViT-H & 632M & IN-1K & 300 & 73.32 & 56.61 & 54.47 & 97.54 & 86.42 & 86.47 & 81.02 & 92.11 & 78.50 \\
 & I-JEPA ViT-H + STOP & 632M & IN-1K & 300 & 73.87 & \textbf{61.95} & \textbf{61.27} & \textbf{98.02} & \textbf{87.78} & 88.08 & 81.72 & \textbf{92.88} & 80.70 \\
\cmidrule{2-14}
 & \textit{I-JEPA ViT-H (22K)} & \textit{632M} & \textit{IN-22K} & \textit{900} & \textit{75.67} & \textit{65.39} & \textit{49.79} & \textit{98.46} & \textit{89.95} & \textit{98.54} & \textit{81.58} & \textit{87.19} & \textit{80.82} \\
\bottomrule
\end{tabular}
\end{table*}

\begin{figure*}[t!]
    \centering
    \includegraphics[width=0.33\linewidth]{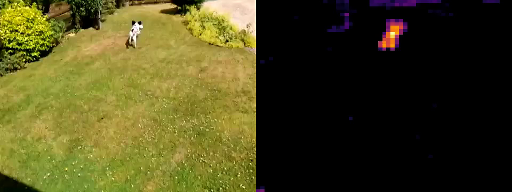}
    \includegraphics[width=0.33\linewidth]{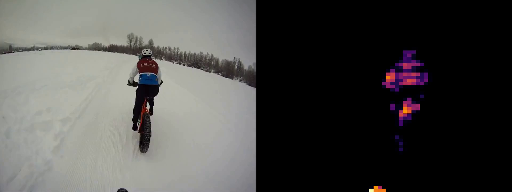}
    \includegraphics[width=0.33\linewidth]{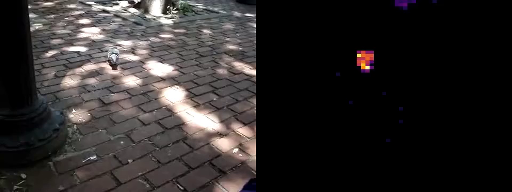}
    \caption{{\bf Emergent Object Segmentation via Last Layer Thresholding.} LeJEPA naturally learns to segment and track salient objects (shown in attention maps on the right of each video) without explicit supervision. The results display impressive visual quality and strong temporal consistency across video frames ({\em videos provided on our \href{https://rbalestr-lab.github.io/lejepa/}{project page}}). This emergent capability demonstrates the rich semantic representations learned through our self-supervised approach.}
    \label{fig:video_seg}
\end{figure*}

\begin{figure}[t!]
    \centering
    \includegraphics[width=0.49\linewidth]{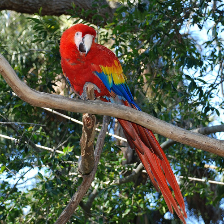}
    \includegraphics[width=0.49\linewidth]{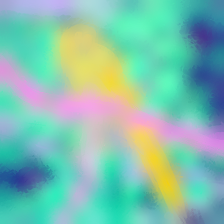}
    \\
    \includegraphics[width=0.49\linewidth]{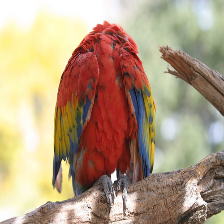}
    \includegraphics[width=0.49\linewidth]{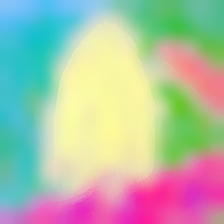}
    \\
    \includegraphics[width=0.49\linewidth]{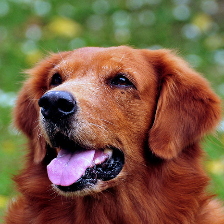}
    \includegraphics[width=0.49\linewidth]{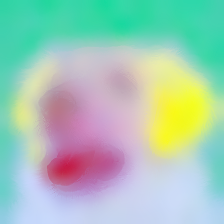}
    \caption{\textbf{LeJEPA learns rich semantic representations through self-supervised learning.} PCA visualization of last-layer features from LeJEPA (ViT-Large, 100 epochs on ImageNet-1K). For each image, features are independently projected to RGB using the first 3 principal components. Without any supervision, LeJEPA spontaneously develops semantically meaningful representations: notice how warm colors (red/magenta/pink) consistently capture foreground objects (parrot bodies, dog face), while cool colors (cyan/green/yellow) represent backgrounds and foliage. This emergent object-background separation and perceptual grouping discovered the visual structure of the world purely from unlabeled data.}
    \label{fig:attention_vis}
\end{figure}

A key promise of self-supervised learning is to learn universal representations that generalize across tasks and domains. However, current frontier foundation models (e.g., DINOv2/v3, IJEPA) are pretrained on natural images forcing practitioners in specialized domains to collect large amount of labels for supervised finetuning. In fact, most frontier models can not be trained directly on those domains as the number of samples may be small and searching again for the hyper-parameters would be cumbersome yet necessary \citep{assran2022hidden}.

To demonstrate LeJEPA's versatility and ability to resolve that current pain-point, we propose to pretrain directly on a new domain without any change in the loss or the pretraining pipeline. We select the Galaxy10 dataset, a galaxy morphology classification task that differs significantly from natural images in both visual structure and statistical properties \citep{balestriero2025gaussian}. The dataset contains 11,000 training samples across 10 galaxy types. For LeJEPA, we use the default hyper-parameters and pretrain for 400 epochs a variety of backbones. We compare against the latest DINOv2, DINOv3 and IJEPA. We report in \cref{fig:galaxy10} the top1 accuracy for linear probing both with frozen backbone and full-finetuning. We observe that {\bf in-domain pretraining with LeJEPA substantially outperforms state-of-the-art frontier models (DINOv2, DINOv3) on both linear probing and full finetuning}. Additional datasets and backbones are provided in \cref{tab:in_domain} depicting LeJEPA's ability to train in-domain, even with a dataset with $1000$ samples (flowers102). Coupling this result with the stability of LeJEPA across architectures and hyper-parameters should offer a promising alternatives in domains not yet accounted for by the latest frontier models.

\subsection{LeJEPA Scales Across Data and Models}
\label{sec:scale}

We now propose to apply LeJEPA over a larger pretraining dataset, i.e., Imagenet-1k, and over larger backbones such as ViT/Large (0.3B), ConvNextV2-Huge (0.6B). For those two models, we reach an online linear probe accuracy on inet1k of 77.1\% and 78.5\%  respectively. Beyond in-distribution performances, we also explore transfer learning. For those experiments, our baselines are IJEPA with a ViT-Huge (0.6B) which is the closest to our setup, and we also include a recent improved version of IJEPA including additional stochastic prediction tasks \citep{bar2023stochastic} that is coined IJEPA + STOP. For LeJEPA, we employ the same recipe as described in \cref{sec:hparams} and report transfer learning performances with frozen backbone in \cref{tab:transfer}. We observe that we consistently outperform IJEPA while employed a smaller model and shorted training schedule. Beyond top1 accuracy, we also echo our findings from \cref{sec:loss_corr} about LeJEPA's training loss quality. In our setup, we observe a very stable and smooth training curve indicating a stable optimization landscape removing the need for careful hyperparameter selection (recall \cref{thm:ecf_stability}). We provide an example on a ViT-gigantic (1.8B parameters) in \cref{fig:teaser}.

\subsection{Emergent Semantic Structure in LeJEPA Representations}

A hallmark of 
successful self-supervised learning is the emergence of semantically 
meaningful attention patterns without explicit supervision 
\citep{caron2021emerging}. To assess whether LeJEPA learns such structure, 
we visualize the attention maps of the learned representations. Following 
DINO \citep{caron2021emerging}, we apply PCA to the embeddings and visualize 
the first principal components, which reveal clear correspondence to object 
boundaries and salient regions (\cref{fig:attention_vis}). Furthermore, we 
explore whether these attention patterns can enable unsupervised video 
segmentation—a challenging task requiring temporal consistency and object 
understanding. By thresholding the self-attention maps of the [CLS] token, 
we obtain binary masks that track objects across frames without any 
segmentation labels during training. As shown in \cref{fig:video_seg}, 
{\bf LeJEPA's attention naturally segments foreground objects from background 
with remarkable temporal coherence}, suggesting that the learned 
representations capture both spatial semantics and temporal structure. 
This emergent capability demonstrates that LeJEPA's stability-focused 
objective does not sacrifice the semantic richness of learned features.

\section{Conclusion}
We have established a principled theoretical framework for JEPA-based self-supervised 
learning that fundamentally resolves its core pathologies. Our contributions span 
theory and practice: we proved that isotropic Gaussian embeddings uniquely minimize 
worst-case downstream risk, introduced SIGReg as a tractable and provably correct 
method to enforce this distribution, and demonstrated that this approach eliminates 
representational collapse by design--and not through ad-hoc combinations of teacher-student 
networks, stop-gradients, or asymmetric architectures.

We validate LeJEPA across domains and over $60$ architectures including gigantic versions with 1.8B parameters. In spite of its simplicify , LeJEPA matches state-of-the-art 
performance while requiring fewer than 50 lines of core implementation. Critically, 
our approach provides what SSL has long needed: a mathematically rigorous foundation 
that directly informs practical algorithm design.

\section*{Acknowledgments}
We would like to thank Mike Rabbat and
Lucas Maes for providing valuable feedbacks on the manuscript.

\bibliography{bibliography}

@article{shapiro1965analysis,
  title={An analysis of variance test for normality (complete samples)},
  author={Shapiro, Samuel Sanford and Wilk, Martin B},
  journal={Biometrika},
  volume={52},
  number={3-4},
  pages={591--611},
  year={1965},
  publisher={Oxford University Press}
}

@article{watson1961goodness,
  title={Goodness-of-fit tests on a circle},
  author={Watson, George S},
  journal={Biometrika},
  volume={48},
  number={1/2},
  pages={109--114},
  year={1961},
  publisher={JSTOR}
}

@book{dick2010digital,
  title={Digital nets and sequences: discrepancy theory and quasi--Monte Carlo integration},
  author={Dick, Josef and Pillichshammer, Friedrich},
  year={2010},
  publisher={Cambridge University Press}
}

@book{carleman1926fonctions,
  title={Les Fonctions quasi analytiques: le{\c{c}}ons profess{\'e}es au College de France},
  author={Carleman, Torsten},
  year={1926},
  publisher={Gauthier-Villars}
}

@article{lecun2022path,
  title={A path towards autonomous machine intelligence version 0.9. 2, 2022-06-27},
  author={LeCun, Yann},
  journal={Open Review},
  volume={62},
  number={1},
  pages={1--62},
  year={2022}
}

@inproceedings{he2020momentum,
  title={Momentum contrast for unsupervised visual representation learning},
  author={He, Kaiming and Fan, Haoqi and Wu, Yuxin and Xie, Saining and Girshick, Ross},
  booktitle={Proceedings of the IEEE/CVF conference on computer vision and pattern recognition},
  pages={9729--9738},
  year={2020}
}

@inproceedings{chen2020simple,
  title={A simple framework for contrastive learning of visual representations},
  author={Chen, Ting and Kornblith, Simon and Norouzi, Mohammad and Hinton, Geoffrey},
  booktitle={International conference on machine learning},
  pages={1597--1607},
  year={2020},
  organization={PmLR}
}

@inproceedings{ermolov2021whitening,
  title={Whitening for self-supervised representation learning},
  author={Ermolov, Aleksandr and Siarohin, Aliaksandr and Sangineto, Enver and Sebe, Nicu},
  booktitle={International conference on machine learning},
  pages={3015--3024},
  year={2021},
  organization={PMLR}
}

@article{bardes2021vicreg,
  title={Vicreg: Variance-invariance-covariance regularization for self-supervised learning},
  author={Bardes, Adrien and Ponce, Jean and LeCun, Yann},
  journal={arXiv preprint arXiv:2105.04906},
  year={2021}
}

@book{von1981probability,
  title={Probability, statistics, and truth},
  author={Von Mises, Richard},
  year={1981},
  publisher={Courier Corporation}
}

@article{zhang2017efficient,
  title={Efficient kNN classification with different numbers of nearest neighbors},
  author={Zhang, Shichao and Li, Xuelong and Zong, Ming and Zhu, Xiaofeng and Wang, Ruili},
  journal={IEEE transactions on neural networks and learning systems},
  volume={29},
  number={5},
  pages={1774--1785},
  year={2017},
  publisher={IEEE}
}

@article{nadaraya1964estimating,
  title={On estimating regression},
  author={Nadaraya, Elizbar A},
  journal={Theory of Probability \& Its Applications},
  volume={9},
  number={1},
  pages={141--142},
  year={1964},
  publisher={SIAM}
}

@article{bonneel2015sliced,
  title={Sliced and radon wasserstein barycenters of measures},
  author={Bonneel, Nicolas and Rabin, Julien and Peyr{\'e}, Gabriel and Pfister, Hanspeter},
  journal={Journal of Mathematical Imaging and Vision},
  volume={51},
  number={1},
  pages={22--45},
  year={2015},
  publisher={Springer}
}

@misc{izmailov2019averagingweightsleadswider,
      title={Averaging Weights Leads to Wider Optima and Better Generalization}, 
      author={Pavel Izmailov and Dmitrii Podoprikhin and Timur Garipov and Dmitry Vetrov and Andrew Gordon Wilson},
      year={2019},
      eprint={1803.05407},
      archivePrefix={arXiv},
      primaryClass={cs.LG},
      url={https://arxiv.org/abs/1803.05407}, 
}

@inproceedings{song2020sliced,
  title={Sliced score matching: A scalable approach to density and score estimation},
  author={Song, Yang and Garg, Sahaj and Shi, Jiaxin and Ermon, Stefano},
  booktitle={Uncertainty in artificial intelligence},
  pages={574--584},
  year={2020},
  organization={PMLR}
}

@article{dunning2021t,
  title={The t-digest: Efficient estimates of distributions},
  author={Dunning, Ted},
  journal={Software Impacts},
  volume={7},
  pages={100049},
  year={2021},
  publisher={Elsevier}
}

@article{dunning2019computing,
  title={Computing extremely accurate quantiles using t-digests},
  author={Dunning, Ted and Ertl, Otmar},
  journal={arXiv preprint arXiv:1902.04023},
  year={2019}
}

@article{masson2019ddsketch,
  title={Ddsketch: A fast and fully-mergeable quantile sketch with relative-error guarantees},
  author={Masson, Charles and Rim, Jee E and Lee, Homin K},
  journal={arXiv preprint arXiv:1908.10693},
  year={2019}
}

@inproceedings{tanasic2013comparison,
  title={Comparison based sorting for systems with multiple GPUs},
  author={Tanasic, Ivan and Vilanova, Llu{\'\i}s and Jord{\`a}, Marc and Cabezas, Javier and Gelado, Isaac and Navarro, Nacho and Hwu, Wen-mei},
  booktitle={Proceedings of the 6th Workshop on General Purpose Processor Using Graphics Processing Units},
  pages={1--11},
  year={2013}
}

@inproceedings{maltenberger2022evaluating,
  title={Evaluating multi-GPU sorting with modern interconnects},
  author={Maltenberger, Tobias and Ilic, Ivan and Tolovski, Ilin and Rabl, Tilmann},
  booktitle={Proceedings of the 2022 International Conference on Management of Data},
  pages={1795--1809},
  year={2022}
}

@article{nguyen2023energy,
  title={Energy-based sliced wasserstein distance},
  author={Nguyen, Khai and Ho, Nhat},
  journal={Advances in Neural Information Processing Systems},
  volume={36},
  pages={18046--18075},
  year={2023}
}

@article{watson1964smooth,
  title={Smooth regression analysis},
  author={Watson, Geoffrey S},
  journal={Sankhy{\=a}: The Indian Journal of Statistics, Series A},
  pages={359--372},
  year={1964},
  publisher={JSTOR}
}

@inproceedings{taunk2019brief,
  title={A brief review of nearest neighbor algorithm for learning and classification},
  author={Taunk, Kashvi and De, Sanjukta and Verma, Srishti and Swetapadma, Aleena},
  booktitle={2019 international conference on intelligent computing and control systems (ICCS)},
  pages={1255--1260},
  year={2019},
  organization={IEEE}
}

@inproceedings{sun2010adaptive,
  title={An adaptive k-nearest neighbor algorithm},
  author={Sun, Shiliang and Huang, Rongqing},
  booktitle={2010 seventh international conference on fuzzy systems and knowledge discovery},
  volume={1},
  pages={91--94},
  year={2010},
  organization={IEEE}
}

@book{bishop2006pattern,
  title={Pattern recognition and machine learning},
  author={Bishop, Christopher M and Nasrabadi, Nasser M},
  volume={4},
  number={4},
  year={2006},
  publisher={Springer}
}

@article{golub1999tikhonov,
  title={Tikhonov regularization and total least squares},
  author={Golub, Gene H and Hansen, Per Christian and O'Leary, Dianne P},
  journal={SIAM journal on matrix analysis and applications},
  volume={21},
  number={1},
  pages={185--194},
  year={1999},
  publisher={SIAM}
}

@article{bishop1995training,
  title={Training with noise is equivalent to Tikhonov regularization},
  author={Bishop, Chris M},
  journal={Neural computation},
  volume={7},
  number={1},
  pages={108--116},
  year={1995},
  publisher={MIT Press}
}

@article{abu2019effects,
  title={Effects of distance measure choice on k-nearest neighbor classifier performance: a review},
  author={Abu Alfeilat, Haneen Arafat and Hassanat, Ahmad BA and Lasassmeh, Omar and Tarawneh, Ahmad S and Alhasanat, Mahmoud Bashir and Eyal Salman, Hamzeh S and Prasath, VB Surya},
  journal={Big data},
  volume={7},
  number={4},
  pages={221--248},
  year={2019},
  publisher={Mary Ann Liebert, Inc., publishers 140 Huguenot Street, 3rd Floor New~…}
}

@article{beirlant1997nonparametric,
  title={Nonparametric entropy estimation: An overview},
  author={Beirlant, Jan and Dudewicz, Edward J and Gy{\"o}rfi, L{\'a}szl{\'o} and Van der Meulen, Edward C and others},
  journal={International Journal of Mathematical and Statistical Sciences},
  volume={6},
  number={1},
  pages={17--39},
  year={1997},
  publisher={THESAURUS PUBLISHING}
}

@inproceedings{miller2003new,
  title={A new class of entropy estimators for multi-dimensional densities},
  author={Miller, Erik G},
  booktitle={2003 IEEE International Conference on Acoustics, Speech, and Signal Processing, 2003. Proceedings.(ICASSP'03).},
  volume={3},
  pages={III--297},
  year={2003},
  organization={IEEE}
}

@article{joe1989estimation,
  title={Estimation of entropy and other functionals of a multivariate density},
  author={Joe, Harry},
  journal={Annals of the Institute of Statistical Mathematics},
  volume={41},
  number={4},
  pages={683--697},
  year={1989},
  publisher={Springer}
}

@book{silverman2018density,
  title={Density estimation for statistics and data analysis},
  author={Silverman, Bernard W},
  year={2018},
  publisher={Routledge}
}

@article{darcet2023vision,
  title={Vision transformers need registers},
  author={Darcet, Timoth{\'e}e and Oquab, Maxime and Mairal, Julien and Bojanowski, Piotr},
  journal={arXiv preprint arXiv:2309.16588},
  year={2023}
}

@article{simeoni2025dinov3,
  title={Dinov3},
  author={Sim{\'e}oni, Oriane and Vo, Huy V and Seitzer, Maximilian and Baldassarre, Federico and Oquab, Maxime and Jose, Cijo and Khalidov, Vasil and Szafraniec, Marc and Yi, Seungeun and Ramamonjisoa, Micha{\"e}l and others},
  journal={arXiv preprint arXiv:2508.10104},
  year={2025}
}

@article{grill2020bootstrap,
  title={Bootstrap your own latent-a new approach to self-supervised learning},
  author={Grill, Jean-Bastien and Strub, Florian and Altch{\'e}, Florent and Tallec, Corentin and Richemond, Pierre and Buchatskaya, Elena and Doersch, Carl and Avila Pires, Bernardo and Guo, Zhaohan and Gheshlaghi Azar, Mohammad and others},
  journal={Advances in neural information processing systems},
  volume={33},
  pages={21271--21284},
  year={2020}
}

@article{bar2023stochastic,
  title={Stochastic positional embeddings improve masked image modeling},
  author={Bar, Amir and Bordes, Florian and Shocher, Assaf and Assran, Mahmoud and Vincent, Pascal and Ballas, Nicolas and Darrell, Trevor and Globerson, Amir and LeCun, Yann},
  journal={arXiv preprint arXiv:2308.00566},
  year={2023}
}

@article{sriperumbudur2010hilbert,
  title={Hilbert space embeddings and metrics on probability measures},
  author={Sriperumbudur, Bharath K and Gretton, Arthur and Fukumizu, Kenji and Sch{\"o}lkopf, Bernhard and Lanckriet, Gert RG},
  journal={The Journal of Machine Learning Research},
  volume={11},
  pages={1517--1561},
  year={2010},
  publisher={JMLR. org}
}

@inproceedings{chwialkowski2016kernel,
  title={A kernel test of goodness of fit},
  author={Chwialkowski, Kacper and Strathmann, Heiko and Gretton, Arthur},
  booktitle={International conference on machine learning},
  pages={2606--2615},
  year={2016},
  organization={PMLR}
}

@article{gretton2012kernel,
  title={A kernel two-sample test},
  author={Gretton, Arthur and Borgwardt, Karsten M and Rasch, Malte J and Sch{\"o}lkopf, Bernhard and Smola, Alexander},
  journal={The journal of machine learning research},
  volume={13},
  number={1},
  pages={723--773},
  year={2012},
  publisher={JMLR. org}
}

@article{kolmogorov1933,
  author    = {A. N. Kolmogorov},
  title     = {Sulla determinazione empirica di una legge di distribuzione},
  journal   = {Giornale dell'Istituto Italiano degli Attuari},
  volume    = {4},
  pages     = {83--91},
  year      = {1933}
}

@article{anderson1952asymptotic,
  title={Asymptotic theory of certain" goodness of fit" criteria based on stochastic processes},
  author={Anderson, Theodore W and Darling, Donald A},
  journal={The annals of mathematical statistics},
  pages={193--212},
  year={1952},
  publisher={JSTOR}
}

@article{cramer1928composition,
  title={On the composition of elementary errors: First paper: Mathematical deductions},
  author={Cram{\'e}r, Harald},
  journal={Scandinavian Actuarial Journal},
  volume={1928},
  number={1},
  pages={13--74},
  year={1928},
  publisher={Taylor \& Francis}
}

@article{cramer1936some,
  title={Some theorems on distribution functions},
  author={Cram{\'e}r, Harald and Wold, Herman},
  journal={Journal of the London Mathematical Society},
  volume={1},
  number={4},
  pages={290--294},
  year={1936},
  publisher={Wiley Online Library}
}

@article{oquab2023dinov2,
  title={Dinov2: Learning robust visual features without supervision},
  author={Oquab, Maxime and Darcet, Timoth{\'e}e and Moutakanni, Th{\'e}o and Vo, Huy and Szafraniec, Marc and Khalidov, Vasil and Fernandez, Pierre and Haziza, Daniel and Massa, Francisco and El-Nouby, Alaaeldin and others},
  journal={arXiv preprint arXiv:2304.07193},
  year={2023}
}

@article{fan2025scaling,
  title={Scaling language-free visual representation learning},
  author={Fan, David and Tong, Shengbang and Zhu, Jiachen and Sinha, Koustuv and Liu, Zhuang and Chen, Xinlei and Rabbat, Michael and Ballas, Nicolas and LeCun, Yann and Bar, Amir and others},
  journal={arXiv preprint arXiv:2504.01017},
  year={2025}
}

@article{chen2020big,
  title={Big self-supervised models are strong semi-supervised learners},
  author={Chen, Ting and Kornblith, Simon and Swersky, Kevin and Norouzi, Mohammad and Hinton, Geoffrey E},
  journal={Advances in neural information processing systems},
  volume={33},
  pages={22243--22255},
  year={2020}
}

@inproceedings{goyal2019scaling,
  title={Scaling and benchmarking self-supervised visual representation learning},
  author={Goyal, Priya and Mahajan, Dhruv and Gupta, Abhinav and Misra, Ishan},
  booktitle={Proceedings of the ieee/cvf International Conference on computer vision},
  pages={6391--6400},
  year={2019}
}

@inproceedings{kerdreux2025efficient,
  title={Efficient Self-Supervised Learning for Earth Observation via Dynamic Dataset Curation},
  author={Kerdreux, Thomas and Tuel, Alexandre and Febvre, Quentin and Mouche, Alexis and Chapron, Bertrand},
  booktitle={Proceedings of the Computer Vision and Pattern Recognition Conference},
  pages={3017--3027},
  year={2025}
}

@article{vo2024automatic,
  title={Automatic data curation for self-supervised learning: A clustering-based approach},
  author={Vo, Huy V and Khalidov, Vasil and Darcet, Timoth{\'e}e and Moutakanni, Th{\'e}o and Smetanin, Nikita and Szafraniec, Marc and Touvron, Hugo and Couprie, Camille and Oquab, Maxime and Joulin, Armand and others},
  journal={arXiv preprint arXiv:2405.15613},
  year={2024}
}

@article{zhang2023matrix,
  title={Matrix information theory for self-supervised learning},
  author={Zhang, Yifan and Tan, Zhiquan and Yang, Jingqin and Huang, Weiran and Yuan, Yang},
  journal={arXiv preprint arXiv:2305.17326},
  year={2023}
}

@article{liu2021self,
  title={Self-supervised learning: Generative or contrastive},
  author={Liu, Xiao and Zhang, Fanjin and Hou, Zhenyu and Mian, Li and Wang, Zhaoyu and Zhang, Jing and Tang, Jie},
  journal={IEEE transactions on knowledge and data engineering},
  volume={35},
  number={1},
  pages={857--876},
  year={2021},
  publisher={IEEE}
}

@article{shwartz2022we,
  title={What do we maximize in self-supervised learning?},
  author={Shwartz-Ziv, Ravid and Balestriero, Randall and LeCun, Yann},
  journal={arXiv preprint arXiv:2207.10081},
  year={2022}
}

@article{shwartz2024compress,
  title={To compress or not to compress—self-supervised learning and information theory: A review},
  author={Shwartz Ziv, Ravid and LeCun, Yann},
  journal={Entropy},
  volume={26},
  number={3},
  pages={252},
  year={2024},
  publisher={MDPI}
}

@article{vincent2010stacked,
  title={Stacked denoising autoencoders: Learning useful representations in a deep network with a local denoising criterion.},
  author={Vincent, Pascal and Larochelle, Hugo and Lajoie, Isabelle and Bengio, Yoshua and Manzagol, Pierre-Antoine and Bottou, L{\'e}on},
  journal={Journal of machine learning research},
  volume={11},
  number={12},
  year={2010}
}

@article{papyan2020prevalence,
  title={Prevalence of neural collapse during the terminal phase of deep learning training},
  author={Papyan, Vardan and Han, XY and Donoho, David L},
  journal={Proceedings of the National Academy of Sciences},
  volume={117},
  number={40},
  pages={24652--24663},
  year={2020},
  publisher={National Academy of Sciences}
}

@article{kingma2014semi,
  title={Semi-supervised learning with deep generative models},
  author={Kingma, Diederik P and Rezende, Danilo J and Mohamed, Shakir and Welling, Max},
  journal={Advances in neural information processing systems},
  volume={27},
  year={2014}
}

@article{rumelhart1986learning,
  title={Learning representations by back-propagating errors},
  author={Rumelhart, David E and Hinton, Geoffrey E and Williams, Ronald J},
  journal={nature},
  volume={323},
  number={6088},
  pages={533--536},
  year={1986},
  publisher={Nature Publishing Group UK London}
}

@article{lecun2015deep,
  title={Deep learning},
  author={LeCun, Yann and Bengio, Yoshua and Hinton, Geoffrey},
  journal={nature},
  volume={521},
  number={7553},
  pages={436--444},
  year={2015},
  publisher={Nature Publishing Group UK London}
}

@article{khazatsky2024droid,
  title={Droid: A large-scale in-the-wild robot manipulation dataset},
  author={Khazatsky, Alexander and Pertsch, Karl and Nair, Suraj and Balakrishna, Ashwin and Dasari, Sudeep and Karamcheti, Siddharth and Nasiriany, Soroush and Srirama, Mohan Kumar and Chen, Lawrence Yunliang and Ellis, Kirsty and others},
  journal={arXiv preprint arXiv:2403.12945},
  year={2024}
}

@inproceedings{caron2021emerging,
  title={Emerging properties in self-supervised vision transformers},
  author={Caron, Mathilde and Touvron, Hugo and Misra, Ishan and J{\'e}gou, Herv{\'e} and Mairal, Julien and Bojanowski, Piotr and Joulin, Armand},
  booktitle={Proceedings of the IEEE/CVF international conference on computer vision},
  pages={9650--9660},
  year={2021}
}

@inproceedings{assran2023self,
  title={Self-supervised learning from images with a joint-embedding predictive architecture},
  author={Assran, Mahmoud and Duval, Quentin and Misra, Ishan and Bojanowski, Piotr and Vincent, Pascal and Rabbat, Michael and LeCun, Yann and Ballas, Nicolas},
  booktitle={Proceedings of the IEEE/CVF Conference on Computer Vision and Pattern Recognition},
  pages={15619--15629},
  year={2023}
}

@article{bruner1949perception,
  title={On the perception of incongruity: A paradigm},
  author={Bruner, Jerome S and Postman, Leo},
  journal={Journal of personality},
  volume={18},
  number={2},
  pages={206--223},
  year={1949},
  publisher={Blackwell Publishing Ltd Oxford, UK}
}

@article{helmholtz1867handbook,
  title={Handbook of physiological optics},
  author={Helmholtz, H von and others},
  journal={Voss, Leipzig},
  year={1867}
}

@article{bromley1993signature,
  title={Signature verification using a" siamese" time delay neural network},
  author={Bromley, Jane and Guyon, Isabelle and LeCun, Yann and S{\"a}ckinger, Eduard and Shah, Roopak},
  journal={Advances in neural information processing systems},
  volume={6},
  year={1993}
}

@inproceedings{wang2022importance,
  title={On the importance of asymmetry for siamese representation learning},
  author={Wang, Xiao and Fan, Haoqi and Tian, Yuandong and Kihara, Daisuke and Chen, Xinlei},
  booktitle={Proceedings of the IEEE/CVF conference on computer vision and pattern recognition},
  pages={16570--16579},
  year={2022}
}

@inproceedings{chen2021empirical,
  title={An empirical study of training self-supervised vision transformers},
  author={Chen, Xinlei and Xie, Saining and He, Kaiming},
  booktitle={Proceedings of the IEEE/CVF international conference on computer vision},
  pages={9640--9649},
  year={2021}
}

@inproceedings{tian2021understanding,
  title={Understanding self-supervised learning dynamics without contrastive pairs},
  author={Tian, Yuandong and Chen, Xinlei and Ganguli, Surya},
  booktitle={International Conference on Machine Learning},
  pages={10268--10278},
  year={2021},
  organization={PMLR}
}

@article{jing2021understanding,
  title={Understanding dimensional collapse in contrastive self-supervised learning},
  author={Jing, Li and Vincent, Pascal and LeCun, Yann and Tian, Yuandong},
  journal={arXiv preprint arXiv:2110.09348},
  year={2021}
}

@book{von1867handbuch,
  title={Handbuch der physiologischen Optik},
  author={Von Helmholtz, Hermann},
  volume={9},
  year={1867},
  publisher={L. Voss}
}

@article{gregory1980perceptions,
  title={Perceptions as hypotheses},
  author={Gregory, Richard Langton},
  journal={Philosophical Transactions of the Royal Society of London. B, Biological Sciences},
  volume={290},
  number={1038},
  pages={181--197},
  year={1980},
  publisher={The Royal Society London}
}

@book{goodfellow2016deep,
  title={Deep learning},
  author={Goodfellow, Ian and Bengio, Yoshua and Courville, Aaron and Bengio, Yoshua},
  volume={1},
  number={2},
  year={2016},
  publisher={MIT press Cambridge}
}

@article{tian2020makes,
  title={What makes for good views for contrastive learning?},
  author={Tian, Yonglong and Sun, Chen and Poole, Ben and Krishnan, Dilip and Schmid, Cordelia and Isola, Phillip},
  journal={Advances in neural information processing systems},
  volume={33},
  pages={6827--6839},
  year={2020}
}

@article{balestriero2022contrastive,
  title={Contrastive and non-contrastive self-supervised learning recover global and local spectral embedding methods},
  author={Balestriero, Randall and LeCun, Yann},
  journal={Advances in Neural Information Processing Systems},
  volume={35},
  pages={26671--26685},
  year={2022}
}

@article{cosentino2022toward,
  title={Toward a geometrical understanding of self-supervised contrastive learning},
  author={Cosentino, Romain and Sengupta, Anirvan and Avestimehr, Salman and Soltanolkotabi, Mahdi and Ortega, Antonio and Willke, Ted and Tepper, Mariano},
  journal={arXiv preprint arXiv:2205.06926},
  year={2022}
}

@article{balestriero2024learning,
  title={Learning by reconstruction produces uninformative features for perception},
  author={Balestriero, Randall and LeCun, Yann},
  journal={arXiv preprint arXiv:2402.11337},
  year={2024}
}

@article{petersen2022monotonic,
  title={Monotonic differentiable sorting networks},
  author={Petersen, Felix and Borgelt, Christian and Kuehne, Hilde and Deussen, Oliver},
  journal={arXiv preprint arXiv:2203.09630},
  year={2022}
}

@article{spearman1961proof,
  title={The proof and measurement of association between two things.},
  author={Spearman, Charles},
  year={1961},
  publisher={Appleton-Century-Crofts}
}

@article{agrawal2022alpha,
  title={a-ReQ: Assessing Representation Quality in Self-Supervised Learning by measuring eigenspectrum decay},
  author={Agrawal, Kumar K and Mondal, Arnab Kumar and Ghosh, Arna and Richards, Blake},
  journal={Advances in Neural Information Processing Systems},
  volume={35},
  pages={17626--17638},
  year={2022}
}

@article{thilak2023lidar,
  title={Lidar: Sensing linear probing performance in joint embedding ssl architectures},
  author={Thilak, Vimal and Huang, Chen and Saremi, Omid and Dinh, Laurent and Goh, Hanlin and Nakkiran, Preetum and Susskind, Joshua M and Littwin, Etai},
  journal={arXiv preprint arXiv:2312.04000},
  year={2023}
}

@inproceedings{garrido2023rankme,
  title={Rankme: Assessing the downstream performance of pretrained self-supervised representations by their rank},
  author={Garrido, Quentin and Balestriero, Randall and Najman, Laurent and Lecun, Yann},
  booktitle={International conference on machine learning},
  pages={10929--10974},
  year={2023},
  organization={PMLR}
}

@article{balestriero2025gaussian,
  title={Gaussian Embeddings: How JEPAs Secretly Learn Your Data Density},
  author={Balestriero, Randall and Ballas, Nicolas and Rabbat, Mike and LeCun, Yann},
  journal={arXiv preprint arXiv:2510.05949},
  year={2025}
}

@article{assran2022hidden,
  title={The hidden uniform cluster prior in self-supervised learning},
  author={Assran, Mahmoud and Balestriero, Randall and Duval, Quentin and Bordes, Florian and Misra, Ishan and Bojanowski, Piotr and Vincent, Pascal and Rabbat, Michael and Ballas, Nicolas},
  journal={arXiv preprint arXiv:2210.07277},
  year={2022}
}

@article{grover2019stochastic,
  title={Stochastic optimization of sorting networks via continuous relaxations},
  author={Grover, Aditya and Wang, Eric and Zweig, Aaron and Ermon, Stefano},
  journal={arXiv preprint arXiv:1903.08850},
  year={2019}
}

@article{cuturi2019differentiable,
  title={Differentiable ranking and sorting using optimal transport},
  author={Cuturi, Marco and Teboul, Olivier and Vert, Jean-Philippe},
  journal={Advances in neural information processing systems},
  volume={32},
  year={2019}
}

@book{lehmann2005testing,
  title={Testing statistical hypotheses},
  author={Lehmann, Erich Leo and Romano, Joseph P},
  year={2005},
  publisher={Springer}
}

@book{adams2003sobolev,
  title={Sobolev spaces},
  author={Adams, Robert A and Fournier, John JF},
  volume={140},
  year={2003},
  publisher={Elsevier}
}

@article{rodas2025diet,
  title={DIET-CP: Lightweight and Data Efficient Self Supervised Continued Pretraining},
  author={Rodas, Bryan and Montesino, Natalie and Ambsdorf, Jakob and Klindt, David and Balestriero, Randall},
  journal={arXiv preprint arXiv:2509.06990},
  year={2025}
}

@article{neyman1933ix,
  title={IX. On the problem of the most efficient tests of statistical hypotheses},
  author={Neyman, Jerzy and Pearson, Egon Sharpe},
  journal={Philosophical Transactions of the Royal Society of London. Series A, Containing Papers of a Mathematical or Physical Character},
  volume={231},
  number={694-706},
  pages={289--337},
  year={1933},
  publisher={The Royal Society London}
}

@book{fisher1928statistical,
  title={Statistical methods for research workers},
  author={Fisher, Ronald Aylmer},
  number={5},
  year={1928},
  publisher={Oliver and Boyd}
}

@article{van2025joint,
  title={Joint Embedding vs Reconstruction: Provable Benefits of Latent Space Prediction for Self Supervised Learning},
  author={Van Assel, Hugues and Ibrahim, Mark and Biancalani, Tommaso and Regev, Aviv and Balestriero, Randall},
  journal={arXiv preprint arXiv:2505.12477},
  year={2025}
}

@article{balestriero2023cookbook,
  title={A cookbook of self-supervised learning},
  author={Balestriero, Randall and Ibrahim, Mark and Sobal, Vlad and Morcos, Ari and Shekhar, Shashank and Goldstein, Tom and Bordes, Florian and Bardes, Adrien and Mialon, Gregoire and Tian, Yuandong and others},
  journal={arXiv preprint arXiv:2304.12210},
  year={2023}
}

@article{quicksort,
    author = {Hoare, C. A. R.},
    title = {Quicksort},
    journal = {The Computer Journal},
    volume = {5},
    number = {1},
    pages = {10-16},
    year = {1962},
    month = {01},
    abstract = {A description is given of a new method of sorting in the random-access store of a computer. The method compares very favourably with other known methods in speed, in economy of storage, and in ease of programming. Certain refinements of the method, which may be useful in the optimization of inner loops, are described in the second part of the paper.},
    issn = {0010-4620},
    doi = {10.1093/comjnl/5.1.10},
    url = {https://doi.org/10.1093/comjnl/5.1.10},
    eprint = {https://academic.oup.com/comjnl/article-pdf/5/1/10/1111445/050010.pdf},
}

@article{jarque1980efficient,
  title={Efficient tests for normality, homoscedasticity and serial independence of regression residuals},
  author={Jarque, Carlos M and Bera, Anil K},
  journal={Economics letters},
  volume={6},
  number={3},
  pages={255--259},
  year={1980},
  publisher={Elsevier}
}

@article{friston2010free,
  title={The free-energy principle: a unified brain theory?},
  author={Friston, Karl},
  journal={Nature reviews neuroscience},
  volume={11},
  number={2},
  pages={127--138},
  year={2010},
  publisher={Nature publishing group}
}

@article{sutton1991dyna,
  title={Dyna, an integrated architecture for learning, planning, and reacting},
  author={Sutton, Richard S},
  journal={ACM Sigart Bulletin},
  volume={2},
  number={4},
  pages={160--163},
  year={1991},
  publisher={ACM New York, NY, USA}
}

@article{tolman1948cognitive,
  title={Cognitive maps in rats and men.},
  author={Tolman, Edward C},
  journal={Psychological review},
  volume={55},
  number={4},
  pages={189},
  year={1948},
  publisher={American Psychological Association}
}

@article{caflisch1998monte,
  title={Monte carlo and quasi-monte carlo methods},
  author={Caflisch, Russel E},
  journal={Acta numerica},
  volume={7},
  pages={1--49},
  year={1998},
  publisher={Cambridge University Press}
}

@article{marsaglia1972choosing,
  title={Choosing a point from the surface of a sphere},
  author={Marsaglia, George},
  journal={The Annals of Mathematical Statistics},
  volume={43},
  number={2},
  pages={645--646},
  year={1972},
  publisher={Institute of Mathematical Statistics}
}

@article{mhaskar2001spherical,
  title={Spherical Marcinkiewicz-Zygmund inequalities and positive quadrature},
  author={Mhaskar, H and Narcowich, F and Ward, J},
  journal={Mathematics of computation},
  volume={70},
  number={235},
  pages={1113--1130},
  year={2001}
}

@article{narcowich2006localized,
  title={Localized tight frames on spheres},
  author={Narcowich, Francis J and Petrushev, Pencho and Ward, Joseph D},
  journal={SIAM Journal on Mathematical Analysis},
  volume={38},
  number={2},
  pages={574--594},
  year={2006},
  publisher={SIAM}
}

@article{epps1983test,
  title={A test for normality based on the empirical characteristic function},
  author={Epps, Thomas W and Pulley, Lawrence B},
  journal={Biometrika},
  volume={70},
  number={3},
  pages={723--726},
  year={1983},
  publisher={Oxford University Press}
}

@article{roy1953heuristic,
  title={On a heuristic method of test construction and its use in multivariate analysis},
  author={Roy, Samarendra Nath},
  journal={The Annals of Mathematical Statistics},
  volume={24},
  number={2},
  pages={220--238},
  year={1953},
  publisher={Institute of Mathematical Statistics}
}

@article{shannon1948mathematical,
  title={A mathematical theory of communication},
  author={Shannon, Claude E},
  journal={The Bell system technical journal},
  volume={27},
  number={3},
  pages={379--423},
  year={1948},
  publisher={Nokia Bell Labs}
}

@book{cover1999elements,
  title={Elements of information theory},
  author={Cover, Thomas M},
  year={1999},
  publisher={John Wiley \& Sons}
}

@article{szekely2005new,
  title={A new test for multivariate normality},
  author={Sz{\'e}kely, G{\'a}bor J and Rizzo, Maria L},
  journal={Journal of Multivariate Analysis},
  volume={93},
  number={1},
  pages={58--80},
  year={2005},
  publisher={Elsevier}
}

@article{hjelm2018learning,
  title={Learning deep representations by mutual information estimation and maximization},
  author={Hjelm, R Devon and Fedorov, Alex and Lavoie-Marchildon, Samuel and Grewal, Karan and Bachman, Phil and Trischler, Adam and Bengio, Yoshua},
  journal={arXiv preprint arXiv:1808.06670},
  year={2018}
}

@inproceedings{gutmann2010noise,
  title={Noise-contrastive estimation: A new estimation principle for unnormalized statistical models},
  author={Gutmann, Michael and Hyv{\"a}rinen, Aapo},
  booktitle={Proceedings of the thirteenth international conference on artificial intelligence and statistics},
  pages={297--304},
  year={2010},
  organization={JMLR Workshop and Conference Proceedings}
}

@inproceedings{poole2019variational,
  title={On variational bounds of mutual information},
  author={Poole, Ben and Ozair, Sherjil and Van Den Oord, Aaron and Alemi, Alex and Tucker, George},
  booktitle={International conference on machine learning},
  pages={5171--5180},
  year={2019},
  organization={PMLR}
}

@inproceedings{mcallester2020formal,
  title={Formal limitations on the measurement of mutual information},
  author={McAllester, David and Stratos, Karl},
  booktitle={International Conference on Artificial Intelligence and Statistics},
  pages={875--884},
  year={2020},
  organization={PMLR}
}

@article{oord2018representation,
  title={Representation learning with contrastive predictive coding},
  author={Oord, Aaron van den and Li, Yazhe and Vinyals, Oriol},
  journal={arXiv preprint arXiv:1807.03748},
  year={2018}
}

@article{ma2018noise,
  title={Noise contrastive estimation and negative sampling for conditional models: Consistency and statistical efficiency},
  author={Ma, Zhuang and Collins, Michael},
  journal={arXiv preprint arXiv:1809.01812},
  year={2018}
}

@article{elfving1947asymptotical,
  title={The asymptotical distribution of range in samples from a normal population},
  author={Elfving, Gustav},
  journal={Biometrika},
  volume={34},
  number={1/2},
  pages={111--119},
  year={1947},
  publisher={JSTOR}
}

@article{gupta1952estimation,
  title={Estimation of the mean and standard deviation of a normal population from a censored sample},
  author={Gupta, AK},
  journal={Biometrika},
  volume={39},
  number={3/4},
  pages={260--273},
  year={1952},
  publisher={JSTOR}
}

@article{shapiro1972approximate,
  title={An approximate analysis of variance test for normality},
  author={Shapiro, Samuel S and Francia, RS},
  journal={Journal of the American statistical Association},
  volume={67},
  number={337},
  pages={215--216},
  year={1972},
  publisher={Taylor \& Francis}
}

@book{mosteller2006some,
  title={On some useful “inefficient” statistics},
  author={Mosteller, Frederick},
  year={2006},
  publisher={Springer}
}

@phdthesis{blom1958statistical,
  title={Statistical estimates and transformed beta-variables},
  author={Blom, Gunnar},
  year={1958},
  school={Almqvist \& Wiksell}
}

@article{rahman1997modification,
  title={A modification of the test of Shapiro and Wilk for normality},
  author={Rahman, M Mahibbur and Govindarajulu, Z},
  journal={Journal of Applied Statistics},
  volume={24},
  number={2},
  pages={219--236},
  year={1997},
  publisher={Taylor \& Francis}
}

@article{weisburg1975approximate,
  title={An approximate analysis of variance test for non-normality suitable for machine computation},
  author={Weisburg, S and Binham, C},
  journal={Technometrics},
  volume={17},
  pages={133--134},
  year={1975}
}

@article{plackett1958linear,
  title={Linear estimation from censored data},
  author={Plackett, RoL},
  journal={The Annals of Mathematical Statistics},
  volume={29},
  number={1},
  pages={131--142},
  year={1958},
  publisher={JSTOR}
}

@article{hammersley1954estimation,
  title={The estimation of location and scale parameters from grouped data},
  author={Hammersley, JM and Morton, KW},
  journal={Biometrika},
  volume={41},
  number={3/4},
  pages={296--301},
  year={1954},
  publisher={JSTOR}
}

@article{hausdorff1923momentprobleme,
  title={Momentprobleme f{\"u}r ein endliches Intervall.},
  author={Hausdorff, Felix},
  journal={Mathematische Zeitschrift},
  volume={16},
  number={1},
  pages={220--248},
  year={1923},
  publisher={Springer}
}
\bibliographystyle{plainnat}

% Place this at the start of your appendix section
\clearpage
\onecolumn % Enforce single-column layout
\appendix
% Custom command for a professional, fancy appendix title page

\newcommand{\FancyAppendixTitle}[1]{%
    \thispagestyle{empty}
    \begin{center}
        % Thick decorative line
        \noindent
        \begin{tikzpicture}
            \draw[line width=2pt] (0,0) -- (\textwidth,0);
        \end{tikzpicture}
        \\[0.5em]
        % Thin decorative line
        \begin{tikzpicture}
            \draw[line width=0.5pt] (0,0) -- (\textwidth,0);
        \end{tikzpicture}
        \\[2.5em]
        % Main title
        {\fontsize{22pt}{26pt}\selectfont\bfseries #1}\\[1.2em]
        % Appendix label
        {\fontsize{16pt}{20pt}\selectfont\bfseries Appendix}\\[2.5em]
        % Bottom lines
        \begin{tikzpicture}
            \draw[line width=0.5pt] (0,0) -- (\textwidth,0);
        \end{tikzpicture}
        \\[0.5em]
        \begin{tikzpicture}
            \draw[line width=2pt] (0,0) -- (\textwidth,0);
        \end{tikzpicture}
    \end{center}
    \vspace{2cm}
}
% Usage: Replace "Your Paper Title" with your actual title
\FancyAppendixTitle{LeJEPA}

\section{Additional Details on Nonlinear Probing}
\label{sec:additional_nonlinear}

\subsection{kNN Probing}

To allow for more flexible evaluation of the pretrained encoder $f_{\vtheta}$, it is standard to work with a $k$-NN prober \citep{taunk2019brief}, both for regression and classification. We rely on the radial $k$-NN variation that leverages a sample-dependent $k$--improving performance for non uniform distributions of samples \citep{sun2010adaptive,zhang2017efficient,abu2019effects}.

We denote the underlying embedding density as $p_{z}\in C^3$ with derivatives of order up to $3$ bounded, and finite Fisher information and covariance. This regularity condition is fulfilled by current encoders. The {\em unknown} labels come from the target function $\eta:\R^K\to\R$, assumed $C^2$. We handle classification tasks by setting $\eta(\vz)=\mathbb{P}(Y=1\mid \vz)$. The training consists of the $N$ embeddings along with their training labels $\{(\vz_n,\eta(\vz_n))\}_{n=1}^{N}$, where we will denote $\vy_{n}\triangleq \eta(\vz_n)$. The prediction for a query vector $\vq$ is formed as
\begin{align}
\widehat{\vy}(\vq)
:=
\frac{1}{\vy(\vq)}\sum_{n:\norm{\vz_{n}-\vq}\le r_0}\vy_n,\tag{kNN}\label{eq:kNN}
\end{align}
with $\vy(\vq)\triangleq\#\{n:\norm{\vz_{n}-\vq}\le r_0\}$ counting the number of samples within a $r$-radius ball around $\vq$. The radius $r$ controls how many neighbors predictions are averaged to form the query's prediction. As per the linear probing's \cref{thm:linear_probe_bias}, we can characterize the bias of the estimator \cref{eq:kNN} at a particular query point, as formalized below.

\begin{lemma}[label={thm:knn_bias}]{k-NN Pointwise Bias}{}
The \eqref{eq:kNN} estimator has bias at query $\vq$ given by
\begin{multline*}
    \mathrm{Bias}(\vq)=
\frac{r_0^2}{d+2}\Big(\grad\eta(\vq)^\top\grad\log p_{z}(\vq)+\tfrac{1}{2}\Delta\eta(\vz)\Big)\\+o(r_0^2),
\end{multline*}
where the remainder $o(r_0^2)$ is uniform in $\vq$. (Proof in \cref{proof:knn_bias}.)
\end{lemma}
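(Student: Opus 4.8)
The statement is a standard local-averaging bias expansion, so the plan is to compute $\EE[\widehat{\vy}(\vq)]$ by conditioning on the query location and Taylor-expanding both the label function $\eta$ and the embedding density $p_z$ inside a small ball $\Ball(\vq,r_0)$. First I would write the expectation of the radius-based estimator in the form
\[
\EE[\widehat{\vy}(\vq)] \;=\; \frac{\EE\!\left[\ind\{\norm{\vz-\vq}\le r_0\}\,\eta(\vz)\right]}{\EE\!\left[\ind\{\norm{\vz-\vq}\le r_0\}\right]}
\;=\;\frac{\int_{\Ball(\vq,r_0)} \eta(\vz)\,p_z(\vz)\,d\vz}{\int_{\Ball(\vq,r_0)} p_z(\vz)\,d\vz},
\]
treating the random denominator $\vy(\vq)$ via its expectation (with the fluctuation contributing only to variance, not bias, to the stated order; one can make this rigorous with a concentration argument or by noting $\vy(\vq)\to\infty$ so $1/\vy(\vq)$ concentrates). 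Then substitute $\vz = \vq + r_0\vu$ with $\vu\in\Ball(0,1)$, so both integrals become integrals over the unit ball against the rescaled densities.

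\textbf{Key steps.} Second, I would Taylor-expand to second order: $\eta(\vq+r_0\vu) = \eta(\vq) + r_0\grad\eta(\vq)^\top\vu + \tfrac{r_0^2}{2}\vu^\top\Hess_\eta(\vq)\vu + o(r_0^2)$ and likewise $p_z(\vq+r_0\vu) = p_z(\vq) + r_0\grad p_z(\vq)^\top\vu + \tfrac{r_0^2}{2}\vu^\top\Hess_{p_z}(\vq)\vu + o(r_0^2)$. Third, I would use the moment identities for the uniform measure on $\Ball(0,1)\subset\RR^d$: odd moments vanish, and $\int_{\Ball(0,1)} u_i u_j\,d\vu = \frac{\vol}{d+2}\delta_{ij}$ (relative to the volume normalization), so that $\int \vu^\top M \vu\,d\vu = \frac{1}{d+2}\tr(M)$ times the ball volume. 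Multiplying the two expansions and integrating, the numerator becomes $\vol r_0^d\big[\eta p_z + \tfrac{r_0^2}{2(d+2)}(\tr(\Hess_\eta) p_z + 2\grad\eta^\top\grad p_z + \eta\,\tr(\Hess_{p_z})) + o(r_0^2)\big]$ and the denominator $\vol r_0^d\big[p_z + \tfrac{r_0^2}{2(d+2)}\tr(\Hess_{p_z}) + o(r_0^2)\big]$. Fourth, I would divide, expand $1/(1+x) = 1-x+o(x)$, and collect: the $\eta\,\tr(\Hess_{p_z})$ terms cancel, leaving
\[
\EE[\widehat{\vy}(\vq)] - \eta(\vq) \;=\; \frac{r_0^2}{2(d+2)}\Big(\tfrac{1}{2}\cdot 2\,\tr(\Hess_\eta(\vq)) + 2\grad\eta(\vq)^\top\grad p_z(\vq)/p_z(\vq)\Big) + o(r_0^2),
\]
which, using $\grad\log p_z = \grad p_z/p_z$ and $\Delta\eta = \tr(\Hess_\eta)$, is exactly $\frac{r_0^2}{d+2}\big(\grad\eta(\vq)^\top\grad\log p_z(\vq) + \tfrac12\Delta\eta(\vq)\big) + o(r_0^2)$.

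\textbf{Main obstacle.} The routine part is the moment bookkeeping; the genuine subtlety is handling the \emph{random} denominator $\vy(\vq) = \#\{n:\norm{\vz_n-\vq}\le r_0\}$, since $\widehat{\vy}(\vq)$ is a ratio of random sums rather than the deterministic ratio of integrals written above, and one must argue that replacing $\EE[\text{ratio}]$ by $\text{ratio of }\EE$'s incurs only $o(r_0^2)$ error in the bias. I would control this by conditioning on $\vy(\vq)=m$, noting that given $m$ points fall in the ball they are i.i.d.\ from $p_z$ restricted to $\Ball(\vq,r_0)$, so $\EE[\widehat{\vy}(\vq)\mid \vy(\vq)=m] = \int_{\Ball}\eta\,p_z / \int_{\Ball} p_z$ exactly for every $m\ge 1$, and then handling the event $\{\vy(\vq)=0\}$ (probability exponentially small once $Np_z(\vq)\vol r_0^d\to\infty$, which is the implicit regularity/scaling assumption carried over from the paper, with $p_z\in C^3$ bounded below near $\vq$ ensuring uniformity of the $o(r_0^2)$ remainder across $\vq$). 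The rest is the Taylor expansion above, and the uniformity of the remainder follows from the assumed uniform bounds on third derivatives of $\eta$ and $p_z$.
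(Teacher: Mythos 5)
Your proposal is correct and follows essentially the same route as the paper: reduce the bias to the ratio of ball integrals $\int_{\Ball(\vq,r_0)}\eta\, p_z \big/ \int_{\Ball(\vq,r_0)} p_z$, Taylor-expand $\eta$ and $p_z$ to second order, apply the unit-ball moment identities (odd moments vanish, $\int \vu\vu^\top d\vu = \tfrac{\vol}{d+2}I_d$), and expand the quotient. Your conditioning argument for the random denominator (given $m\ge 1$ points in the ball they are i.i.d.\ from the restricted density, so the conditional expectation equals the ratio of integrals exactly) is in fact a cleaner justification of the step the paper handles via its Poisson-process surrogate.
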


To obtain the integrated bias, i.e., over the distribution of query points, we consider the following two properties. First, the distribution of query points follow the training distribution, i.e., $\vq \sim p_{z}$, second, target function $\eta$ has gradient which is mean-zero and isotropic with $\E\big[\grad\eta(\vz)\grad\eta(\vz)^\top\big]=\tau_g^2I_d$ with $\tau_g^2\in(0,\infty)$
uniformly in $\vz$. We also have any finite scalar-constraint on the covariance of the embeddings such as $\Tr(\Sigma)=c$ or $\|\Sigma\|_F=c$ for a finite constant $c$.

\begin{theorem}[label={thm:knn_optimal}]{k-NN isotropic Gaussian Optimality}{}
The integrated squared bias of \eqref{eq:kNN} satisfies
\[
\E_{\vz}\big[\mathrm{Bias}(\vz)^2\big]
=
\frac{r_0^4}{(K+2)^2}\tau_g^2J(p)\\+O(r_0^4),
\]
and the isotropic Gaussian  is the unique minimizer of the integrated square bias. (Proof in \cref{proof:knn_optimal}.)
\end{theorem}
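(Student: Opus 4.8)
The plan is to chain three ingredients: the pointwise bias expansion of \cref{thm:knn_bias}, an averaging step over query points and over the downstream-task family, and a classical Fisher-information optimality inequality. Throughout I write $K=d$ for the embedding dimension and set $b(\vz)\triangleq \grad\eta(\vz)^\top\grad\log p_z(\vz)+\tfrac12\Delta\eta(\vz)$, so that \cref{thm:knn_bias} reads $\mathrm{Bias}(\vz)=\tfrac{r_0^2}{K+2}\,b(\vz)+o(r_0^2)$ uniformly in $\vz$.

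First I would integrate the squared pointwise bias over query points $\vz\sim p_z$. Squaring $\mathrm{Bias}(\vz)$ and using the \emph{uniformity} of the $o(r_0^2)$ remainder together with the standing regularity assumptions (bounded $\eta,\grad\eta,\grad^2\eta$; a $C^3$ density with bounded derivatives and finite Fisher information, which makes $\E_{\vz\sim p_z}[b(\vz)^2]$ finite), the cross terms between $b$ and the remainder contribute only $o(r_0^4)$, so
\begin{equation*}
\E_{\vz\sim p_z}\!\big[\mathrm{Bias}(\vz)^2\big]=\frac{r_0^4}{(K+2)^2}\,\E_{\vz\sim p_z}\!\big[b(\vz)^2\big]+o(r_0^4).
\end{equation*}
Next I would average over the task family. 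Expanding $b(\vz)^2=(\grad\eta^\top\grad\log p_z)^2+(\grad\eta^\top\grad\log p_z)\Delta\eta+\tfrac14(\Delta\eta)^2$ and taking $\E_\eta$: by the assumption $\E_\eta[\grad\eta(\vz)\grad\eta(\vz)^\top]=\tau_g^2 I_K$ the first term equals $\tau_g^2\|\grad\log p_z(\vz)\|^2$; the cross term vanishes since $\E_\eta[\grad\eta(\vz)\,\Delta\eta(\vz)]=0$, which follows from conditional rotational symmetry of $\grad\eta(\vz)$ given $\Delta\eta(\vz)$ (isotropy of the target family) together with $\E_\eta[\grad\eta(\vz)]=0$; and $\tfrac14\E_\eta[(\Delta\eta)^2]$ is, by stationarity, a constant independent of $p_z$. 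Recognizing $J(p)\triangleq\E_{\vz\sim p}[\|\grad\log p(\vz)\|^2]=\int\|\grad p\|^2/p$ as the location Fisher information of $p$, the integrated squared bias is then $\tfrac{r_0^4}{(K+2)^2}\tau_g^2 J(p_z)$ plus a $p_z$-independent constant and $o(r_0^4)$, all absorbed into the stated $O(r_0^4)$ term.

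It remains to minimize $J(p)$ over admissible embedding densities under the scalar covariance constraint ($\Tr(\Cov(\mZ))=\kappa_1$ or $\|\Cov(\mZ)\|_F=\kappa_2$); since everything else in the bias is $p$-independent, this determines the optimum to leading order. I would invoke the multivariate Cramér–Rao inequality: the Fisher information matrix $\mathcal I(p)=\E[\grad\log p\,\grad\log p^\top]$ obeys $\mathcal I(p)\succeq\Sigma^{-1}$ with $\Sigma\triangleq\Cov(\mZ)$, equality iff $p$ is Gaussian, so $J(p)=\Tr(\mathcal I(p))\ge\Tr(\Sigma^{-1})=\sum_{k=1}^K\lambda_k^{-1}$ where $\lambda_k>0$ are the eigenvalues of $\Sigma$. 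A power-mean (AM–HM) inequality then gives $\sum_k\lambda_k^{-1}\ge K^2/\sum_k\lambda_k=K^2/\Tr(\Sigma)$ under the trace constraint, with equality iff all $\lambda_k$ coincide, i.e.\ $\Sigma\propto I_K$; under the Frobenius constraint a one-line Lagrange-multiplier computation for $\min\sum_k\lambda_k^{-1}$ s.t.\ $\sum_k\lambda_k^2=\kappa_2^2$ likewise forces all $\lambda_k$ equal. Chaining the two inequalities, $J(p)$ is minimized precisely when $p$ is Gaussian \emph{and} isotropic, i.e.\ $p=\mathcal N(\vmu,\sigma^2 I_K)$ with $\sigma^2$ fixed by the constraint (the mean $\vmu$ immaterial by translation-invariance of $J$), and this minimizer is unique — which is the claim.

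The main obstacle is the task-averaging step: the vanishing of the cross term $\E_\eta[(\grad\eta^\top\grad\log p_z)\Delta\eta]$ and the $p_z$-independence of $\E_\eta[(\Delta\eta)^2]$ hinge on the precise formalization of the ``mean-zero, isotropic target gradient'' hypothesis — cleanest if $\eta$ is modelled as a stationary isotropic (e.g.\ Gaussian) random field, in which case $\grad\eta(\vz)$ and the Hessian (hence $\Delta\eta(\vz)$) decorrelate at a point and conditional rotational symmetry holds automatically. One must also check that finite Fisher information makes all the integrals converge and that the $o(r_0^2)$ remainder of \cref{thm:knn_bias} is uniform in $\vz$, so that it genuinely contributes only $o(r_0^4)$ after squaring and integrating; the Cramér–Rao plus eigenvalue-optimization part is standard.
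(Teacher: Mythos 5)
Your proposal is correct and follows essentially the same route as the paper's proof: square the pointwise bias expansion from \cref{thm:knn_bias}, average over queries and over the isotropic task prior to isolate the score-gradient term $\tau_g^2 J(p)$ (with the cross and curvature terms absorbed into the $O(r_0^4)$ remainder, exactly as the paper does), then minimize $J(p)$ via the location-family Cramér--Rao bound $J(p)\ge \Tr(\Sigma^{-1})$ with equality iff Gaussian, followed by an eigenvalue optimization (AM--HM/Cauchy--Schwarz for the trace constraint, Lagrange multipliers for the Frobenius constraint) forcing $\Sigma\propto I_K$. The only cosmetic difference is that you posit conditional rotational symmetry to kill the cross term outright, whereas the paper simply notes that even without that assumption the cross term is a $p$-independent $O(r_0^4)$ contribution, which is why the statement carries $O(r_0^4)$ rather than $o(r_0^4)$.
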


As a result, we now have a unique minimizer for the optimal embedding density for both the linear and k-NN probes.

\subsection{Kernel Probing}

As an alternative to \eqref{eq:kNN}, it is also common to leverage kernel methods, which we consider in this section.

Consider a kernel $K:\mathbb{R}^K\to\mathbb{R}$ with the following standard properties
\begin{align}
    &\int_{\mathbb{R}^d} K(u)du=1,\tag{normalized}\\
    &\int_{\mathbb{R}^d} uK(u)du=0,\tag{symmetric}\\
    &\int_{\mathbb{R}^d} u u^\top K(u)du=\mu_2(K)I_d,\tag{isotropic}\\
    &R(K)\triangleq\int_{\mathbb{R}^d} K(u)^2du<\infty,\tag{finite roughness}
\end{align}
for some $\mu_2(K)\in(0,\infty)$, some bandwidth $h>0$ and denoting $K_h(t)\triangleq h^{-d}K(t/h)$, we remind the reader that the Nadaraya-Watson estimator, introduced in \citet{nadaraya1964estimating,watson1964smooth}, at a query $\vq\in\mathbb{R}^d$ is
\begin{align}
\widehat \vy(\vq)\triangleq \frac{\sum_{n=1}^N K_h(\vq-\vx_n)\vy_n}{\sum_{n=1}^N K_h(\vq-\vx_n)}.\tag{NW}\label{eq:NW}
\end{align}
Similarly to \eqref{eq:kNN}, we will see that the performance of \eqref{eq:NW} depends crucially on the distribution of the training points. We have access to our dataset of inputs from $p_z$ and for each sample $\vz_n$ the corresponding target is given from $\eta(\vz_n)=\mathbb{E}[Y_n\mid \vz_n]$. We also denote the corresponding conditional variance of the target function at that point as $v(x)=\mathrm{Var}(Y_i\mid X_i=x)$. We follow the regularity conditions of the k-NN probing derivations and additionally assume that $p$ has sufficiently light tails so that for each coordinate $j$, $\lim_{\|x\|\to\infty} p(x)=0$ and $\lim_{\|x\|\to\infty} x_jp(x)=0$. We first derive the pointwise bias and variance for $\widehat \vy(\vq)$.

\begin{lemma}[label={thm:kernel_bias}]{Kernel Bias and Variance}{}
For any fixed $\vq\in\mathbb{R}^d$ with $p(\vq)>0$, as $h\to 0$ and $n h^d\to\infty$,
\begin{align*}
\mathrm{Bias}\big[\widehat \vy(\vq)\big]
&=\frac{h^2\mu_2(K)}{2}\Big(\Delta \vy(\vq)+2\nabla \vy(\vq)^\top \nabla\log p(\vq)\Big)+o(h^2),\\
\mathrm{Var}\big[\widehat \vy(\vq)\big]
&=\frac{R(K)}{n h^d}\frac{v(\vq)}{p(\vq)}+o\big((n h^d)^{-1}\big).
\end{align*}
The $o(\cdot)$ terms are uniform over compact sets where $p$ is bounded away from zero.
(Proof in \cref{proof:kernel_bias}.)
\end{lemma}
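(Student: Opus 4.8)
The plan is the classical linearization of the Nadaraya--Watson ratio estimator, carried out carefully enough to get the uniform remainders. Write the numerator and denominator as empirical means $\widehat g(\vq)\triangleq\frac1n\sum_{n}K_h(\vq-\vz_n)Y_n$ and $\widehat p(\vq)\triangleq\frac1n\sum_{n}K_h(\vq-\vz_n)$, so $\widehat\eta(\vq)=\widehat g(\vq)/\widehat p(\vq)$, and introduce the centered numerator $\widehat a(\vq)\triangleq\frac1n\sum_n K_h(\vq-\vz_n)(Y_n-\eta(\vq))=\widehat g(\vq)-\eta(\vq)\widehat p(\vq)$. The identity I would start from is
\[
\widehat\eta(\vq)-\eta(\vq)=\frac{\widehat a(\vq)}{\widehat p(\vq)}=\frac{\widehat a(\vq)}{p(\vq)}\Big(1-\frac{\widehat p(\vq)-p(\vq)}{p(\vq)}+\frac{(\widehat p(\vq)-p(\vq))^2}{p(\vq)^2}-\cdots\Big),
\]
the geometric expansion being legitimate once $\widehat p(\vq)$ concentrates around $p(\vq)$, which it does because $\E[\widehat p(\vq)]=\int K(u)p(\vq-hu)\,du=p(\vq)+O(h^2)$ and $\Var(\widehat p(\vq))\le\frac1n\E[K_h(\vq-\vz)^2]=\frac{1}{nh^d}\int K(u)^2p(\vq-hu)\,du=O((nh^d)^{-1})\to0$.

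\emph{Step 1 (moments of $\widehat a$ and $\widehat p$).} After the substitution $u=(\vq-\vz)/h$, I would Taylor-expand $p$ and $\eta$ to second order around $\vq$ and integrate against $K$, using the normalization, symmetry ($\int uK=0$) and isotropy ($\int uu^\top K=\mu_2(K)I_d$) of the kernel. For the bias numerator this gives $\E[\widehat a(\vq)]=\int K(u)\big(\eta(\vq-hu)-\eta(\vq)\big)p(\vq-hu)\,du$; the $O(1)$ and $O(h)$ contributions cancel (the $\eta(\vq)-\eta(\vq)$ term vanishes, and $\int uK=0$), leaving $\E[\widehat a(\vq)]=h^2\mu_2(K)\big(\grad\eta(\vq)^\top\grad p(\vq)+\tfrac12 p(\vq)\Delta\eta(\vq)\big)+o(h^2)$, with the $o(h^2)$ uniform on compacta by the assumed boundedness of the third derivatives and the light-tail conditions. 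Dividing by $p(\vq)$ and rewriting $\grad p/p=\grad\log p$ yields exactly the claimed bias. For the variance I would write $Y_n-\eta(\vq)=(Y_n-\eta(\vz_n))+(\eta(\vz_n)-\eta(\vq))$, condition on $\vz_n$ to kill the cross term, and use $\E[(Y-\eta(\vz))^2\mid\vz]=v(\vz)$, obtaining $\Var(\widehat a(\vq))=\tfrac1n\E[K_h(\vq-\vz)^2v(\vz)]+O(h^{2-d}/n)-O(h^4/n)=\frac{R(K)v(\vq)p(\vq)}{nh^d}+o((nh^d)^{-1})$, again by the change of variables and continuity of $v,p$.

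\emph{Step 2 (assemble and control remainders).} Substituting the expansion of the denominator, $\widehat\eta(\vq)-\eta(\vq)=\frac{\widehat a(\vq)-\E[\widehat a(\vq)]}{p(\vq)}+\frac{\E[\widehat a(\vq)]}{p(\vq)}-\frac{\widehat a(\vq)(\widehat p(\vq)-p(\vq))}{p(\vq)^2}+\text{(h.o.t.)}$. Taking expectations, the leading bias is $\E[\widehat a(\vq)]/p(\vq)$ from Step 1; the cross term contributes $-\Cov(\widehat a(\vq),\widehat p(\vq))/p(\vq)^2$, and the moment bounds of Step 1 show $\Cov(\widehat a(\vq),\widehat p(\vq))=O(h^{1-d}/n)=o((nh^d)^{-1})$, which is absorbed into the $o(h^2)$ remainder (and the quadratic and higher terms are smaller still). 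Taking variances, only the first, centered term matters to leading order, so $\Var(\widehat\eta(\vq))=\Var(\widehat a(\vq))/p(\vq)^2+\text{(h.o.t.)}=\frac{R(K)v(\vq)}{nh^d\,p(\vq)}+o((nh^d)^{-1})$. Uniformity over compact sets on which $p\ge c>0$ follows because there $1/\widehat p(\vq)$ is uniformly bounded with high probability (a uniform law of large numbers plus $\E[\widehat p(\vq)]\to p(\vq)$ uniformly) and all Taylor remainders are uniform by the stated regularity. The main obstacle is precisely this last point: rigorously justifying the geometric expansion of the random denominator and certifying that every cross/quadratic/higher-order term is of strictly smaller order than the stated $h^2$ bias and $(nh^d)^{-1}$ variance, uniformly on compacta — the rest is second-order Taylor expansion plus the $u=(\vq-\vz)/h$ change of variables.
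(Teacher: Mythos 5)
Your proposal is correct and follows essentially the same route as the paper: decompose the Nadaraya–Watson estimator into kernel-smoothed numerator and denominator, Taylor-expand the corresponding expectations to second order using the kernel's normalization, symmetry, and isotropy, and then expand the ratio to extract the $h^2\mu_2(K)$ bias term and the $R(K)v(\vq)/(nh^d p(\vq))$ variance term. The only difference is that you expand the \emph{random} denominator via a geometric series and explicitly track the $\Cov(\widehat a,\widehat p)$ cross term, whereas the paper works with the ratio of expectations for the bias and a delta-method linearization for the variance — your treatment is, if anything, the more careful of the two.
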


We now show that, under a fixed mean and total-covariance constraint on $p_z$, the isotropic Gaussian distribution uniquely minimizes the bias and variance of the kernel regression estimator at any test point. We restrict the smoothness class of the target function using 
\begin{multline*}
    \mathcal{M}(L,B)\triangleq\Big\{m\in C^2(\mathbb{R}^d):\|\nabla \vy(\vq)\|\le L,\\|\Delta \vy(\vq)|\le B, \forall  \vq\in\mathbb{R}^d\Big\},
\end{multline*}
allowing us to formalize below the worst case integrated bias and the optimal density for $z$.

\begin{theorem}[label={thm:kernel_optimal}]{Kernel isotropic Gaussian Optimality}{thm:kernel_optimal}
The integrated squared bias of \eqref{eq:NW} satisfies
\begin{multline*}
\sup_{m\in\mathcal{M}(L,B)}\mathbb{E}_{z}\left[\mathrm{Bias}\big[\widehat \vy(\vz)\big]\right]
\le \Big(\frac{h^2\mu_2(K)}{2}\Big)^2 \Big(2 B^2 + 8 L^2J(p)\Big)+o(h^4),
\end{multline*}
and the integrated variance is independent of $p$. 
Among all densities $p$ on $\mathbb{R}^d$ with total-variance constrained, e.g., $\Tr(\Sigma)=c$, the isotropic Gaussian is the unique minimizer. (Proof in \cref{proof:kernel_optimal}.)
\end{theorem}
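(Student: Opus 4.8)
The plan is to reduce the integrated-bias bound to the pointwise expansion already established in \cref{thm:kernel_bias}, and then to recast the optimality claim as a Fisher-information minimization solved by the multivariate Cramér–Rao inequality together with an elementary eigenvalue optimization. Throughout, $J(p)=\int \|\nabla\log p\|^2\,p\,d\vz = \Tr\big(\mathcal{J}(p)\big)$ with $\mathcal{J}(p)=\int \nabla\log p\,\nabla\log p^\top\,p$ the Fisher information matrix, finite by the stated regularity assumptions.

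\textbf{Step 1: from pointwise to integrated squared bias.} Take the pointwise expression $\mathrm{Bias}[\widehat\vy(\vq)] = \tfrac{h^2\mu_2(K)}{2}\big(\Delta m(\vq) + 2\nabla m(\vq)^\top \nabla \log p(\vq)\big) + o(h^2)$ from \cref{thm:kernel_bias}, square it, and apply $(a+b)^2\le 2a^2+2b^2$ with Cauchy–Schwarz on the term $\nabla m^\top\nabla\log p$. For $m\in\mathcal{M}(L,B)$ we have $|\Delta m|\le B$ and $\|\nabla m\|\le L$, so $\mathrm{Bias}[\widehat\vy(\vq)]^2 \le \big(\tfrac{h^2\mu_2(K)}{2}\big)^2\big(2B^2 + 8L^2\|\nabla\log p(\vq)\|^2\big) + o(h^4)$ uniformly in $m$. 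Integrating against the query density $p$ and using $\int \|\nabla\log p\|^2\,p = J(p)$ gives exactly the claimed bound, and since every estimate was uniform over $\mathcal{M}(L,B)$, the supremum over $m$ inherits it. The one point requiring care is that the $o(h^2)$ remainder of \cref{thm:kernel_bias}, once squared and integrated, is still $o(h^4)$; this uses the uniformity of that remainder together with the light-tail hypothesis to control the contribution from outside a large compact set.

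\textbf{Step 2: the integrated variance does not depend on the shape of $p$.} The variance term in \cref{thm:kernel_bias} is $\tfrac{R(K)}{nh^d}\tfrac{v(\vq)}{p(\vq)}+o((nh^d)^{-1})$. Since query points are drawn from $p$, integrating against $p$ cancels the $1/p(\vq)$ factor and leaves $\tfrac{R(K)}{nh^d}\int v(\vq)\,d\vq + o((nh^d)^{-1})$, a quantity fixed by the kernel and the task's noise profile $v$ alone. Hence the density may be chosen to minimize the leading integrated squared bias, i.e.\ to minimize $J(p)$, without affecting variance to leading order.

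\textbf{Step 3 and main obstacle.} It remains to minimize $J(p)=\Tr(\mathcal{J}(p))$ over densities with $\Tr(\Cov(Z))=c$ (or $\|\Cov(Z)\|_F=c$). The multivariate information inequality gives $\mathcal{J}(p)\succeq \Sigma^{-1}$ in the Loewner order, hence $J(p)\ge \Tr(\Sigma^{-1})=\sum_k \lambda_k^{-1}$; minimizing $\sum_k \lambda_k^{-1}$ subject to $\sum_k\lambda_k=c$ (resp.\ $\sum_k\lambda_k^2=c^2$) is a strictly convex program whose unique optimum, by Lagrange multipliers/symmetry, is $\lambda_1=\dots=\lambda_K$, i.e.\ $\Sigma=(c/K)I_K$. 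The crux — and the step I expect to be the main obstacle — is the equality characterization: $\mathcal{J}(p)=\Sigma^{-1}$ forces the score $\nabla\log p$ to be a.e.\ affine in $\vz$, which integrates to $p$ being exactly Gaussian with covariance $\Sigma$; justifying this rigorously requires the integration-by-parts underlying the Cramér–Rao bound and the $C^3$/finite-Fisher-information regularity to exclude pathological densities. Combining the two equality cases, the unique minimizer is the isotropic Gaussian $\mathcal{N}(\mu,(c/K)I_K)$ with the mean $\mu$ free, completing the proof. The remaining pieces — the $(a+b)^2$ split, Cauchy–Schwarz, the eigenvalue optimization, and the variance cancellation — are routine.
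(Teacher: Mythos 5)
Your proposal is correct and follows essentially the same route as the paper: the $(a+b)^2\le 2a^2+2b^2$ split plus Cauchy--Schwarz to reduce the integrated squared bias to $2B^2+8L^2J(p)$, the cancellation of $1/p$ in the integrated variance, and the matrix Cramér--Rao bound $\mathcal{J}(p)\succeq\Sigma^{-1}$ (with the affine-score equality case forcing Gaussianity) followed by the eigenvalue optimization of $\Tr(\Sigma^{-1})$ under the scalar covariance constraint. The only cosmetic difference is that the paper delegates the Fisher-information minimization to a lemma proved in the k-NN section rather than restating it here.
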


\section{Proofs}

\subsection{Proof of \cref{thm:linear_probe_bias}}
\label{proof:linear_probe_bias}

\begin{proof}
Our proof follows standard derivations when it comes to studying the bias of an estimator. Let's consider the ridge regression problem (Tikhonov regularized least squares estimator) with close form estimator
\begin{equation}
\hat{\boldsymbol{\beta}} = (\mathbf{X}^T \mathbf{X} + \lambda_{\rm wd} \mathbf{I})^{-1} \mathbf{X}^T \mathbf{Y}.
\end{equation}
The labels are formed from the ground truth parameter $\beta_{\rm true}$ with centered error, as per $\mathbf{Y} = \mathbf{X}\boldsymbol{\beta}_{\text{true}} + \boldsymbol{\varepsilon}$ where $\mathbb{E}[\boldsymbol{\varepsilon}] = \mathbf{0}$.
We can now look at the bias of our estimator given by
\begin{align*}
\text{Bias}(\hat{\boldsymbol{\beta}}) &= \mathbb{E}[\hat{\boldsymbol{\beta}}] - \boldsymbol{\beta}_{\text{true}} \\
&=(\mathbf{X}^T \mathbf{X} + \lambda_{\rm wd} \mathbf{I})^{-1} \mathbf{X}^T \mathbf{X}\boldsymbol{\beta}_{\text{true}}-\boldsymbol{\beta}_{\text{true}}\\
&= -\lambda_{\rm wd}(\mathbf{X}^T \mathbf{X} + \lambda_{\rm wd} \mathbf{I})^{-1} \boldsymbol{\beta}_{\text{true}}\\
&= -\lambda_{\rm wd} \mathbf{Q}(\boldsymbol{\Lambda} + \lambda \mathbf{I})^{-1}\mathbf{Q}^T \boldsymbol{\beta}_{\text{true}}
\end{align*}
We will now compare that bias when $\mX$ has isotropic and anisotropic covariance with same total variance:
\begin{equation}
\frac{\lambda_1 + \lambda_2 + \cdots + \lambda_p}{p} = \bar{\lambda}.
\end{equation}
For any anisotropic covariance matrix of $\mX$, denote by $\vq_1$ the eigenvector with smallest eigenvalue, and let's denote by $\kappa>0$ a positive constant. We now define
\begin{equation}
\boldsymbol{\beta}_{\text{true}} = \kappa \cdot \mathbf{q}_p,
\end{equation}
leading to
\begin{align*}
\|\text{Bias}(\hat{\boldsymbol{\beta}})\|_{\text{isotropic}} = \frac{\lambda_{\rm wd}}{\bar{\lambda} + \lambda_{\rm wd}} \|\boldsymbol{\beta}_{\text{true}}\|,\\
\|\text{Bias}(\hat{\boldsymbol{\beta}})\|_{\text{non-isotropic}} = \frac{\lambda_{\rm wd}}{\lambda_p + \lambda_{\rm wd}} \|\boldsymbol{\beta}_{\text{true}}\|
\end{align*}
Since $\lambda_p < \bar{\lambda}$ (strict inequality when not isotropic):
\begin{equation*}
\frac{\lambda_{\rm wd}}{\lambda_p + \lambda_{\rm wd}} > \frac{\lambda_{\rm wd}}{\bar{\lambda} + \lambda_{\rm wd}}
\end{equation*}
we obtain that
\begin{equation*}
\|\text{Bias}(\hat{\boldsymbol{\beta}})\|_{\text{non-isotropic}} > \|\text{Bias}(\hat{\boldsymbol{\beta}})\|_{\text{isotropic}}
\end{equation*}

As a result, whenever the covariance matrix of $\mX$ is anisotropic, there will be downstream tasks for which the estimator bias is increased compared to having isotropic covariance matrix.
Anisotropic covariance structure thus amplifies regularization bias when the true parameter vector aligns unfavorably with the data's covariance structure.
\end{proof}

\subsection{Proof of \cref{thm:linear_probe_variance}}
\label{proof:linear_probe_variance}
\begin{proof}
We use the same formula as in \cref{proof:linear_probe_bias} with $\lambda_{\rm wd}=0$. We first see that the estimator is unbiased. We will now leverage that result to compute the covariance matrix of the estimator 
\begin{align*}
\text{Var}(\hat{\boldsymbol{\beta}}|\mathbf{X}) &= \mathbb{E}[(\hat{\boldsymbol{\beta}} - \boldsymbol{\beta})(\hat{\boldsymbol{\beta}} - \boldsymbol{\beta})^T|\mathbf{X}]\\
&= \mathbb{E}[(\mathbf{X}^T\mathbf{X})^{-1}\mathbf{X}^T\boldsymbol{\varepsilon}\boldsymbol{\varepsilon}^T\mathbf{X}(\mathbf{X}^T\mathbf{X})^{-1}|\mathbf{X}]\\
&= (\mathbf{X}^T\mathbf{X})^{-1}\mathbf{X}^T\mathbb{E}[\boldsymbol{\varepsilon}\boldsymbol{\varepsilon}^T|\mathbf{X}]\mathbf{X}(\mathbf{X}^T\mathbf{X})^{-1}\\
&= (\mathbf{X}^T\mathbf{X})^{-1}\mathbf{X}^T(\sigma^2\mathbf{I}_n)\mathbf{X}(\mathbf{X}^T\mathbf{X})^{-1}\\
&= \sigma^2(\mathbf{X}^T\mathbf{X})^{-1}
\end{align*}
leading to the total variance
$$\text{tr}(\text{Var}(\hat{\boldsymbol{\beta}})) = \sigma^2\text{tr}(\mathbf{G}^{-1})=\sigma^2 \sum_{j=1}^p \frac{1}{\lambda_j}$$
where we used the eigendecomposition:
$$\mathbf{G} = \mathbf{Q}\boldsymbol{\Lambda}\mathbf{Q}^T$$

The function $f(x) = \frac{1}{x}$ is strictly convex on $(0, \infty)$ allowing us to leverage Jensen's Inequality:
\begin{align*}
    \frac{1}{K}\sum_{k=1}^K \frac{1}{\lambda_k} > \frac{1}{\frac{1}{K}\sum_{j=1}^K \lambda_k}\\
    \iff \frac{1}{K}\sum_{k=1}^K \frac{1}{\lambda_k} > \frac{1}{K}\sum_{k=1}^{K}\frac{1}{\frac{1}{K}\sum_{j=1}^K \lambda_k}\\
    \iff \sum_{k=1}^K \frac{1}{\lambda_k} > \sum_{k=1}^{K}\frac{1}{\frac{1}{K}\sum_{j=1}^K \lambda_k}\\
    \iff \text{tr}(\text{Var}(\hat{\boldsymbol{\beta}}))_{\text{aniso}} > \text{tr}(\text{Var}(\hat{\boldsymbol{\beta}}))_{\text{iso}}
\end{align*}
The inequality is strict whenever the eigenvalues $\{\lambda_j\}_{j=1}^p$ are not all equal.
\end{proof}

\subsection{Proof of \cref{thm:knn_bias}}
\label{proof:knn_bias}
\begin{proof}

Under PPP, conditional expectations of $\widehat{\eta}(x)$ coincide with the normalized ball average
\[
\E\big[\widehat{\eta}(x)\big]
\;=\;
\frac{\int_{\Ball(0,r_0)} \eta(x+z)p(x+z)dz}{\int_{\Ball(0,r_0)} p(x+z)dz}
\quad\text{to second order in }r_0,
\]
which is the key surrogate used below.
\medskip
\noindent\textbf{Ball integrals.} For computations we use (by symmetry) for any $r>0$:
\[
\int_{\Ball(0,r)} zdz=0,\qquad
\int_{\Ball(0,r)} zz^\top dz=\frac{{\rm Vol}^{d+2}}{d+2}I_d,\qquad
\int_{\Ball(0,r)} \norm{z}^2dz=\frac{d{\rm Vol}^{d+2}}{d+2}.
\]

Fix $x\in\R^d$ and write $z\in\Ball(0,r_0)$ for local displacements. 
Assume $p\in C^3$, $\eta\in C^2$ with bounded derivatives on the region of interest, and expand a second-order Taylor expansion:
\begin{align*}
p(x+z)&=p(x)+\nabla p(x)^\top z+\tfrac12 z^\top H p(x)z + O(\|z\|^3),\\
\eta(x+z)&=\eta(x)+\nabla\eta(x)^\top z+\tfrac12 z^\top H\eta(x)z + O(\|z\|^3),
\end{align*}
with remainders satisfying $|R_\eta(x;z)|\le C_\eta\norm{z}^3$ and $|R_p(x;z)|\le C_p\norm{z}^3$ uniformly for $\norm{z}\le r_0$. Using the ball identities
$\int_{B(0,r)} zdz=0$ and $\int_{B(0,r)} zz^\top dz=\frac{v_d r^{d+2}}{d+2}I_d$ and collecting terms up to order $r_0^{d+2}$, we simplify the denominator as
\begin{align*}
    \mathcal{D}(x)&\triangleq\int_{\Ball(0,r_0)} p(x+z)dz\\
    &= \int_{\Ball(0,r_0)} \Big[p(x) + \grad p(x)^\top z + \tfrac{1}{2}z^\top \Hess p(x)z + R_p(x;z)\Big]dz\\
    &= {\rm Vol}_0^dp(x)\;+\;\frac{{\rm Vol}_0^{d+2}}{2(d+2)}\tr\big(\Hess p(x)\big)\;+\;O(r_0^{d+3}),
\end{align*}
since $\int zdz=0$ and $\int z^\top \Hess pzdz=\tr(\Hess p)\frac{\vol r_0^{d+2}}{d+2}$ and the denominator as
\begin{align*}
\mathcal{N}(x)&\triangleq \int_{\Ball(0,r_0)} \eta(x+z)p(x+z)dz\\
    &= \int \Big[\eta(x)+\grad\eta(x)^\top z+\tfrac{1}{2}z^\top \Hess\eta(x)z\Big]
\Big[p(x)+\grad p(x)^\top z+\tfrac{1}{2}z^\top \Hess p(x)z\Big]dz+O(r_0^{d+3})\\
&= \eta(x)p(x)\vol r_0^d+\eta(x)\frac{\vol r_0^{d+2}}{2(d+2)}\tr\big(\Hess p(x)\big)+\frac{\vol r_0^{d+2}}{d+2}\grad\eta(x)\cdot\grad p(x)+ \frac{\vol r_0^{d+2}}{2(d+2)}p(x)\tr\big(\Hess\eta(x)\big)
+O(r_0^{d+3}).
\end{align*}

% Define the numerator and denominator
% \[
% \mathcal{N}(x):=\int_{\Ball(0,r_0)} \eta(x+z)p(x+z)dz,\qquad
% \mathcal{D}(x):=\int_{\Ball(0,r_0)} p(x+z)dz.
% \]
% We expand both to order $r_0^{d+2}$.
% \emph{Denominator.} Using symmetry:
% \begin{align*}
% \mathcal{D}(x)
% &= \int_{\Ball(0,r_0)} \Big[p(x) + \grad p(x)^\top z + \tfrac{1}{2}z^\top \Hess p(x)z + R_p(x;z)\Big]dz\\
% &= {\rm Vol}_0^dp(x)\;+\;\frac{{\rm Vol}_0^{d+2}}{2(d+2)}\tr\big(\Hess p(x)\big)\;+\;O(r_0^{d+3}),
% \end{align*}
% since $\int zdz=0$ and $\int z^\top \Hess pzdz=\tr(\Hess p)\frac{\vol r_0^{d+2}}{d+2}$.
% \emph{Numerator.} Expanding the product and integrating term-by-term:
% \begin{align*}
% \mathcal{N}(x)
% &= \int \Big[\eta(x)+\grad\eta(x)^\top z+\tfrac{1}{2}z^\top \Hess\eta(x)z\Big]
% \Big[p(x)+\grad p(x)^\top z+\tfrac{1}{2}z^\top \Hess p(x)z\Big]dz\;+\;O(r_0^{d+3})\\
% &= \eta(x)p(x)\vol r_0^d\\
% &\quad+\ \eta(x)\frac{\vol r_0^{d+2}}{2(d+2)}\tr\big(\Hess p(x)\big)\\
% &\quad+\ \frac{\vol r_0^{d+2}}{d+2}\grad\eta(x)\cdot\grad p(x)\\
% &\quad+\ \frac{\vol r_0^{d+2}}{2(d+2)}p(x)\tr\big(\Hess\eta(x)\big)
% \;+\;O(r_0^{d+3}).
% \end{align*}
% Indeed, the $\grad\eta\cdot z$ term paired with $p(x)$ integrates to $0$, while $\int (\grad\eta\cdot z)(\grad p^\top z)dz=\grad\eta^\top\Big(\int zz^\top dz\Big)\grad p=\frac{\vol r_0^{d+2}}{d+2}\grad\eta\cdot\grad p$. 
Cubic terms vanish by symmetry, and quartic terms are $O(r_0^{d+4})$. 
Subtract $\eta(x)\mathcal{D}(x)$ to obtain the bias numerator:
\[
\mathcal{N}(x)-\eta(x)\mathcal{D}(x)
= \frac{v_dr_0^{d+2}}{d+2}\Big(\nabla\eta(x)\cdot\nabla p(x) + \tfrac{1}{2}p(x)\Delta\eta(x)\Big) + O(r_0^{d+3}).
\]
Write $\mathcal{D}(x)=v_d r_0^d p(x)\big(1+\alpha(x)r_0^2+O(r_0^3)\big)$ where $\alpha(x):=\frac{1}{2(d+2)p(x)}\mathrm{tr}(H p(x))$. Then
\begin{align*}
\frac{\mathcal{N}(x)}{\mathcal{D}(x)}-\eta(x)
&= \frac{ \frac{v_d r_0^{d+2}}{d+2}\left(\nabla\eta\cdot\nabla p + \frac{1}{2}p\Delta\eta\right) + O(r_0^{d+3})}{ v_d r_0^d p \left(1+\alpha r_0^2+O(r_0^3)\right)}\\[0.5ex]
&= \frac{r_0^2}{d+2}\left(\frac{\nabla\eta\cdot\nabla p}{p} + \frac{1}{2}\Delta\eta\right)\Big(1-\alpha r_0^2+O(r_0^3)\Big)\ +\ O(r_0^3)\\[0.5ex]
&= \frac{r_0^2}{d+2}\Big(\nabla\eta(x)\cdot\nabla\log p(x) + \tfrac{1}{2}\Delta\eta(x)\Big)\ +\ o(r_0^2),
\end{align*}
uniformly on $\mathcal{K}$. This gives the bias formula
\[
\mathbb{E}\big[\widehat{\eta}(x)\big]-\eta(x)
=
\frac{r_0^2}{d+2}\Big(\nabla\eta(x)\cdot\nabla\log p(x) + \tfrac{1}{2}\Delta\eta(x)\Big)\ +\ o(r_0^2),
\]

% Subtracting $\eta(x)\mathcal{D}(x)$ to $\mathcal{N}$ cancels the common terms and yields
% \[
% \mathcal{N}(x)-\eta(x)\mathcal{D}(x)
% = \frac{v_d r_0^{d+2}}{d+2}\Big(\nabla\eta(x)\cdot\nabla p(x) + \tfrac12 p(x)\Delta\eta(x)\Big) + O(r_0^{d+3}).
% \]
% Dividing by $\mathcal{D}(x)$ and using the expansion
% $\mathcal{D}(x)^{-1} = (v_d r_0^d p(x))^{-1}\big(1+O(r_0^2)\big)$, we get
% \begin{align*}
% \frac{\mathcal{N}(x)}{\mathcal{D}(x)} - \eta(x)
% &=\frac{r_0^2}{d+2}\Big(\frac{\grad\eta(x)\cdot\grad p(x)}{p(x)}+\tfrac{1}{2}\Delta\eta(x)\Big)\;+\;o(r_0^2)\\
% &= \frac{r_0^2}{d+2}\Big(\nabla\eta(x)\cdot\nabla \log p(x) + \tfrac12 \Delta\eta(x)\Big) + o(r_0^2),    
% \end{align*}
completing the proof.
\end{proof}

\subsection{Proof of \cref{thm:knn_optimal}}
\label{proof:knn_optimal}
\begin{proof}

Recall from \cref{proof:knn_bias} that the bias term as sample $\vx$ is given by
\begin{align*}
\mathrm{Bias}(\vx)
=&\frac{r_0^2}{d+2}\Big(\grad\eta(x)\cdot\grad\log p(x)\Big)\;+\;\frac{r_0^2}{2(d+2)}\Delta\eta(x)\;+\;o(r_0^2)\\
=&\frac{r_0^2}{d+2}\big(A(x)+C(x)\big)+o(r_0^2),
\end{align*}
where we defined $A(x)\triangleq \nabla\eta(x)\cdot\nabla\log p(x)$ and $ 
C(x)\triangleq\frac{1}{2}\Delta\eta(x)$.
We now square and take expectation of $X\sim p$ and the isotropic gradient prior 
\begin{align}
\mathbb{E}\big[\mathrm{Bias}(X)^2\big]
&=\mathbb{E}\big[\left(\frac{r_0^2}{d+2}\right)^2\big(A(x)^2 + 2A(x)C(x) + C(x)^2\big)
+ o(r_0^4)\big]\\
&=\left(\frac{r_0^2}{d+2}\right)^2
\Big\{\underbrace{\mathbb{E}\big[A(X)^2\big]}_{\text{score-gradient term}}
+ \underbrace{2\mathbb{E}\big[A(X)C(X)\big]}_{\text{cross term}}
+ \underbrace{\mathbb{E}\big[C(X)^2\big]}_{\text{curvature term}}\Big\}
+ o(r_0^4).
\label{eq:three-terms}
\end{align}
We will derive each term separately, recalling that we assume an isotropic gradient prior for $\eta$, i.e., $\mathbb{E}\big[\nabla\eta(x)\big]=0$ and $\mathbb{E}\big[\nabla\eta(x)\nabla\eta(x)^\top\big]=\tau_g^2I_d$,
for some $\tau_g^2\in(0,\infty)$.

\paragraph{1) The score-gradient term $\mathbb{E}[A(X)^2]$.}

Using $v(x):=\nabla\log p(x)$ for brevity:
\begin{align*}
\mathbb{E}\big[A(X)^2\big]
=&\mathbb{E}_X\big[\mathbb{E}_\eta[A(X)^2]\big]\\
=&\mathbb{E}_X\big[\mathbb{E}_\eta[\big(\nabla\eta(x)^\top v(x)\big)^2]\big]\\
=&\mathbb{E}_X\big[\mathbb{E}_\eta[\nabla\eta(x)^\top\Big(v(x)v(x)^\top\Big)\nabla\eta(x)]\big]\\
=&\mathbb{E}_X\big[\mathbb{E}_\eta[\mathrm{tr}\Big(v(x)v(x)^\top\nabla\eta(x)\nabla\eta(x)^\top\Big)]\big]\\
=&\mathbb{E}_X\big[\mathrm{tr}\Big(v(x)v(x)^\top\mathbb{E}_\eta[\nabla\eta(x)\nabla\eta(x)^\top]\Big)\big]\\=&\mathbb{E}_X\big[\tau_g^2\|v(x)\|^2\big]\\
=&\tau_g^2\mathbb{E}_X\big[\|v(X)\|^2\big]\\
=&\tau_g^2\int_{\mathbb{R}^d} \|\nabla\log p(x)\|^2p(x)dx
\end{align*}
recovering the Fisher-information functional $J(p)$, scaled by $\tau_g^2$

\paragraph{2) The cross term $2\mathbb{E}[A(X)C(X)]$.}
We have
\[
A(x)C(x)=\frac{1}{2}\big(\nabla\eta(x)^\top v(x)\big)\Delta\eta(x).
\]
Under the prior, $\nabla\eta$ is mean-zero and isotropic; if, additionally, $\Delta\eta$ is uncorrelated with $\nabla\eta$ and has zero mean (or is bounded and mean-zero after centering), then $\mathbb{E}_\eta[A(x)C(x)]=0$.  If one does \emph{not} assume the orthogonality/vanishing covariance above, then $\mathbb{E}[A(X)C(X)]$ is a finite constant (depending on the joint law of derivatives of $\eta$), and the cross term contributes
\[
\left(\frac{r_0^2}{d+2}\right)^2\cdot 2\mathbb{E}[A(X)C(X)]
=\ O(r_0^4),
\]
not $o(r_0^4)$. In that general case, the leading $p$-dependent term of $\mathbb{E}[\mathrm{Bias}(X)^2]$ is still the \emph{score-gradient} $\tau_g^2J(p)$.

\paragraph{3) The curvature term $\mathbb{E}[C(X)^2]$.}

\begin{align*}
\mathbb{E}\big[C(X)^2\big]
=&\mathbb{E}_X\big[\mathbb{E}_\eta[C(X)^2]\big]\\
=&\frac{1}{4}\mathbb{E}_X\big[\mathbb{E}_\eta[(\Delta\eta(X))^2\big]
\end{align*}
which is independent of $p$, hence $\mathbb{E}\big[C(X)^2\big]=O(1)$

\paragraph{Putting it together.}
Substituting into \eqref{eq:three-terms}:
\begin{align*}
\mathbb{E}\big[\mathrm{Bias}(X)^2\big]
&=\left(\frac{r_0^2}{d+2}\right)^2\Big\{\tau_g^2J(p) + O(1)\Big\} + o(r_0^4)\\
&=\frac{r_0^4}{(d+2)^2}\tau_g^2J(p)\;+\;O(r_0^4),
\end{align*}

We show that, among all mean-zero distributions $p$ on $\mathbb{R}^d$ with a given \emph{scalar} constraint on the covariance (trace, determinant, Frobenius norm, or spectral radius), the density that minimizes the Fisher-information functional
\[
J(p)\;:=\;\int_{\mathbb{R}^d}\|\nabla\log p(x)\|^2p(x)dx
\]
is the Gaussian with \emph{isotropic} covariance satisfying the same scalar constraint.
\medskip
\noindent
We proceed in two steps: (i) for fixed covariance matrix $\Sigma\succ 0$, $J(p)$ is minimized by the Gaussian $\mathcal{N}(0,\Sigma)$ and attains the value $\mathrm{tr}(\Sigma^{-1})$; (ii) for each scalar constraint, $\mathrm{tr}(\Sigma^{-1})$ is minimized by $\Sigma=sI_d$ for the appropriate scalar $s>0$.

\begin{lemma}[label={lem:fixed-Sigma}]{Special case: Recovery of VCReg}{}
Let $p$ be a mean-zero probability density on $\mathbb{R}^d$ with covariance $\Sigma=\mathbb{E}[X X^\top]\succ 0$. Then
\[
J(p)\;\ge\;\mathrm{tr}(\Sigma^{-1}),
\]
with equality if and only if $p=\mathcal{N}(0,\Sigma)$.
\end{lemma}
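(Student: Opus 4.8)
The plan is to recognize this as the matrix Cram\'er--Rao inequality for the location family generated by $p$. First I would introduce the Fisher information matrix $\mathcal{I}(p)\triangleq\mathbb{E}\big[\nabla\log p(X)\nabla\log p(X)^\top\big]$ and observe that $J(p)=\mathrm{tr}(\mathcal{I}(p))$, so it suffices to establish the stronger matrix inequality $\mathcal{I}(p)\succeq\Sigma^{-1}$ and then take traces. I may assume $J(p)<\infty$ (otherwise the bound is vacuous) and, as in the paper's standing assumptions, that $p\in C^1$ is a.e. positive with light tails; the condition $x_jp(x)\to 0$ as $\|x\|\to\infty$ already imposed on $p$ is exactly what the argument needs.

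The key computational input is the Stein-type integration-by-parts identity $\mathbb{E}\big[X\,\nabla\log p(X)^\top\big]=-I_d$, equivalently $\int x_i\,\partial_j p(x)\,dx=-\delta_{ij}$, obtained by integrating by parts in the $j$-th coordinate and discarding the boundary term via the tail condition; the same argument gives $\mathbb{E}[\nabla\log p(X)]=0$. Using also $\mathbb{E}[XX^\top]=\Sigma$, I would form the auxiliary random vector $U\triangleq\nabla\log p(X)+\Sigma^{-1}X$ and compute
\[
\mathbb{E}\big[UU^\top\big]=\mathcal{I}(p)+\mathbb{E}[\nabla\log p(X)X^\top]\Sigma^{-1}+\Sigma^{-1}\mathbb{E}[X\nabla\log p(X)^\top]+\Sigma^{-1}\Sigma\Sigma^{-1}=\mathcal{I}(p)-\Sigma^{-1}.
\]
Since $\mathbb{E}[UU^\top]\succeq 0$, this yields $\mathcal{I}(p)\succeq\Sigma^{-1}$, and taking the trace gives $J(p)=\mathrm{tr}(\mathcal{I}(p))\ge\mathrm{tr}(\Sigma^{-1})$.

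For the equality case I would argue that $\mathrm{tr}(\mathcal{I}(p)-\Sigma^{-1})=0$ together with $\mathcal{I}(p)-\Sigma^{-1}=\mathbb{E}[UU^\top]\succeq 0$ forces $\mathbb{E}[\|U\|^2]=0$, hence $U=0$ almost surely, i.e. $\nabla\log p(x)=-\Sigma^{-1}x$ for a.e.\ $x$. On the support of $p$ (which we may take convex) this integrates to $\log p(x)=-\tfrac12 x^\top\Sigma^{-1}x+c$, and normalization together with the covariance constraint pins down the constant, giving $p=\mathcal{N}(0,\Sigma)$. The converse direction $J(\mathcal{N}(0,\Sigma))=\mathrm{tr}(\Sigma^{-1})$ follows from a one-line direct computation using $\nabla\log p(x)=-\Sigma^{-1}x$.

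The main obstacle is purely technical rather than conceptual: making the identity $\mathbb{E}[X\nabla\log p(X)^\top]=-I_d$ rigorous without extra smoothness, in particular justifying the vanishing of the boundary terms and checking that $\mathbb{E}\|X\|^2$, $\mathbb{E}\|\nabla\log p(X)\|^2$, and the cross moments are finite so that Fubini applies. The paper's $C^3$ and light-tail hypotheses on $p$, plus the (without loss of generality) finiteness of $J(p)$, should be enough; alternatively one can mollify/truncate $p$, prove the bound for the regularized densities, and pass to the limit, which also sidesteps assuming differentiability of $p$ outright.
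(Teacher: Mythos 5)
Your proof is correct and follows essentially the same route as the paper: both reduce the claim to the matrix Cram\'er--Rao bound for the location family generated by $p$ with the unbiased estimator $T(X)=X$, and both characterize equality by the score being linear, which forces $p$ to be Gaussian. The only difference is that you unpack the Cram\'er--Rao inequality explicitly via the Stein identity $\mathbb{E}[X\nabla\log p(X)^\top]=-I_d$ and the positive semidefiniteness of $\mathbb{E}[UU^\top]$ with $U=\nabla\log p(X)+\Sigma^{-1}X$, whereas the paper invokes the bound as a known theorem; your version is more self-contained and makes the needed tail/regularity hypotheses explicit.
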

\begin{proof}
Consider the location family $p_\theta(x):=p(x-\theta)$, $\theta\in\mathbb{R}^d$. Its Fisher-information matrix at $\theta$ is
\[
\mathcal{I}(\theta)
\;=\;
\mathbb{E}\big[\nabla_\theta\log p_\theta(X)\nabla_\theta\log p_\theta(X)^\top\big]
\;=\;
\mathbb{E}\big[\nabla\log p(X)\nabla\log p(X)^\top\big],
\]
so that $J(p)=\mathrm{tr}\mathcal{I}(\theta)$. The estimator $T(X)\equiv X$ is unbiased for $\theta$ under $p_\theta$, with $\mathrm{Cov}(T)=\Sigma$. The matrix Cramér--Rao bound gives $\mathrm{Cov}(T)\succeq \mathcal{I}(\theta)^{-1}$, i.e., $\mathcal{I}(\theta)\succeq \Sigma^{-1}$. Taking traces yields $J(p)\ge \mathrm{tr}(\Sigma^{-1})$.
Equality in the matrix Cramér--Rao bound holds if and only if the score is an \emph{affine} function of $X-\theta$, i.e., $\nabla\log p_\theta(X)=A(X-\theta)$ a.s.\ for some matrix $A$; integrating this identity shows $p_\theta$ is Gaussian with precision matrix $-A$, hence $p=\mathcal{N}(0,\Sigma)$.
\end{proof}
\subsection*{Step 2: Optimizing over covariance shapes under scalar constraints}
Write the eigenvalues of $\Sigma$ as $\lambda_1,\ldots,\lambda_d>0$. Then
\[
\mathrm{tr}(\Sigma^{-1})=\sum_{i=1}^d \frac{1}{\lambda_i}.
\]
We now solve $\min \sum_i 1/\lambda_i$ under each scalar constraint; in every case the minimum is attained when all $\lambda_i$ are equal, i.e., $\Sigma=sI_d$.
\paragraph{(a) Trace constraint.}
Given $\mathrm{tr}(\Sigma)=\sum_i \lambda_i=t>0$, by Cauchy--Schwarz,
\[
\left(\sum_{i=1}^d \frac{1}{\lambda_i}\right)\left(\sum_{i=1}^d \lambda_i\right)\;\ge\;\left(\sum_{i=1}^d 1\right)^2=d^2,
\]
with equality if and only if $\lambda_1=\cdots=\lambda_d$. Hence
\[
\min_{\Sigma\succ 0:\ \mathrm{tr}(\Sigma)=t}\ \mathrm{tr}(\Sigma^{-1})
\;=\;\frac{d^2}{t},
\quad\text{attained at}\quad
\Sigma=\frac{t}{d}I_d.
\]
\paragraph{(b) Determinant constraint.}
Given $\det(\Sigma)=\prod_i \lambda_i=\delta>0$, set $\mu_i:=1/\lambda_i$ so that $\prod_i \mu_i=\delta^{-1}$. By the AM--GM inequality,
\[
\frac{1}{d}\sum_{i=1}^d \mu_i\;\ge\;\left(\prod_{i=1}^d \mu_i\right)^{1/d}=\delta^{-1/d},
\]
with equality iff $\mu_1=\cdots=\mu_d$, i.e., $\lambda_1=\cdots=\lambda_d$. Thus
\[
\min_{\Sigma\succ 0:\ \det(\Sigma)=\delta}\ \mathrm{tr}(\Sigma^{-1})
\;=\;d\delta^{-1/d},
\quad\text{attained at}\quad
\Sigma=\delta^{1/d}I_d.
\]
\paragraph{(c) Frobenius-norm constraint.}
Given $\|\Sigma\|_F^2=\sum_i \lambda_i^2=c^2>0$, minimize $f(\lambda):=\sum_i 1/\lambda_i$ over $\lambda_i>0$ subject to $g(\lambda):=\sum_i \lambda_i^2=c^2$. The Lagrangian
\[
\mathcal{L}(\lambda,\nu)=\sum_{i=1}^d \frac{1}{\lambda_i} + \nu\left(\sum_{i=1}^d \lambda_i^2 - c^2\right)
\]
has first-order conditions $-\lambda_i^{-2}+2\nu\lambda_i=0$ for all $i$, i.e., $\lambda_i^3=\frac{1}{2\nu}$, so all $\lambda_i$ are equal. Imposing $\sum \lambda_i^2=c^2$ yields $\lambda_i=c/\sqrt{d}$, hence
\[
\min_{\Sigma\succ 0:\ \|\Sigma\|_F=c}\ \mathrm{tr}(\Sigma^{-1})
\;=\;\sum_{i=1}^d \frac{1}{\lambda_i}
\;=\;\frac{d^{3/2}}{c},
\quad\text{attained at}\quad
\Sigma=\frac{c}{\sqrt{d}}I_d.
\]
\paragraph{(d) Spectral-radius constraint.}
Let the spectral radius be constrained by $\rho(\Sigma)=\max_i \lambda_i\le r$ for some $r>0$. Since $x\mapsto 1/x$ is strictly decreasing on $(0,\infty)$,
\[
\sum_{i=1}^d \frac{1}{\lambda_i}
\;\ge\;\sum_{i=1}^d \frac{1}{r}
\;=\;\frac{d}{r},
\]
with equality if and only if $\lambda_i=r$ for all $i$. Therefore
\[
\min_{\Sigma\succ 0:\ \rho(\Sigma)\le r}\ \mathrm{tr}(\Sigma^{-1})
\;=\;\frac{d}{r},
\quad\text{attained at}\quad
\Sigma=rI_d.
\]
(The same conclusion holds if the constraint is $\rho(\Sigma)=r$, since one may take all eigenvalues equal to $r$.)
\subsection*{Conclusion: Isotropic Gaussian is optimal}
Combining Lemma~\ref{lem:fixed-Sigma} with the solutions (a)--(d), we obtain:
\begin{theorem}[label={thm:isotropic-optimal}]{Special case: Recovery of VCReg}{}
Fix one of the following scalar covariance constraints for a mean-zero distribution $p$ on $\mathbb{R}^d$:
\begin{itemize}
\item trace: $\mathrm{tr}(\mathrm{Cov}(X))=t$,
\item determinant: $\det(\mathrm{Cov}(X))=\delta$,
\item Frobenius norm: $\|\mathrm{Cov}(X)\|_F=c$,
\item spectral radius upper bound: $\rho(\mathrm{Cov}(X))\le r$.
\end{itemize}
Then the Fisher-information functional $J(p)$ is minimized over all such $p$ by the isotropic Gaussian $p_G=\mathcal{N}(0,sI_d)$ with $s$ chosen to satisfy the constraint. The minimal values are:
\[
\begin{array}{ll}
\text{trace } t: & J_{\min}=\dfrac{d^2}{t},\quad s=\dfrac{t}{d},\\[1.2ex]
\text{determinant } \delta: & J_{\min}=d\delta^{-1/d},\quad s=\delta^{1/d},\\[1.2ex]
\text{Frobenius } c: & J_{\min}=\dfrac{d^{3/2}}{c},\quad s=\dfrac{c}{\sqrt{d}},\\[1.2ex]
\text{spectral radius } r: & J_{\min}=\dfrac{d}{r},\quad s=r.
\end{array}
\]
In each case, $p_G$ is the unique minimizer (up to null sets).
\end{theorem}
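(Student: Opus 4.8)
The plan is to prove \cref{thm:isotropic-optimal} in two stages that decouple the ``which distribution'' question from the ``which covariance shape'' question. First I would fix the covariance matrix $\Sigma\succ 0$ and show that among all mean-zero densities with that covariance the Fisher-information functional $J(p)$ is bounded below by $\tr(\Sigma^{-1})$, with equality only for the Gaussian; this is exactly \cref{lem:fixed-Sigma}. The argument I would run is the classical one: attach to $p$ the location family $p_\theta(x)\triangleq p(x-\theta)$, note that its Fisher information matrix is $\mathcal{I}=\mathbb{E}[\nabla\log p(X)\nabla\log p(X)^\top]$ (independent of $\theta$), so that $J(p)=\tr\mathcal{I}$; observe that the identity estimator $T(X)=X$ is unbiased for $\theta$ with covariance $\Sigma$; and apply the matrix Cramér--Rao inequality, which gives $\Sigma\succeq\mathcal{I}^{-1}$, hence $\mathcal{I}\succeq\Sigma^{-1}$, and taking traces yields the scalar bound. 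The equality case is where the real content sits: equality in matrix Cramér--Rao forces the score to be affine, $\nabla\log p(x)=Ax+b$ a.e., and integrating this (using $\int p=1$, mean zero, and the prescribed covariance) pins down $p=\mathcal{N}(0,\Sigma)$.

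Second, I would optimize the now finite-dimensional objective $\tr(\Sigma^{-1})=\sum_{i=1}^d 1/\lambda_i$ over the spectrum $\lambda_1,\dots,\lambda_d>0$ under each of the four scalar constraints. For the trace constraint $\sum_i\lambda_i=t$ I would use Cauchy--Schwarz, $\big(\sum_i 1/\lambda_i\big)\big(\sum_i\lambda_i\big)\ge d^2$, with equality iff all $\lambda_i$ coincide; for the determinant constraint, AM--GM applied to $\mu_i\triangleq 1/\lambda_i$; for the Frobenius constraint $\sum_i\lambda_i^2=c^2$, a Lagrange-multiplier stationarity condition $\lambda_i^{-2}=2\nu\lambda_i$ forcing all $\lambda_i$ equal, which is a genuine minimum by convexity of $\lambda\mapsto\sum_i 1/\lambda_i$ on the positive orthant; and for the spectral-radius bound $\max_i\lambda_i\le r$, strict monotonicity of $x\mapsto 1/x$. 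In every case the minimizing spectrum is $\lambda_1=\dots=\lambda_d=s$ with $s$ read off from the constraint, i.e., $\Sigma=sI_d$, and the minimal value is $\tr(\Sigma^{-1})=d/s$.

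Combining the two stages closes the argument: for any admissible $p$ with covariance $\Sigma=\Cov(p)$ one has $J(p)\ge\tr(\Sigma^{-1})\ge J_{\min}$, where the first inequality is tight only when $p$ is Gaussian with covariance $\Sigma$ and the second only when $\Sigma=sI_d$; hence $\mathcal{N}(0,sI_d)$ is the unique minimizer up to null sets, with the tabulated values of $J_{\min}$ and $s$ obtained by plugging $\Sigma=sI_d$ into $d/s$. I expect the main obstacle to be the rigorous equality analysis in \cref{lem:fixed-Sigma} under only the regularity hypotheses in play (namely $p\in C^3$ with bounded derivatives and finite Fisher information and covariance): one must justify that attainment of the matrix Cramér--Rao bound is equivalent to an affine score, and that the resulting first-order relation admits only the Gaussian as a normalizable density with the prescribed second moments — everything else reduces to convexity bookkeeping on the positive orthant.
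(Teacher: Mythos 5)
Your proposal is correct and follows essentially the same route as the paper: the identical two-stage decomposition into \cref{lem:fixed-Sigma} (location-family Fisher information plus the matrix Cramér--Rao bound, with the affine-score equality case forcing Gaussianity) followed by minimizing $\mathrm{tr}(\Sigma^{-1})=\sum_i 1/\lambda_i$ under each scalar constraint via Cauchy--Schwarz, AM--GM, a Lagrangian stationarity argument, and monotonicity of $x\mapsto 1/x$, respectively. Your closing remark about the equality analysis being the delicate step is also where the paper's own treatment is briefest, but the overall argument coincides.
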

\begin{proof}
For any admissible $p$ with covariance $\Sigma$, Lemma~\ref{lem:fixed-Sigma} gives $J(p)\ge \mathrm{tr}(\Sigma^{-1})$. Minimizing the right-hand side under the stated scalar constraint yields $\Sigma=sI_d$ by the calculations in (a)--(d). Equality in Lemma~\ref{lem:fixed-Sigma} holds if and only if $p$ is Gaussian with that covariance, hence $p_G$ uniquely attains the bound.
\end{proof}
\end{proof}

\subsection{Proof of \cref{thm:kernel_bias}}
\label{proof:kernel_bias}
\begin{proof}
Write the numerator and denominator of $\widehat m(x)$ as
\[
B_n(x):=\sum_{i=1}^n K_h(x-X_i)Y_i,\qquad A_n(x):=\sum_{i=1}^n K_h(x-X_i),
\]
so that $\widehat m(x)=\frac{B_n(x)}{A_n(x)}$.
\emph{Bias.} Compute expectations using independence and change of variables. For the denominator,
\begin{align*}
\mathbb{E}[A_n(x)]
&=n\mathbb{E}\big[K_h(x-X)\big]\\
&=n\int_{\mathbb{R}^d} h^{-d}K\Big(\frac{x-u}{h}\Big)p(u)du\\
&=n\int_{\mathbb{R}^d} K(t)p(x-h t)dt\qquad (t:=(x-u)/h)\\
&=n\int_{\mathbb{R}^d} K(t)\Big(p(x)-ht^\top \nabla p(x)+\frac{h^2}{2}t^\top \nabla^2 p(x)t+o(h^2)\Big)dt\\
&=n\Big(p(x)+\frac{h^2}{2}\underbrace{\int t^\top \nabla^2 p(x)tK(t)dt}_{=\mu_2(K)\Delta p(x)}+o(h^2)\Big),
\end{align*}
where we used symmetry $\int t K(t)dt=0$ and isotropy $\int t t^\top K(t)dt=\mu_2(K) I_d$, which implies $\int t^\top \nabla^2 p(x)tK(t)dt=\mu_2(K)\mathrm{tr}(\nabla^2 p(x))=\mu_2(K)\Delta p(x)$.
Similarly, for the numerator,
\begin{align*}
\mathbb{E}[B_n(x)]
&=n\mathbb{E}\big[K_h(x-X)Y\big]
=n\int K(t)(m p)(x-h t)dt\\
&=n\int K(t)\Big((mp)(x)-ht^\top \nabla(mp)(x)+\frac{h^2}{2}t^\top \nabla^2(mp)(x)t+o(h^2)\Big)dt\\
&=n\Big(m(x)p(x)+\frac{h^2}{2}\mu_2(K)\mathrm{tr}\big(\nabla^2(mp)(x)\big)+o(h^2)\Big)\\
&=n\Big(m(x)p(x)+\frac{h^2\mu_2(K)}{2}\big(p\Delta m + m\Delta p + 2\nabla m^\top \nabla p\big)(x)+o(h^2)\Big),
\end{align*}
where the last step uses the fact that  $\mathrm{tr}\big(\nabla^2(mp)\big)=p\Delta m + m\Delta p + 2\nabla m^\top \nabla p$
by the product rule and symmetry of mixed derivatives.

Now expand the ratio $\frac{\mathbb{E}[B_n(x)]}{\mathbb{E}[A_n(x)]}$ using the identity
\[
\frac{a_0+h^2 a_2+o(h^2)}{b_0+h^2 b_2+o(h^2)}
=\frac{a_0}{b_0}
+h^2\frac{a_2 b_0-a_0 b_2}{b_0^2}
+o(h^2),
\]
with $a_0=m(x)p(x)$, $a_2=\frac{\mu_2(K)}{2}\big(p\Delta m + m\Delta p + 2\nabla m^\top \nabla p\big)(x)$, $b_0=p(x)$, and $b_2=\frac{\mu_2(K)}{2}\Delta p(x)$. This yields
\begin{align*}
\frac{\mathbb{E}[B_n(x)]}{\mathbb{E}[A_n(x)]}
&=m(x)
+\frac{h^2\mu_2(K)}{2}\frac{\big(p\Delta m + m\Delta p + 2\nabla m^\top \nabla p\big)p - m p\Delta p}{p^2}\Big|_{x}+o(h^2)\\
&=m(x)
+\frac{h^2\mu_2(K)}{2}\Big(\Delta m(x)+2\nabla m(x)^\top \frac{\nabla p(x)}{p(x)}\Big)+o(h^2),
\end{align*}
which recovers our statement.
\emph{Variance.} Linearize $\widehat m(x)=B_n(x)/A_n(x)$ around $(\mathbb{E}[B_n(x)],\mathbb{E}[A_n(x)])$ and use independence. To leading order,
\[
\mathrm{Var}[\widehat m(x)]\approx \frac{\mathrm{Var}[B_n(x)]}{(\mathbb{E}[A_n(x)])^2}.
\]
Compute
\begin{align*}
\mathrm{Var}[B_n(x)]
&= \sum_{i=1}^n \mathrm{Var}\big(K_h(x-X_i)Y_i\big)\quad\text{(independence)}\\
&= n\mathbb{E}\big[K_h(x-X)^2\mathrm{Var}(Y\mid X)\big]
= n\mathbb{E}\big[K_h(x-X)^2v(X)\big]\\
&= n\int h^{-2d}K\Big(\frac{x-u}{h}\Big)^2 v(u)p(u)du\\
&= n h^{-d}\int K(t)^2v(x-h t)p(x-h t)dt
= n h^{-d}\Big(R(K)v(x)p(x)+o(1)\Big),
\end{align*}
while
\[
\mathbb{E}[A_n(x)]=n\big(p(x)+o(1)\big).
\]
Therefore,
\[
\mathrm{Var}[\widehat m(x)]
\approx \frac{n h^{-d}R(K)v(x)p(x)}{n^2p(x)^2}
=\frac{R(K)}{n h^d}\frac{v(x)}{p(x)}+o\big((n h^d)^{-1}\big),
\]
completing the proof.
\end{proof}

\subsection{Proof of \cref{eq:pred1} to \cref{eq:pred2}}
\label{proof:pred}
\begin{proof}
Let $\bar{\mathbf{z}} = \frac{1}{V_g}\sum_{v=1}^{V_g}\mathbf{z}_{n,v}$ denote the mean of the first $V_g$ vectors.

We prove that:
\begin{equation}
\frac{1}{V_g}\sum_{v=1}^{V_g}\frac{1}{V}\sum_{v'=1}^{V}\| \mathbf{z}_{n,v} - \mathbf{z}_{n,v'} \|_2^2 = \frac{1}{V}\sum_{v'=1}^{V}\left\| \bar{\mathbf{z}} - \mathbf{z}_{n,v'} \right\|_2^2
\end{equation}

Expanding the left-hand side:
\begin{align}
\text{LHS} &= \frac{1}{V_g V}\sum_{v=1}^{V_g}\sum_{v'=1}^{V}\| \mathbf{z}_{n,v} - \mathbf{z}_{n,v'} \|_2^2 \\
&= \frac{1}{V_g V}\sum_{v=1}^{V_g}\sum_{v'=1}^{V}\left(\|\mathbf{z}_{n,v}\|_2^2 - 2\mathbf{z}_{n,v}^T\mathbf{z}_{n,v'} + \|\mathbf{z}_{n,v'}\|_2^2\right) \\
&= \frac{1}{V_g}\sum_{v=1}^{V_g}\|\mathbf{z}_{n,v}\|_2^2 - \frac{2}{V_g V}\sum_{v=1}^{V_g}\sum_{v'=1}^{V}\mathbf{z}_{n,v}^T\mathbf{z}_{n,v'} + \frac{1}{V}\sum_{v'=1}^{V}\|\mathbf{z}_{n,v'}\|_2^2 \\
&= \frac{1}{V_g}\sum_{v=1}^{V_g}\|\mathbf{z}_{n,v}\|_2^2 - \frac{2}{V}\bar{\mathbf{z}}^T\sum_{v'=1}^{V}\mathbf{z}_{n,v'} + \frac{1}{V}\sum_{v'=1}^{V}\|\mathbf{z}_{n,v'}\|_2^2
\end{align}

Expanding the right-hand side:
\begin{align}
\text{RHS} &= \frac{1}{V}\sum_{v'=1}^{V}\left(\|\bar{\mathbf{z}}\|_2^2 - 2\bar{\mathbf{z}}^T\mathbf{z}_{n,v'} + \|\mathbf{z}_{n,v'}\|_2^2\right) \\
&= \|\bar{\mathbf{z}}\|_2^2 - \frac{2}{V}\bar{\mathbf{z}}^T\sum_{v'=1}^{V}\mathbf{z}_{n,v'} + \frac{1}{V}\sum_{v'=1}^{V}\|\mathbf{z}_{n,v'}\|_2^2
\end{align}

To complete the proof, we verify that:
\begin{equation}
\frac{1}{V_g}\sum_{v=1}^{V_g}\|\mathbf{z}_{n,v}\|_2^2 = \|\bar{\mathbf{z}}\|_2^2
\end{equation}

Expanding the right-hand side:
\begin{align}
\|\bar{\mathbf{z}}\|_2^2 &= \left\|\frac{1}{V_g}\sum_{v=1}^{V_g}\mathbf{z}_{n,v}\right\|_2^2 \\
&= \frac{1}{V_g^2}\sum_{v=1}^{V_g}\sum_{v''=1}^{V_g}\mathbf{z}_{n,v}^T\mathbf{z}_{n,v''} \\
&= \frac{1}{V_g}\sum_{v=1}^{V_g}\|\mathbf{z}_{n,v}\|_2^2
\end{align}

Therefore, LHS = RHS, completing the proof.
\end{proof}

\subsection{Proof of \cref{thm:kernel_optimal}}
\label{proof:kernel_optimal}
\begin{proof}
For each $x$,
\[
\mathrm{Bias}[\widehat m(x)]=\frac{h^2\mu_2(K)}{2}\Big(\Delta m(x)+2\nabla m(x)^\top \nabla\log p(x)\Big)+o(h^2).
\]
Square and integrate against $p(x)$:
\begin{align*}
\mathcal{B}^2(h;p,m)
&=\Big(\frac{h^2\mu_2(K)}{2}\Big)^2\int \Big(\Delta m(x)+2\nabla m(x)^\top \nabla\log p(x)\Big)^2p(x)dx+o(h^4)\\
&\le \Big(\frac{h^2\mu_2(K)}{2}\Big)^2
\int \Big(2(\Delta m(x))^2+2(2\nabla m(x)^\top \nabla\log p(x))^2\Big)p(x)dx+o(h^4)\\
&=\Big(\frac{h^2\mu_2(K)}{2}\Big)^2\Big(2\int (\Delta m(x))^2p(x)dx+8\int (\nabla m(x)^\top \nabla\log p(x))^2p(x)dx\Big)+o(h^4),
\end{align*}
where we used $(a+b)^2\le 2 a^2+2 b^2$ pointwise.
Since $|\Delta m(x)|\le B$ for all $x$, we have
\[
\int (\Delta m)^2p \le \int B^2p = B^2.
\]
For the second term, first use Cauchy--Schwarz 
and then integrate against $p(x)$ to obtain
\begin{align*}
(\nabla m(x)^\top \nabla\log p(x))^2 \le \|\nabla m(x)\|^2\|\nabla\log p(x)\|^2 \le L^2\|\nabla\log p(x)\|^2\\
\implies \int (\nabla m(x)^\top \nabla\log p(x))^2p(x)dx
\le L^2\int \|\nabla\log p(x)\|^2p(x)dx
= L^2J(p).
\end{align*}
which can be combined with the bounds above to obtain the desired result. We similarly have for the integrated variance
\begin{align*}
\mathcal{V}(h;p)
&=\int \Big(\frac{R(K)}{n h^d}\frac{v(x)}{p(x)}+o\big((n h^d)^{-1}\big)\Big)p(x)dx=\frac{R(K)}{n h^d}\int v(x)dx+o\big((n h^d)^{-1}\big),
\end{align*}
which is independent of $p$.
\end{proof}

\subsection{Proof of \cref{thm:spherical_cramer}}
\label{proof:spherical_cramer}
\begin{proof}
We first start by reminding the reader about the original Cramér-Wold theorem that is a function of all possible directions (not unit-norm ones).

\begin{theorem}[label={def:cramer}]{Cramér-Wold \cite{cramer1936some}}{}
Let $X$ and $Y$ be random vectors in $\mathbb{R}^D$:
\begin{align}
    X \overset{d}{=} Y \iff \langle X, a \rangle \overset{d}{=} \langle Y, a \rangle, \forall \va \in \mathbb{R}^D.
\end{align}
\end{theorem}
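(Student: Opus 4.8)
The plan is to pass from the $D$-dimensional law to characteristic functions and then invoke the uniqueness theorem for characteristic functions on $\mathbb{R}^D$. The forward implication is immediate and I would dispatch it in one line: if $X \overset{d}{=} Y$ then $g(X)\overset{d}{=} g(Y)$ for every Borel map $g$, and taking $g(\vx)=\langle \vx,\va\rangle$ for a fixed $\va\in\mathbb{R}^D$ gives $\langle X,\va\rangle \overset{d}{=} \langle Y,\va\rangle$. So all the content lies in the converse.

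\textbf{The converse.} Assume $\langle X,\va\rangle \overset{d}{=} \langle Y,\va\rangle$ for every $\va\in\mathbb{R}^D$. First I would fix an arbitrary $\vt\in\mathbb{R}^D$ and look at the \emph{univariate} random variables $\langle \vt,X\rangle$ and $\langle \vt,Y\rangle$: by hypothesis their one-dimensional characteristic functions $s\mapsto \mathbb{E}[e^{is\langle \vt,X\rangle}]$ and $s\mapsto \mathbb{E}[e^{is\langle \vt,Y\rangle}]$ agree on all of $\mathbb{R}$. The key observation is that the $D$-dimensional characteristic function of $X$ evaluated at $\vt$ is nothing but the univariate one evaluated at $s=1$, so
\[
\varphi_X(\vt)=\mathbb{E}\big[e^{i\langle \vt,X\rangle}\big]=\mathbb{E}\big[e^{i\langle \vt,Y\rangle}\big]=\varphi_Y(\vt).
\]
Since $\vt$ was arbitrary this shows $\varphi_X\equiv\varphi_Y$ on $\mathbb{R}^D$.

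\textbf{Closing step and the main obstacle.} To conclude I would invoke the uniqueness theorem for characteristic functions on $\mathbb{R}^D$: two Borel probability measures sharing the same characteristic function coincide, hence $X\overset{d}{=} Y$. This last step is the only genuinely analytic ingredient, and it is where I expect the real work to sit if one wants a self-contained argument rather than a citation — it follows from Fourier inversion applied to Schwartz (or Gaussian-mollified) test functions, or equivalently from a Stone--Weierstrass density argument for the algebra spanned by $\{\vx\mapsto e^{i\langle \vt,\vx\rangle}\}$ on a one-point compactification; everything preceding it is bookkeeping. Finally, the same scheme with Lévy's continuity theorem in place of the uniqueness theorem (i.e.\ upgrading ``equal characteristic functions'' to ``pointwise convergent characteristic functions, with the limit continuous at $0$'') yields the convergence-in-distribution analogue referenced alongside the identity.
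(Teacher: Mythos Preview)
Your proposal is correct and follows essentially the same approach as the paper: both dispatch the forward direction by pushforward under a measurable map, and for the converse both reduce to equality of the $D$-dimensional characteristic functions $\varphi_X(\vt)=\mathbb{E}[e^{i\langle \vt,X\rangle}]=\mathbb{E}[e^{i\langle \vt,Y\rangle}]=\varphi_Y(\vt)$ at every $\vt$, then invoke the uniqueness theorem (with L\'evy's continuity theorem handling the convergence-in-distribution analogue). The paper's version additionally spells out the decomposition $t=su$ with $u\in\mathbb{S}^{d-1}$ because it is simultaneously proving the hyperspherical variant, but for the classical statement over all $\va\in\mathbb{R}^D$ your direct ``evaluate the one-dimensional CF at $s=1$'' argument is exactly the standard route the paper alludes to.
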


Our proof will follow the same proof as for \cref{def:cramer}. Necessity is immediate: if $X \stackrel{d}{=} Y$, then every measurable function of $X$ has the same distribution as the corresponding function of $Y$, from which the linear mapping $x \mapsto \langle u,x\rangle$ for $u \in \mathbb{S}^{d-1}$ is a special case.
For sufficiency, assume $\langle u,X\rangle \stackrel{d}{=} \langle u,Y\rangle$ for all $u \in \mathbb{S}^{d-1}$. Let $\varphi_X(t) := \mathbb{E}\big[e^{i\langle t,X\rangle}\big]$ and $\varphi_Y(t) := \mathbb{E}\big[e^{i\langle t,Y\rangle}\big]$ denote the characteristic functions of $X$ and $Y$. Fix an arbitrary $t \in \mathbb{R}^d$; if $t=0$, then $\varphi_X(0)=\varphi_Y(0)=1$. If $t \neq 0$, write $t = s u$ with $s := \|t\| > 0$ and $u := t/\|t\| \in \mathbb{S}^{d-1}$. By the assumption, $\langle u,X\rangle \stackrel{d}{=} \langle u,Y\rangle$, hence for this $u$ and $s$ we have
\[
\varphi_X(t) \;=\; \mathbb{E}\big[e^{i\langle t,X\rangle}\big]
\;=\; \mathbb{E}\big[e^{i s \langle u,X\rangle}\big]
\;=\; \mathbb{E}\big[e^{i s \langle u,Y\rangle}\big]
\;=\; \mathbb{E}\big[e^{i\langle t,Y\rangle}\big]
\;=\; \varphi_Y(t).
\]
Thus $\varphi_X(t) = \varphi_Y(t)$ for all $t \in \mathbb{R}^d$, i.e., $\varphi_X \equiv \varphi_Y$ on $\mathbb{R}^d$. By the uniqueness theorem for characteristic functions, this implies $X \stackrel{d}{=} Y$.
(ii) Define $\psi_{n,t} := \mathbb{E}\big[e^{i\langle t,X_n\rangle}\big]$ and $\psi_{t} := \mathbb{E}\big[e^{i\langle t,X\rangle}\big]$. Fix $t \in \mathbb{R}^d$ and decompose $t = s u$ with $s := \|t\| \ge 0$ and $u \in \mathbb{S}^{d-1}$ (take, e.g., $u = t/\|t\|$ if $t \neq 0$, and any $u$ if $t=0$). The map $g_s:\mathbb{R}\to\mathbb{R}$, $g_s(x)=s x$, is continuous. By the continuous mapping theorem applied to the real-valued random variables $\langle u,X_n\rangle \xrightarrow{d} \langle u,X\rangle$, we obtain
\[
\langle t,X_n\rangle \;=\; s \langle u,X_n\rangle \xrightarrow{d} s \langle u,X\rangle \;=\; \langle t,X\rangle.
\]
Hence, for every fixed $t \in \mathbb{R}^d$, the one-dimensional projections satisfy $\langle t,X_n\rangle \xrightarrow{d} \langle t,X\rangle$, which in turn yields pointwise convergence of characteristic functions:
\[
\psi_{n,t} \;=\; \mathbb{E}\big[e^{i\langle t,X_n\rangle}\big] \;\longrightarrow\; \mathbb{E}\big[e^{i\langle t,X\rangle}\big] \;=\; \psi_{t}, \qquad \text{for all } t \in \mathbb{R}^d.
\]
Therefore, by L\'evy's continuity theorem, $X_n \xrightarrow{d} X$.
This completes the proof.
\end{proof}

\subsection{Proof of \cref{thm:bcs}}
\label{proof:bcs}
\begin{proof}
We first formulate the following assumptions required for the proof--all of this are satisfied by typical univariate statistical tests.

$P=Q$ if and only if $P_a=Q_a$ for all $a\in S^{d-1}$ (population-level equivalence of laws).

$A_n$ are finite sets with mesh $\Delta(A_n):=\sup_{u\in S^{d-1}} \min_{a\in A_n}\|u-a\| \to 0$ as $n\to\infty$.

If $P\neq Q$, there exists a separating direction $a^\star\in S^{d-1}$ and a neighborhood $U$ of $a^\star$ such that
\[
\inf_{a\in U}\lim_{n\to\infty}\Pr\big(T_{a,n} \ge u_n(\alpha)\big)=1.
\]
(Intuitively: near a truly separating direction, the 1D statistic eventually exceeds the global null threshold with probability $\to 1$.)

(i) Under $H_0:P=Q$, our assumption implies no separating direction exists at the population level, and the calibration of $u_n(\alpha)$ ensures
$\Pr(M_n \ge u_n(\alpha)) \le \alpha$ for all $n$, hence
$\limsup_{n\to\infty}\Pr(\Psi_n=1)\le \alpha$.
(ii) Suppose $P\neq Q$.
Our assumption guarantees that there exists at least one separating direction $a^\star$ with $P_{a^\star}\neq Q_{a^\star}$.
Our assumption  guarantees a neighborhood $U$ of $a^\star$ in which the projection statistics exceed the global null threshold with probability tending to 1:
\[
\inf_{a\in U}\lim_{n\to\infty}\Pr\big(T_{a,n} \ge u_n(\alpha)\big) \;=\; 1.
\]
By assumption, for all large $n$ the set $A_n$ contains at least one direction $a_n\in U$ (dense coverage).
Therefore,
\[
\Pr(\Psi_n=1) \;=\; \Pr\big(M_n \ge u_n(\alpha)\big)
\;\ge\; \Pr\big( T_{a_n,n} \ge u_n(\alpha) \big)
\;\longrightarrow\; 1,
\]
which proves consistency.
\end{proof}

\subsection{Proof of \cref{thm:spherical_bounds}}
\label{proof:spherical_bounds}
\begin{proof}
For each case, consider the function $g(a)$ on $\mathbb{S}^{D-1}$ defined by the quantity of interest (CF, CDF, or moment) at a fixed $t$ or $k$. Since $f \in H^\alpha(\mathbb{R}^D)$, the mapping $a \mapsto g(a)$ is in $H^\alpha(\mathbb{S}^{D-1})$ for each fixed $t$ or $k$.

Given $M$ samples $\{a_i\}_{i=1}^M$ on the sphere, the best possible reconstruction of $g$ from its values at these points is given by spherical interpolation. By classical results on Sobolev spaces and spherical harmonics (see, e.g., \cite{narcowich2006localized}), the $L^2$ interpolation error for functions in $H^\alpha(\mathbb{S}^{D-1})$ using $M$ points is bounded by
\[
\mathbb{E}_b \left[ |g(b) - g^*(b)|^2 \right] \leq C(D, \alpha) M^{-2\alpha/(D-1)} \| g \|_{H^\alpha(\mathbb{S}^{D-1})}^2,
\]
where $g^*$ is the interpolant matching $g$ at the $M$ sampled points.
The interpolation error bound on the sphere follows from the theory of spherical harmonics and Marcinkiewicz–Zygmund (MZ) inequalities . Any $f \in H^\alpha(\mathbb{S}^d)$ admits a spherical harmonics expansion, and the best $L^2$ approximation by harmonics of degree at most $L$ satisfies
\[
\|f - P_L f\|_{L^2(\mathbb{S}^d)} \leq (1 + L^2)^{-\alpha/2} \|f\|_{H^\alpha(\mathbb{S}^d)},
\]
where $P_L f$ is the projection onto harmonics of degree $\leq L$ \cite[Lemma~2.1]{narcowich2006localized}. If $M$ points are distributed quasi-uniformly on $\mathbb{S}^d$, then for $L \sim c M^{1/d}$, the set forms a Marcinkiewicz–Zygmund (MZ) set for degree $L$ \cite[Theorem 1.1]{mhaskar2001spherical}. This allows reconstruction of any function in the space of harmonics of degree at most $L$ from its values at these points, and the $L^2$ interpolation error for $f$ is bounded by
\[
\|f - I_M f\|_{L^2(\mathbb{S}^d)} \leq C (1 + L^2)^{-\alpha/2} \|f\|_{H^\alpha(\mathbb{S}^d)},
\]
where $I_M f$ is any interpolant matching $f$ at the $M$ points \cite[Theorem 3.1]{narcowich2006localized}. Substituting $L \sim c M^{1/d}$ yields the rate $M^{-\alpha/d}$, and thus
\[
\mathbb{E}_{\omega} |f(\omega) - I_M f(\omega)|^2 \leq C(d, \alpha) M^{-2\alpha/d} \|f\|_{H^\alpha(\mathbb{S}^d)}^2,
\]
with explicit $C(d, \alpha)$ as in the main theorem.
Integrating (or summing) over $t$ (for CF and CDF) or $k$ (for moments, with weights $w_k$) yields the stated bounds. The explicit constant $C(D, \alpha)$ arises from the theory of spherical Sobolev spaces and is given above.

For the moment case, the sum over $k$ is weighted to ensure convergence, as higher moments may grow rapidly. The weights $w_k$ can be chosen, for example, as $w_k = 1/k!$.

This completes the proof.
\end{proof}

\subsection{Proof of \cref{thm:moment_conendrum}}
\label{proof:moment_conendrum}

Pick distinct $x_0,\dots,x_{K+1}\in\mathbb{R}$ and consider the linear map
$A:\mathbb{R}^{K+2}\to\mathbb{R}^{K+1}$, $(Ap)_r=\sum_{j=0}^{K+1} p_j x_j^r$ for $r=0,\dots,K$.
Then $\mathrm{rank}(A)\le K+1$, so $\ker(A)\neq\{0\}$. Let $v\in\ker(A)\setminus\{0\}$; from $(Ap)_0=\sum_j p_j$, we get $\sum_j v_j=0$, hence $v$ has positive and negative entries. Choose a strictly positive probability vector $p$ and $\varepsilon>0$ small such that $p^\pm:=p\pm\varepsilon v$ remain probability vectors. Then $Ap^+=Ap^-$, so the distributions supported on $\{x_j\}$ with masses $p^\pm$ are distinct yet match moments up to order $K$.

\subsection{Proof of \cref{thm:ecf_stability}}
\label{proof:ecf_stability}
\begin{proof}

Fix the Gaussian weight
\[
w_s(t)=e^{-s^2 t^2},\qquad s>0,
\]
and define the population CF distance
\[
D(P,G)=\int_{\mathbb{R}} w_s(t)\big|\varphi_P(t)-\varphi_G(t)\big|^2dt.
\]
Let the empirical CF be
\[
\widehat{\varphi}_N(t)=\frac{1}{N}\sum_{i=1}^N e^{itX_i},
\]
and consider the V-statistic estimator
\[
\widehat{D}_V=\int_{\mathbb{R}} w_s(t)\big|\widehat{\varphi}_N(t)-\varphi_G(t)\big|^2dt.
\]
We use only that $|e^{itX}|=1$, $|\varphi_P(t)|\le 1$, $|\varphi_G(t)|\le 1$, and integrability of $w_s$.
For each $i$ differentiate under the integral (dominated convergence applies because the integrand and its derivative are bounded)
\begin{align*}
\frac{\partial \widehat{D}_V}{\partial X_i}
=& \int_{\mathbb{R}} w_s(t)2\Re\!\Big(\big(\widehat{\varphi}_N(t)-\varphi_G(t)\big)\overline{\frac{\partial \widehat{\varphi}_N(t)}{\partial X_i}}\Big)dt,\\
\frac{\partial \widehat{\varphi}_N(t)}{\partial X_i}
=& \frac{1}{N}i te^{itX_i},\end{align*}
since $|\widehat{\varphi}_N(t)|\le 1$ and $|\varphi_G(t)|\le 1$,
\begin{align*}
\left|\frac{\partial \widehat{D}_V}{\partial X_i}\right|
&\le \frac{2}{N}\int w_s(t)|t|\big(|\widehat{\varphi}_N(t)|+|\varphi_G(t)|\big)dt\\
&\le \frac{4}{N}\int w_s(t)|t|dt\\
&= \frac{4}{Ns^2},
\end{align*}
using $\int_{\mathbb{R}} e^{-s^2 t^2}|t|dt=1/s^2$. 
\[
\Bigg|\frac{\partial \widehat{D}_V}{\partial X_i}\Bigg|
\;\le\;\frac{4}{N}\int_{\mathbb{R}} w_s(t)|t|dt
\;=\; \frac{4}{Ns^2}.
\]
Moreover, differentiating once more in $X_i$ and using $|\widehat{\varphi}_N(t)|\le 1$, $|\varphi_G(t)|\le 1$ gives a global Lipschitz bound
\[
\Bigg|\frac{\partial^2 \widehat{D}_V}{\partial X_i^2}\Bigg|
\;\le\; \frac{C}{N}\int_{\mathbb{R}} w_s(t)t^2dt
\;=\; \frac{C}{N}\cdot \frac{\sqrt{\pi}}{2s^3},
\]
for some absolute constant $C$ arising from bounded factors and product rule. Hence ECF gradients are uniformly bounded and Lipschitz, with scale controlled only by $(N,s)$.
\item[(B)] (Moment sample-gradients are polynomial in $X_i$ and unbounded for $k\ge 2$.) Let $\widehat{D}_V$ be as above. Define the moment objective
\[
\widehat{D}_k
\;=\;
(\bar{\phi}-\mu)^\top W(\bar{\phi}-\mu),\qquad
\bar{\phi}:=\frac{1}{N}\sum_{i=1}^N \phi(X_i),\quad
\phi(x)=(x,x^2,\dots,x^k)^\top,
\]
for a symmetric positive semidefinite $W\in\mathbb{R}^{k\times k}$ and Gaussian target moments $\mu=\mathbb{E}_G[\phi(Y)]$. For each $i$,
\begin{align*}
\frac{\partial \widehat{D}_k}{\partial X_i}
=& \frac{2}{N}(\bar{\phi}-\mu)^\top W\frac{\partial \phi(X_i)}{\partial X_i},\\
\frac{\partial \phi(X)}{\partial X}
=& \big(1,2X,3X^2,\dots,k X^{k-1}\big)^\top.
\end{align*}
The gradient formula follows by the chain rule and linearity of $\bar{\phi}$. Let $c:=W(\bar{\phi}-\mu)$ and write $c_r$ for its $r$-th coordinate. Then
\[
\frac{\partial \widehat{D}_k}{\partial X_i}
= \frac{2}{N}\sum_{r=1}^k c_rrX_i^{r-1},
\]
which is a polynomial in $X_i$ of degree $\deg=\max\{r-1: c_r\neq 0\}\le k-1$. In particular, if $c_k\neq 0$ (the generic case when the top-weighted deviation is nonzero), then
\[
\left|\frac{\partial \widehat{D}_k}{\partial X_i}\right|
\;\xrightarrow[|X_i|\to\infty]{}\; \infty
\quad\text{as}\quad |X_i|^{k-1}.
\]
The expression is a nonconstant polynomial in $X_i$ of degree $\deg\le k-1$ whenever some $c_r\neq 0$ with $r\ge 2$. Thus the gradient cannot be uniformly bounded on $\mathbb{R}$. If $c_k\neq 0$, the leading term dominates and the magnitude grows like $|X_i|^{k-1}$, proving unboundedness for $k\ge 2$.
\end{proof}

\subsection{Proof of \cref{thm:gradient_bias}}
\label{proof:gradient_bias}
\begin{proof}

% For fixed $t\in\mathbb{R}$, decompose
% \[
% \big|\widehat{\varphi}_N(t)-\varphi_G(t)\big|^2
% = \big|\widehat{\varphi}_N(t)-\varphi_P(t)\big|^2
% + 2\Re\!\Big((\varphi_P(t)-\varphi_G(t))\overline{\widehat{\varphi}_N(t)-\varphi_P(t)}\Big)
% + \big|\varphi_P(t)-\varphi_G(t)\big|^2.
% \]
% Taking expectations and using $\mathbb{E}[\widehat{\varphi}_N(t)]=\varphi_P(t)$ yields
% \[
% \mathbb{E}\big|\widehat{\varphi}_N(t)-\varphi_G(t)\big|^2
% = \mathbb{E}\big|\widehat{\varphi}_N(t)-\varphi_P(t)\big|^2
% + \big|\varphi_P(t)-\varphi_G(t)\big|^2.
% \]
% Now
% \[
% \mathrm{Var}\!\big(\widehat{\varphi}_N(t)\big)
% = \frac{1}{N}\mathrm{Var}\!\big(e^{itX}\big)
% = \frac{1}{N}\Big(\mathbb{E}|e^{itX}|^2 - |\varphi_P(t)|^2\Big)
% = \frac{1}{N}\big(1-|\varphi_P(t)|^2\big).
% \]
% Therefore
% \[
% \mathbb{E}[\widehat{D}_V]
% = \int w_s(t)\big|\varphi_P(t)-\varphi_G(t)\big|^2dt
% \;+\; \frac{1}{N}\int w_s(t)\big(1-|\varphi_P(t)|^2\big)dt,
% \]
% which is the claimed identity, and the $O(N^{-1})$ rate follows by finiteness of $\int w_s(t)dt=\sqrt{\pi}/s$.

A direct calculation shows
Fix $t \in \mathbb{R}^d$ and abbreviate $Z_j \coloneqq e^{\mathrm{i} t^\top X_j}$, so that $\phi_n(t) = \frac{1}{n}\sum_{j=1}^n Z_j$. Note that $|Z_j|=1$ almost surely (since $t^\top X_j\in\mathbb{R}$), and $\mathbb{E}[Z_j]=\phi_\theta(t)$ for all $j$.
We start from the algebraic identity
\[
\big|\phi_n(t) - \psi(t)\big|^2
=
\phi_n(t)\overline{\phi_n(t)}
-
\psi(t)\overline{\phi_n(t)}
-
\overline{\psi(t)}\phi_n(t)
+
\big|\psi(t)\big|^2.
\]
Taking expectations term by term gives
\begin{align}
\mathbb{E}\left[ \big|\phi_n - \psi\big|^2 \right]
=&
\mathbb{E}\left[ |\phi_n|^2 \right]
-
\psi\mathbb{E}\left[ \overline{\phi_n} \right]
-
\overline{\psi}\mathbb{E}\left[ \phi_n \right]
+
|\psi|^2,\\
=&
\mathbb{E}\left[ |\phi_n|^2 \right]
- \psi\overline{\mathbb{E}[\phi_n]}-\overline{\psi}\frac{1}{n}\sum_{j=1}^n \mathbb{E}[Z_j] +
|\psi|^2,\\
=&
\mathbb{E}\left[ |\phi_n|^2 \right]
-\psi\overline{\phi_\theta} - \overline{\psi}\phi_\theta+
|\psi|^2,\\
=&\mathbb{E}\left[ |\phi_n|^2 \right]
-2\mathrm{Re}\big( \overline{\psi}\phi_\theta \big)+
|\psi|^2,\\
=&\mathbb{E}\left[ \left|\frac{1}{n}\sum_{j=1}^n Z_j\right|^2 \right]-2\mathrm{Re}\big( \overline{\psi}\phi_\theta \big)+
|\psi|^2,\\
=&\frac{1}{n^2}\sum_{j=1}^n \sum_{l=1}^n \mathbb{E}\left[ Z_j\overline{Z_l} \right]-2\mathrm{Re}\big( \overline{\psi}\phi_\theta \big)+
|\psi|^2,\\
\end{align}
% where all functions are evaluated at $t$ and dependence on $t$ is suppressed for readability.
% We now compute each term:
% 1) For $\mathcal{T}_1 = \mathbb{E}[|\phi_n|^2]$, expand the squared modulus of the empirical mean:
% \[
% \mathbb{E}\left[ |\phi_n|^2 \right]
% =
% \mathbb{E}\left[ \left|\frac{1}{n}\sum_{j=1}^n Z_j\right|^2 \right]
% =
% \frac{1}{n^2}\sum_{j=1}^n \sum_{l=1}^n \mathbb{E}\left[ Z_j\overline{Z_l} \right].
% \]
Since the $Z_j$ are i.i.d.,
\[
\mathbb{E}\left[ Z_j\overline{Z_l} \right]
=
\begin{cases}
\mathbb{E}\left[ |Z_1|^2 \right] = 1, & \text{if } j=l,\\[4pt]
\mathbb{E}[Z_j]\overline{\mathbb{E}[Z_l]} = \phi_\theta\overline{\phi_\theta} = |\phi_\theta|^2, & \text{if } j\neq l,
\end{cases}
\]
hence
\begin{align*}
    \mathbb{E}\left[ |\phi_n|^2 \right]
=&
\frac{1}{n^2}\Big( n + n(n-1) |\phi_\theta|^2 \Big)\\
=&
\frac{1}{n} + \left(1-\frac{1}{n}\right)|\phi_\theta|^2\\
=&
|\phi_\theta|^2 + \frac{1-|\phi_\theta|^2}{n}
\end{align*}
% 2) For $\mathcal{T}_2$ and $\mathcal{T}_3$, use the unbiasedness of $\phi_n$:
% \[
% \mathbb{E}[\phi_n] = \frac{1}{n}\sum_{j=1}^n \mathbb{E}[Z_j] = \phi_\theta,
% \qquad
% \mathbb{E}[\overline{\phi_n}] = \overline{\mathbb{E}[\phi_n]} = \overline{\phi_\theta}.
% \]
% Therefore,
% \[
% \mathcal{T}_2 + \mathcal{T}_3
% =
% \psi\overline{\phi_\theta} + \overline{\psi}\phi_\theta
% =
% 2\mathrm{Re}\big( \overline{\psi}\phi_\theta \big).
% \]
Plugging these, we obtain
\begin{align*}
\mathbb{E}\left[ \big|\phi_n - \psi\big|^2 \right]
&=
\left( |\phi_\theta|^2 + \frac{1-|\phi_\theta|^2}{n} \right)
-
2\mathrm{Re}\big( \overline{\psi}\phi_\theta \big)
+
|\psi|^2
\\[4pt]
&=
\big( |\phi_\theta|^2 - 2\mathrm{Re}\big( \overline{\psi}\phi_\theta \big) + |\psi|^2 \big)
+
\frac{1-|\phi_\theta|^2}{n}
\\[4pt]
&=
\big|\phi_\theta - \psi\big|^2 + \frac{1-|\phi_\theta|^2}{n}.
\end{align*} 
Under Dominated convergence, $\E[\nabla_\theta D_n(t)] = \nabla_\theta \E[D_n(t)]$, hence
\[
\E\left[\nabla_\theta D_n(t)\right]
= \nabla_\theta \big|\phi_\theta(t)-\psi(t)\big|^2
+ \nabla_\theta \frac{1-|\phi_\theta(t)|^2}{n},
\]
concluding the proof.

In practice one replaces $\int_{\mathbb{R}} w(t)(\cdot)dt$ by a deterministic quadrature on a uniform grid $t_k\in[-T,T]$ with weights $\omega_k$ (e.g.\ trapezoidal rule) and a Gaussian window $w(t)=e^{-\alpha t^2}$. All statements above remain valid with the integral replaced by $\sum_k \omega_k (\cdot)$:
\[
L(\theta) \approx \sum_{k} \omega_k\big|\phi_\theta(t_k)-\psi(t_k)\big|^2,
\quad
\widehat{L}_n(\theta) \approx \sum_{k} \omega_k\big|\phi_n(t_k)-\psi(t_k)\big|^2,
\]
and the bias term becomes
\[
\text{Bias}(\theta) = -\frac{1}{n}\sum_k \omega_k\nabla_\theta \big|\phi_\theta(t_k)\big|^2.
\]
Since the grid and weights are deterministic, they do not affect unbiasedness with respect to sampling; they only introduce a deterministic approximation error to the target functional $L(\theta)$.

\end{proof}

\subsection{Proof of VICReg's Recovery}
\label{proof:vcreg}
\begin{proof}
We prove this result in two parts.
\paragraph{Part I: $\mathbb{E}[\mathbf{X}] = \mathbf{0}$}
Given that $\mathbb{E}[\langle \mathbf{X}, \mathbf{a} \rangle] = 0$ for all unit vectors $\mathbf{a}$, and noting that $\langle \mathbf{X}, \mathbf{a} \rangle = \mathbf{a}^T \mathbf{X}$, we have:
\begin{equation}\label{eq:mean_condition}
\mathbb{E}[\mathbf{a}^T \mathbf{X}] = 0 \quad \text{for all } \mathbf{a} \in \mathbb{R}^d \text{ with } \|\mathbf{a}\| = 1
\end{equation}
By linearity of expectation:
\begin{equation}
\mathbf{a}^T \mathbb{E}[\mathbf{X}] = 0 \quad \text{for all unit vectors } \mathbf{a}
\end{equation}
Let $\boldsymbol{\mu} = \mathbb{E}[\mathbf{X}]$. We claim that $\boldsymbol{\mu} = \mathbf{0}$.
Suppose, for the sake of contradiction, that $\boldsymbol{\mu} \neq \mathbf{0}$. Then $\|\boldsymbol{\mu}\|_2 > 0$. Define the unit vector:
\begin{equation}
\mathbf{a}^* = \frac{\boldsymbol{\mu}}{\|\boldsymbol{\mu}\|_2}
\end{equation}
Since $\mathbf{a}^*$ is a unit vector, equation \eqref{eq:mean_condition} implies:
\begin{equation}
(\mathbf{a}^*)^T \boldsymbol{\mu} = 0
\end{equation}
However, substituting the definition of $\mathbf{a}^*$:
\begin{equation}
(\mathbf{a}^*)^T \boldsymbol{\mu} = \left(\frac{\boldsymbol{\mu}}{\|\boldsymbol{\mu}\|_2}\right)^T \boldsymbol{\mu} = \frac{\boldsymbol{\mu}^T \boldsymbol{\mu}}{\|\boldsymbol{\mu}\|_2} = \frac{\|\boldsymbol{\mu}\|_2^2}{\|\boldsymbol{\mu}\|_2} = \|\boldsymbol{\mu}\|_2 > 0
\end{equation}
This contradiction establishes that $\boldsymbol{\mu} = \mathbf{0}$.
\paragraph{Part II: $\mathrm{Cov}(\mathbf{X}) = \mathbf{I}_d$}
Since $\mathbb{E}[\mathbf{X}] = \mathbf{0}$, we have:
\begin{equation}
\mathrm{Var}(\langle \mathbf{X}, \mathbf{a} \rangle) = \mathbb{E}[(\langle \mathbf{X}, \mathbf{a} \rangle)^2] = \mathbb{E}[(\mathbf{a}^T \mathbf{X})^2]
\end{equation}
Expanding the quadratic form:
\begin{equation}
\mathbb{E}[(\mathbf{a}^T \mathbf{X})^2] = \mathbb{E}[\mathbf{a}^T \mathbf{X} \mathbf{X}^T \mathbf{a}] = \mathbf{a}^T \mathbb{E}[\mathbf{X} \mathbf{X}^T] \mathbf{a}
\end{equation}
Since $\mathbb{E}[\mathbf{X}] = \mathbf{0}$, the covariance matrix is $\mathrm{Cov}(\mathbf{X}) = \mathbb{E}[\mathbf{X} \mathbf{X}^T]$. Let $\boldsymbol{\Sigma} = \mathrm{Cov}(\mathbf{X})$. The variance condition gives us:
\begin{equation}\label{eq:quadratic_form}
\mathbf{a}^T \boldsymbol{\Sigma} \mathbf{a} = 1 \quad \text{for all unit vectors } \mathbf{a}
\end{equation}
We now show that $\boldsymbol{\Sigma} = \mathbf{I}_d$.
\emph{Step 1: Diagonal entries.} For $i \in \{1, 2, \ldots, d\}$, let $\mathbf{e}_i$ denote the $i$-th standard basis vector. Setting $\mathbf{a} = \mathbf{e}_i$ in equation \eqref{eq:quadratic_form}:
\begin{equation}
\mathbf{e}_i^T \boldsymbol{\Sigma} \mathbf{e}_i = \Sigma_{ii} = 1
\end{equation}
Therefore, all diagonal entries of $\boldsymbol{\Sigma}$ equal 1.
\emph{Step 2: Off-diagonal entries.} For distinct indices $i, j \in \{1, 2, \ldots, d\}$, consider the unit vector:
\begin{equation}
\mathbf{a} = \frac{\mathbf{e}_i + \mathbf{e}_j}{\|\mathbf{e}_i + \mathbf{e}_j\|_2} = \frac{\mathbf{e}_i + \mathbf{e}_j}{\sqrt{2}}
\end{equation}
Applying equation \eqref{eq:quadratic_form}:
\begin{equation}
\mathbf{a}^T \boldsymbol{\Sigma} \mathbf{a} = \frac{1}{2}(\mathbf{e}_i + \mathbf{e}_j)^T \boldsymbol{\Sigma} (\mathbf{e}_i + \mathbf{e}_j) = 1
\end{equation}
Expanding the quadratic form and using the symmetry of $\boldsymbol{\Sigma}$:
\begin{align}
\frac{1}{2}(\mathbf{e}_i^T \boldsymbol{\Sigma} \mathbf{e}_i + 2\mathbf{e}_i^T \boldsymbol{\Sigma} \mathbf{e}_j + \mathbf{e}_j^T \boldsymbol{\Sigma} \mathbf{e}_j) &= 1\\
\frac{1}{2}(\Sigma_{ii} + 2\Sigma_{ij} + \Sigma_{jj}) &= 1\\
\frac{1}{2}(1 + 2\Sigma_{ij} + 1) &= 1\\
1 + \Sigma_{ij} &= 1\\
\Sigma_{ij} &= 0
\end{align}
Therefore, all off-diagonal entries of $\boldsymbol{\Sigma}$ equal zero, establishing that $\boldsymbol{\Sigma} = \mathbf{I}_d$.
\end{proof}

\section{Background}

\textbf{Foundation: The Linear Regression Model}
We start with the standard linear regression model:
$$\mathbf{y} = \mathbf{X}\boldsymbol{\beta} + \boldsymbol{\varepsilon}$$
where:
\begin{itemize}
    \item $\mathbf{y} = [y_1, y_2, \ldots, y_n]^T \in \mathbb{R}^n$ is the response vector
    \item $\mathbf{X} \in \mathbb{R}^{n \times p}$ is the design matrix with $\mathbf{X}_{ij} = x_{ij}$
    \item $\boldsymbol{\beta} = [\beta_1, \beta_2, \ldots, \beta_p]^T \in \mathbb{R}^p$ is the parameter vector
    \item $\boldsymbol{\varepsilon} = [\varepsilon_1, \varepsilon_2, \ldots, \varepsilon_n]^T \sim \mathcal{N}(\mathbf{0}, \sigma^2\mathbf{I}_n)$ is the error vector
\end{itemize}
The error assumption means:
$$\mathbb{E}[\varepsilon_i] = 0, \quad \text{Var}(\varepsilon_i) = \sigma^2, \quad \text{Cov}(\varepsilon_i, \varepsilon_j) = 0 \text{ for } i \neq j$$
\textbf{Step 1: Deriving the OLS Estimator}
To find the OLS estimator, we minimize the sum of squared residuals:
$$\text{SSR}(\boldsymbol{\beta}) = \sum_{i=1}^n (y_i - \mathbf{x}_i^T\boldsymbol{\beta})^2 = (\mathbf{y} - \mathbf{X}\boldsymbol{\beta})^T(\mathbf{y} - \mathbf{X}\boldsymbol{\beta})$$
Expanding this quadratic form:
\begin{align}
\text{SSR}(\boldsymbol{\beta}) &= \mathbf{y}^T\mathbf{y} - 2\boldsymbol{\beta}^T\mathbf{X}^T\mathbf{y} + \boldsymbol{\beta}^T\mathbf{X}^T\mathbf{X}\boldsymbol{\beta}
\end{align}
Taking the derivative with respect to $\boldsymbol{\beta}$:
$$\frac{\partial \text{SSR}}{\partial \boldsymbol{\beta}} = -2\mathbf{X}^T\mathbf{y} + 2\mathbf{X}^T\mathbf{X}\boldsymbol{\beta}$$
Setting equal to zero and solving:
$$-2\mathbf{X}^T\mathbf{y} + 2\mathbf{X}^T\mathbf{X}\boldsymbol{\beta} = \mathbf{0}$$
$$\mathbf{X}^T\mathbf{X}\boldsymbol{\beta} = \mathbf{X}^T\mathbf{y}$$
Assuming $\mathbf{X}^T\mathbf{X}$ is invertible:
$$\boxed{\hat{\boldsymbol{\beta}} = (\mathbf{X}^T\mathbf{X})^{-1}\mathbf{X}^T\mathbf{y}}$$

\begin{figure*}[t!]
    \centering
    \includegraphics[width=0.49\linewidth]{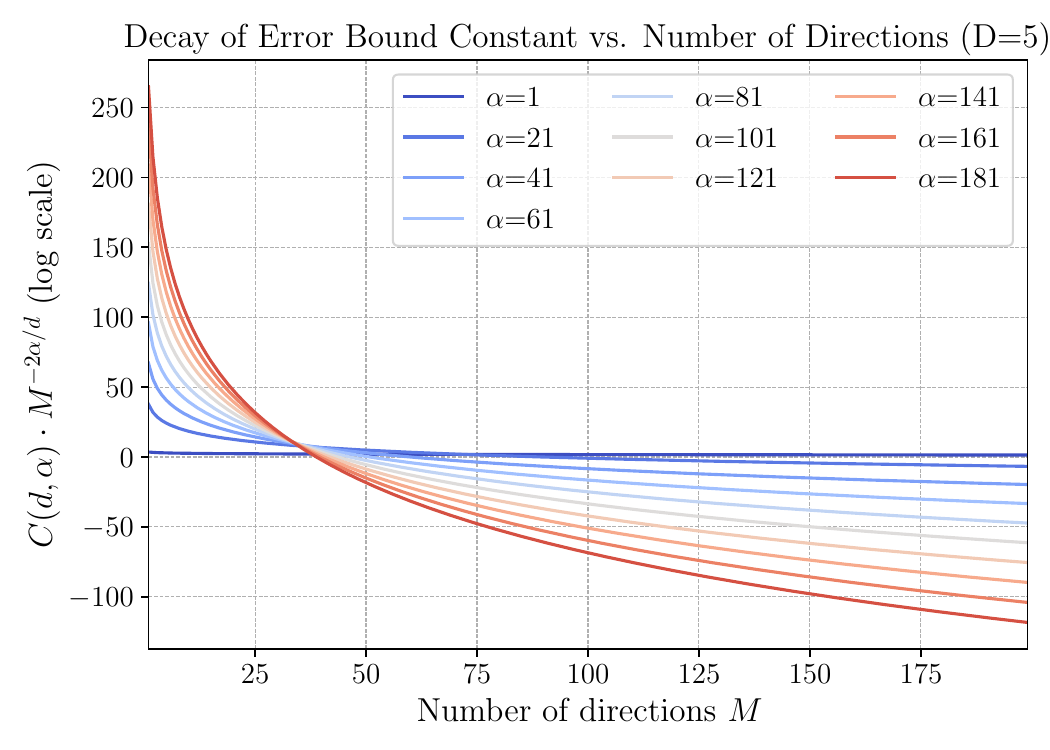}
    \caption{\small Depiction of the expected BCS loss upper bound (\cref{thm:spherical_bounds}) for various smoothness values $\alpha$. We clearly see that as the smoothness increases ({\bf blue to red}), as the upper bound decreases more and more rapidly with $M$.}
    \label{fig:bound_example}
\end{figure*}

\begin{figure}[t!]
\centering
    \centering
    dimension=128, slices=10\\
    \includegraphics[width=0.9\linewidth]{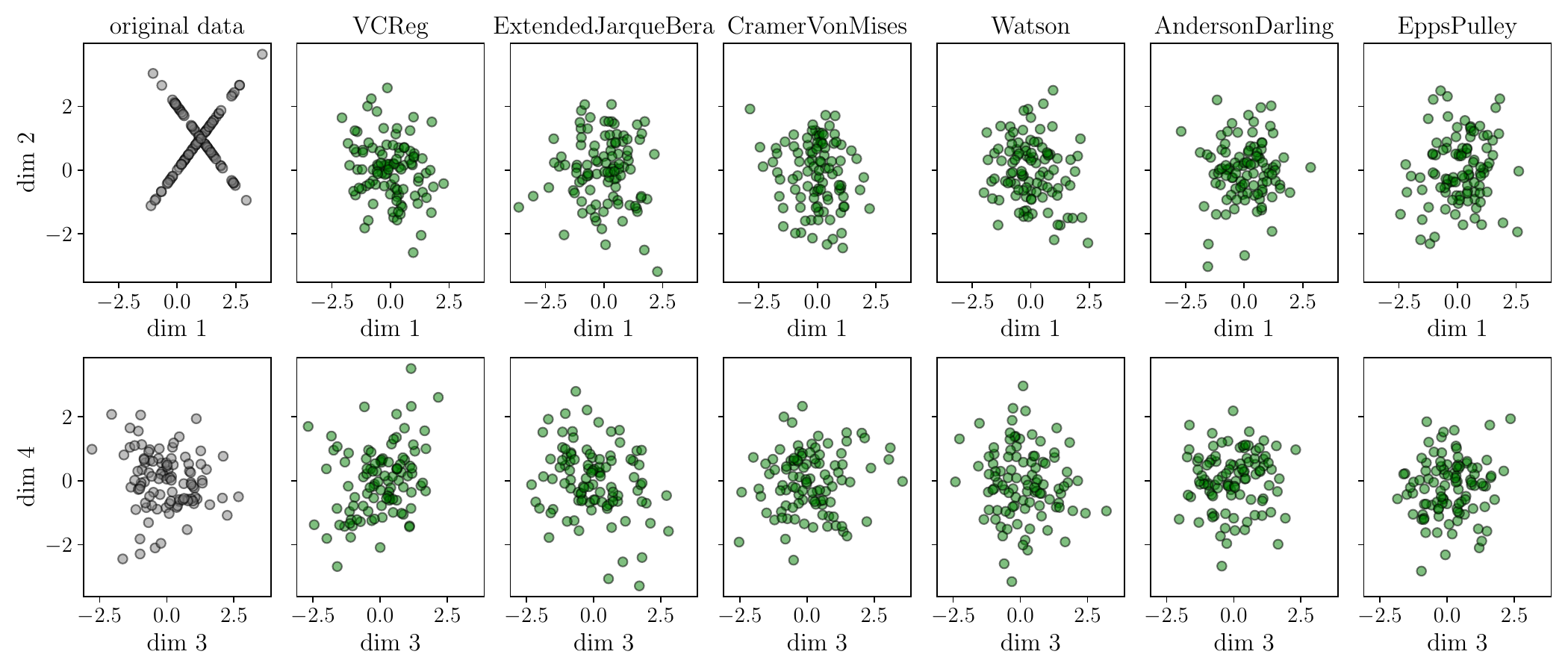}
    dimension=128, slices=100\\
    \includegraphics[width=0.9\linewidth]{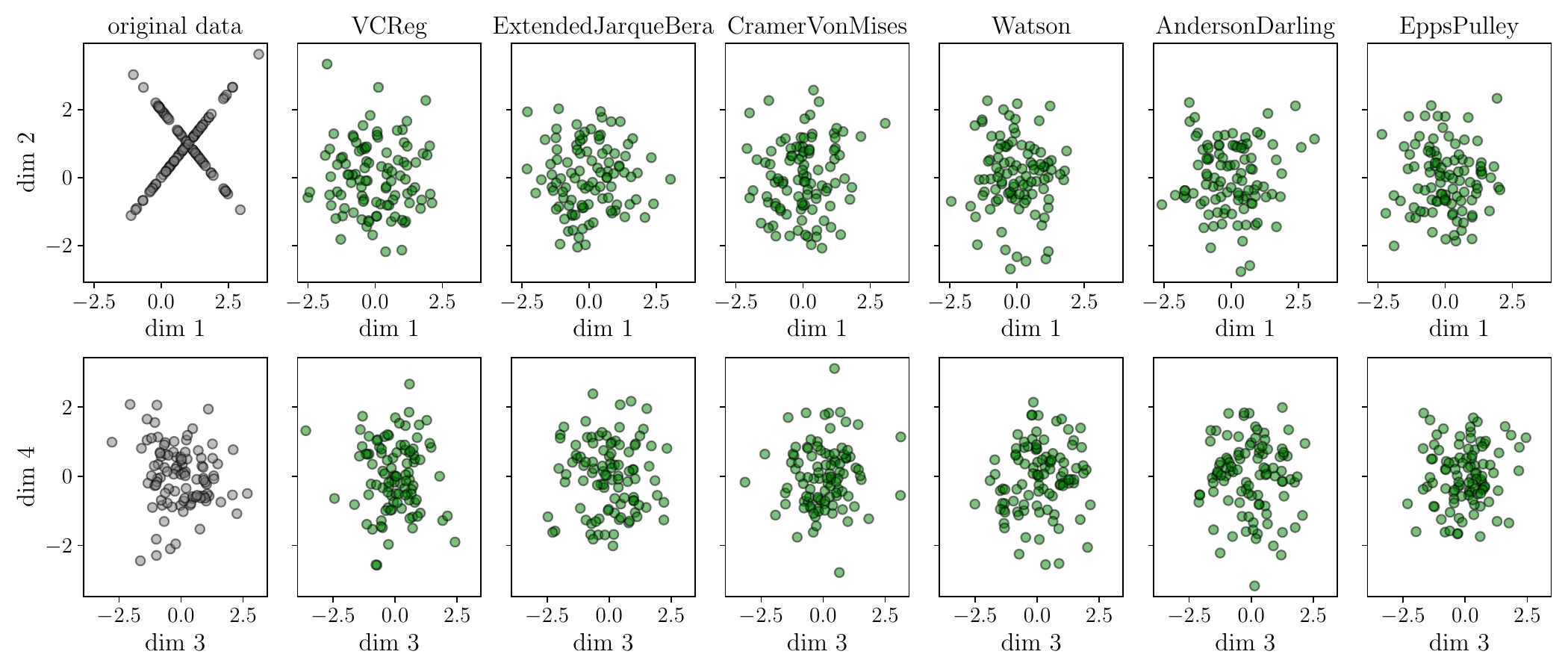}
    dimension=1024, slices=100\\
    \includegraphics[width=0.9\linewidth]{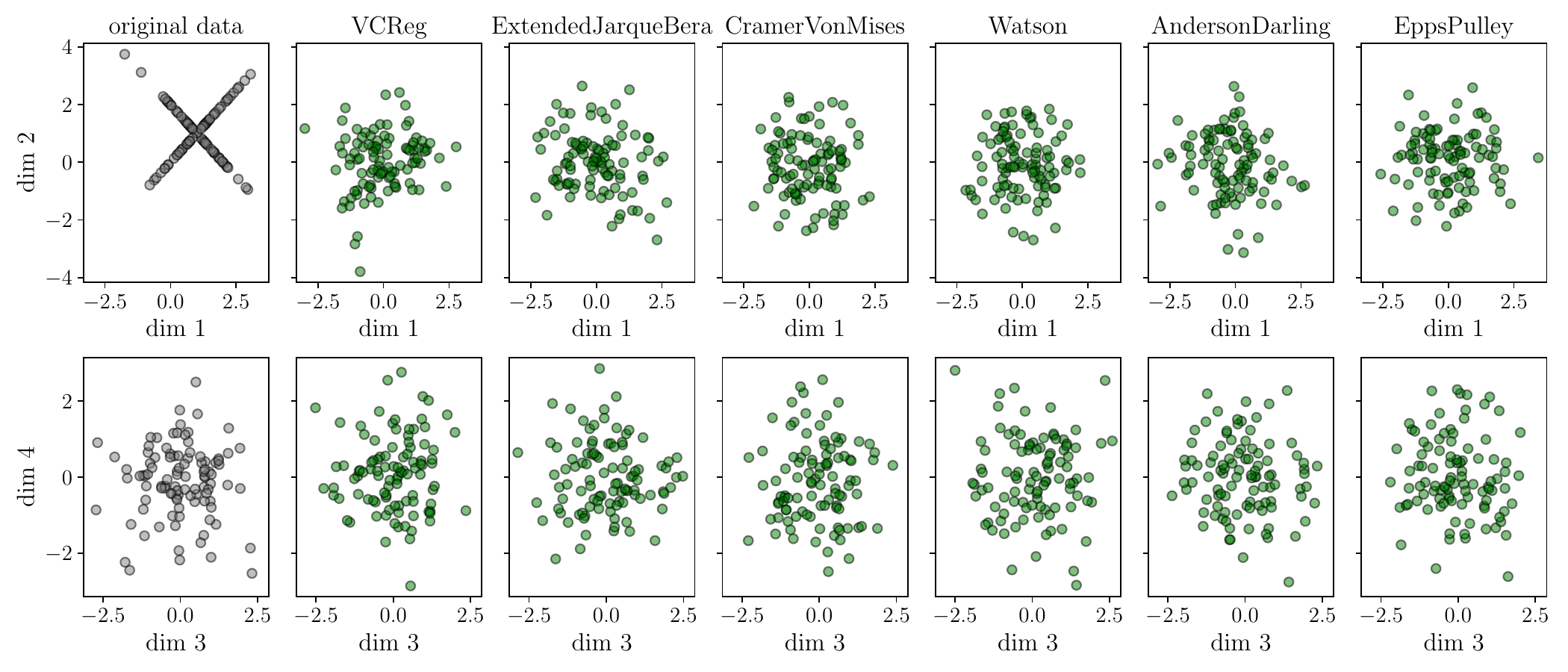}
    \caption{Reprise of \cref{fig:nonparametric} for additional dimensions and number of 1d projections.}
    \label{fig:extra_nonparametric}
\end{figure}

\begin{figure}[t!]
    \centering
    \includegraphics[width=0.8\linewidth]{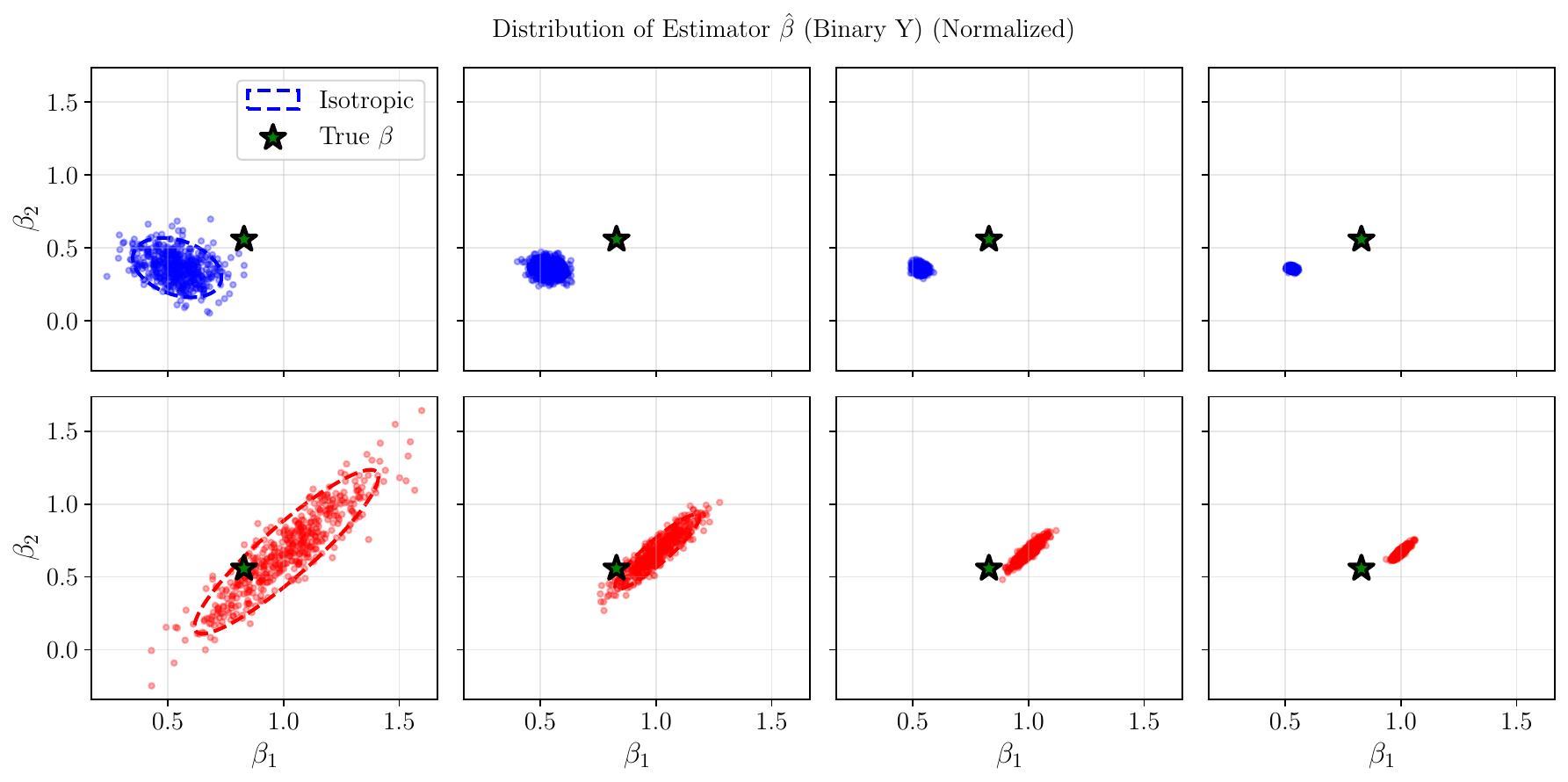}
    \includegraphics[width=0.8\linewidth]{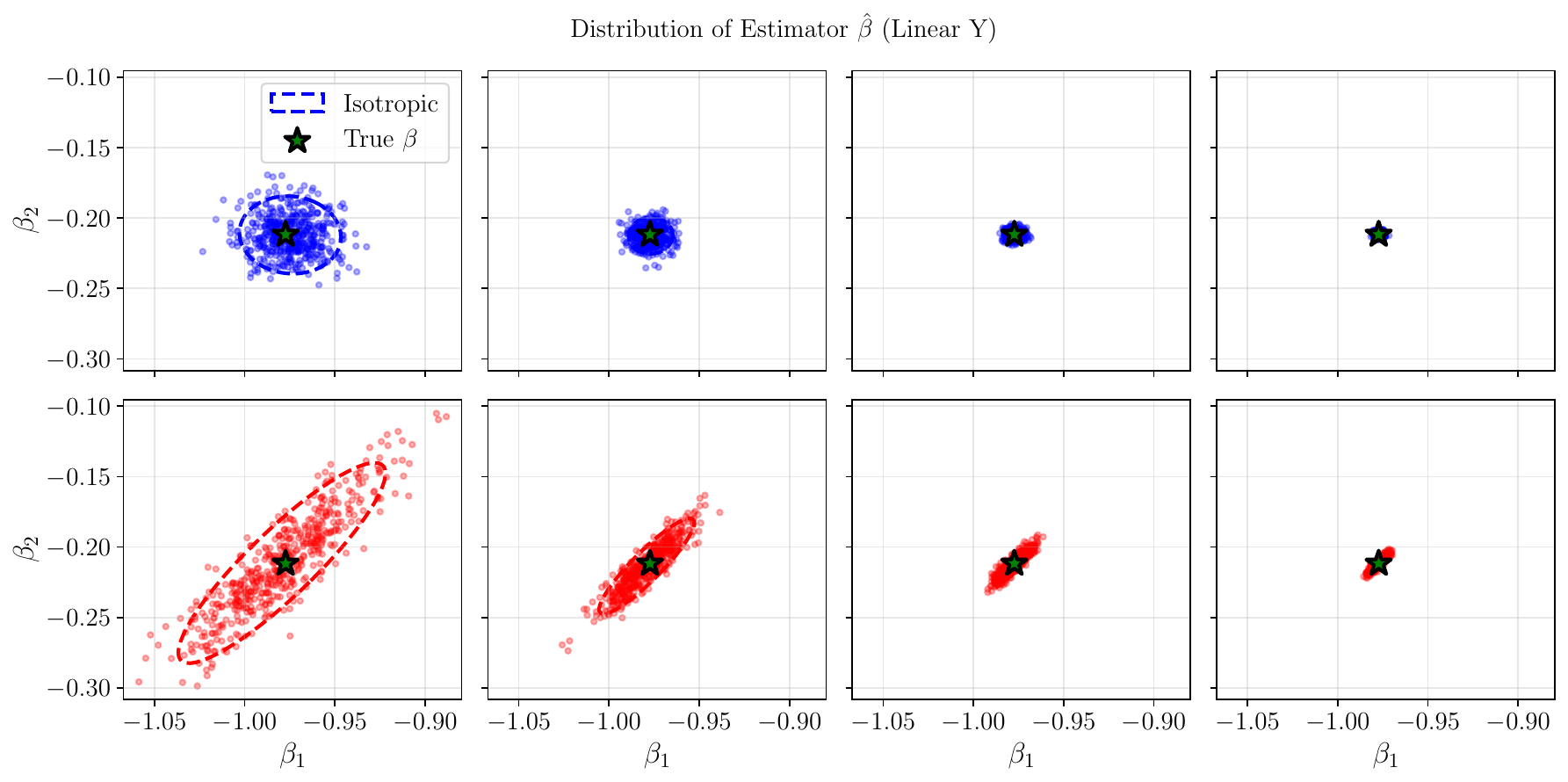}
    \includegraphics[width=0.8\linewidth]{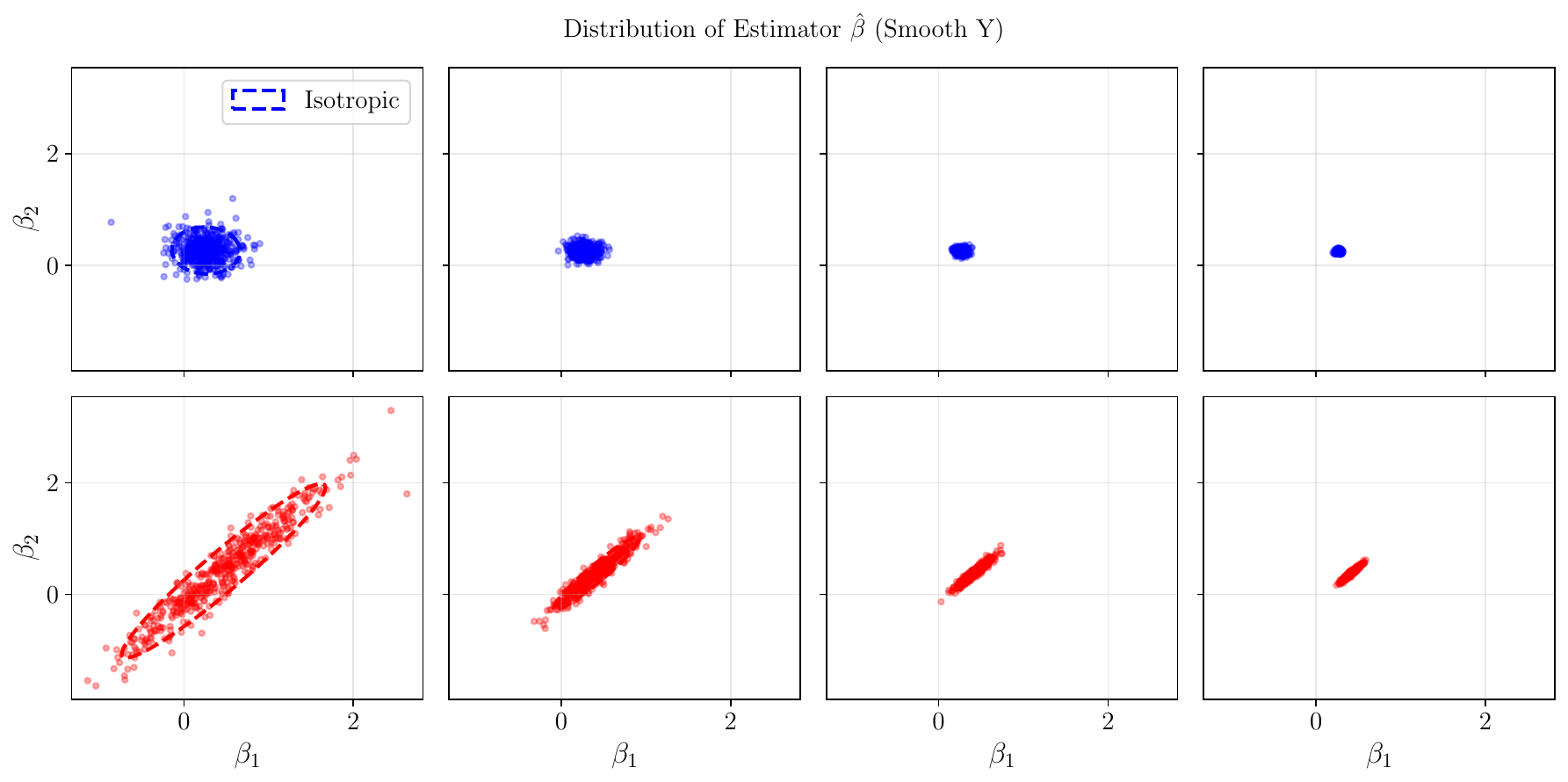}
    \caption{Depiction of the distribution of optimized $\beta$ values from OLS when comparing $\mZ_{\rm iso}$ and $\mZ_{\rm aniso}$ from \cref{thm:linear_probe_bias,thm:linear_probe_variance}. We clearly observe that the anisotropic version ({\bf blue}) provides much lower variance compared to the isotropic case ({\bf red}). We consider a binary classification (linear separable class) ({\bf top row}), a linear regression task ({\bf middle row}), and a nonlinear regression task with smooth targets ({\bf bottom row}). For each case, we resample the training samples numerous times and produce an estimate for $\beta$ each time. Because the data is $2$-dimensional, we can visualize the $\beta$ distribution directly.}
    \label{fig:beta_distributions}
\end{figure}

\begin{figure}
    \centering
    \includegraphics[width=0.32\linewidth]{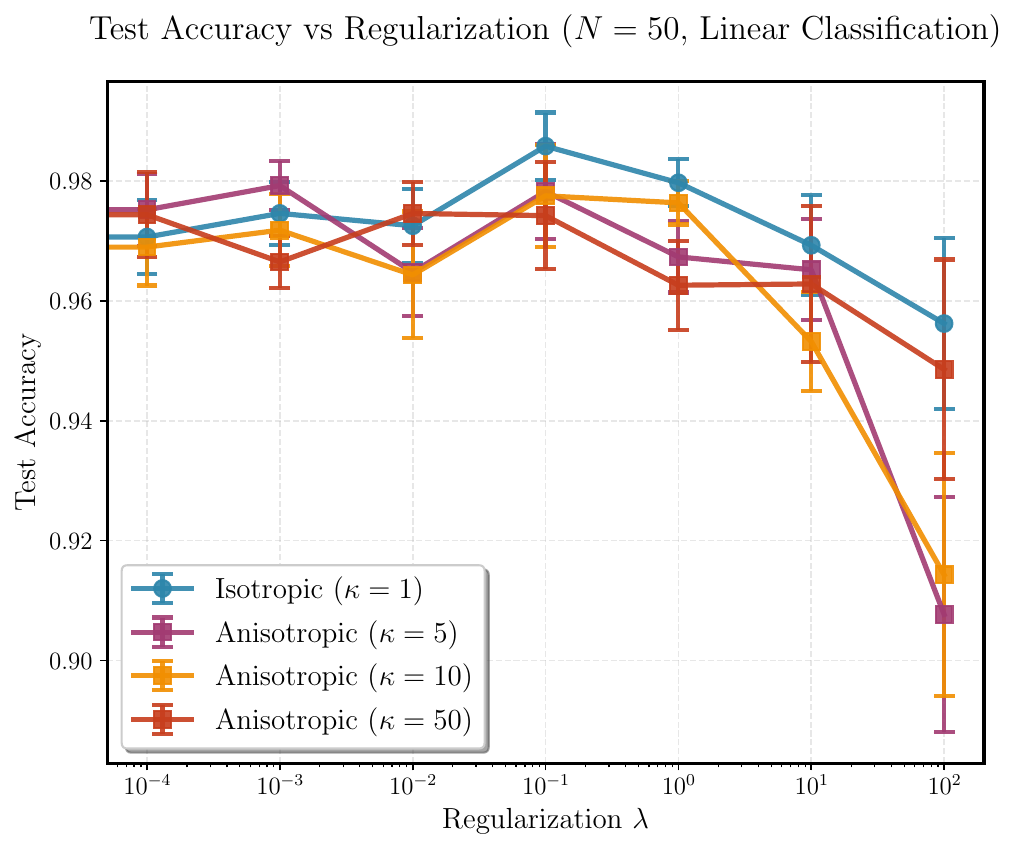}
    \includegraphics[width=0.32\linewidth]{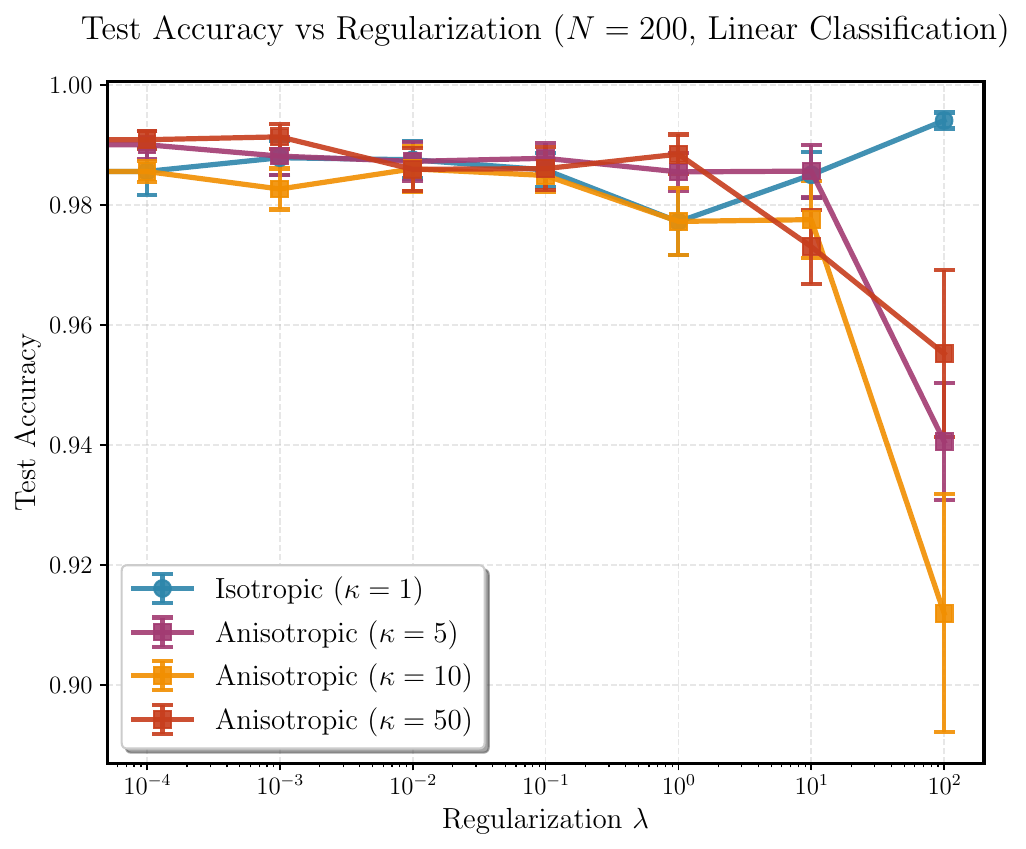}
    \includegraphics[width=0.32\linewidth]{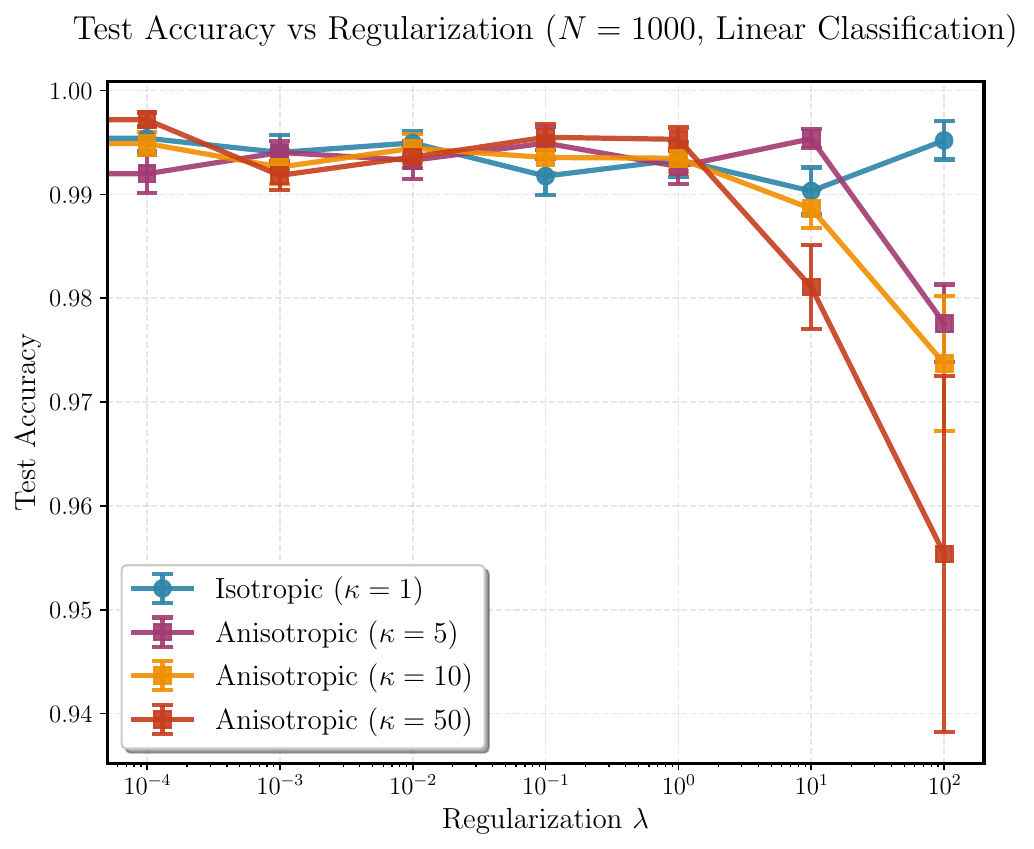}\\
    \includegraphics[width=0.32\linewidth]{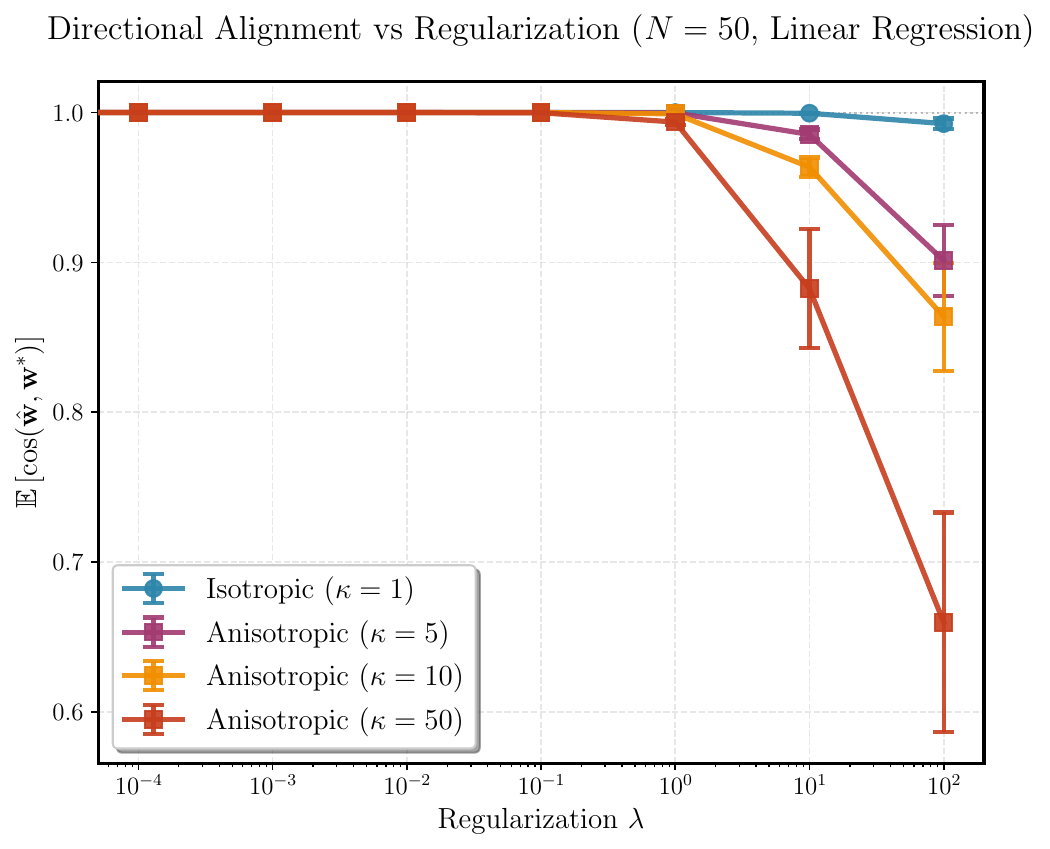}
    \includegraphics[width=0.32\linewidth]{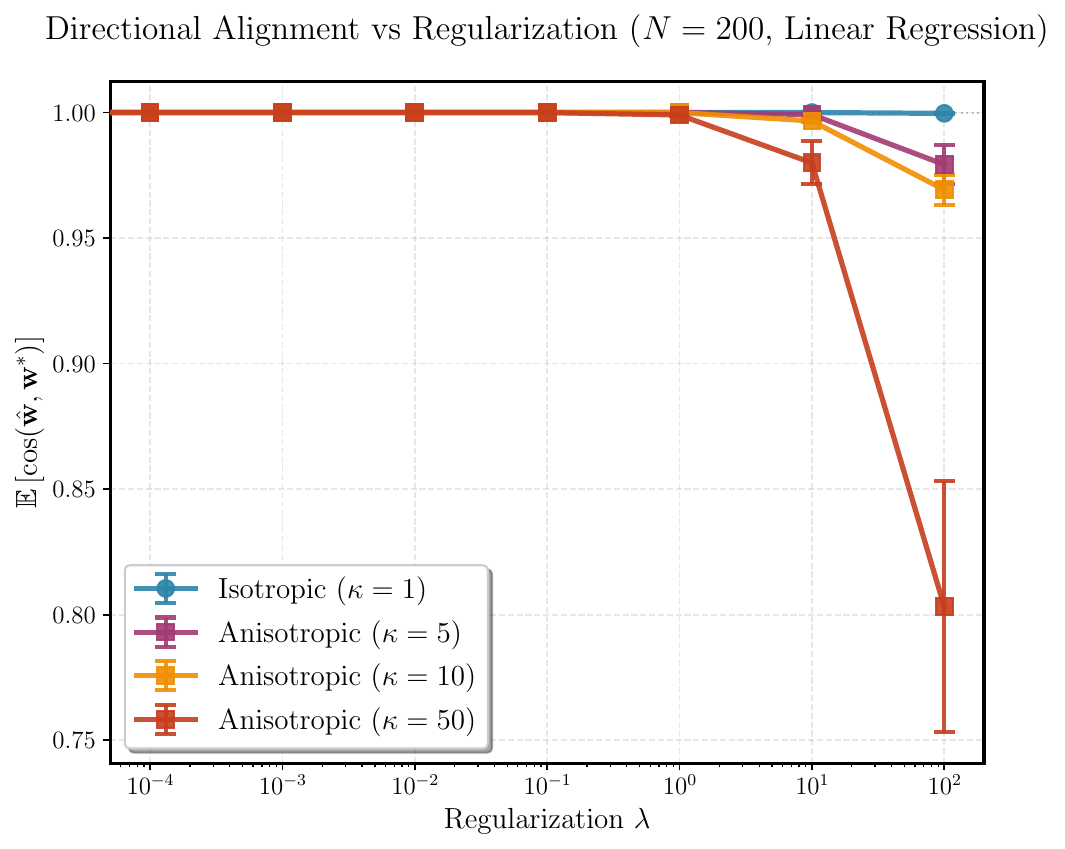}
    \includegraphics[width=0.32\linewidth]{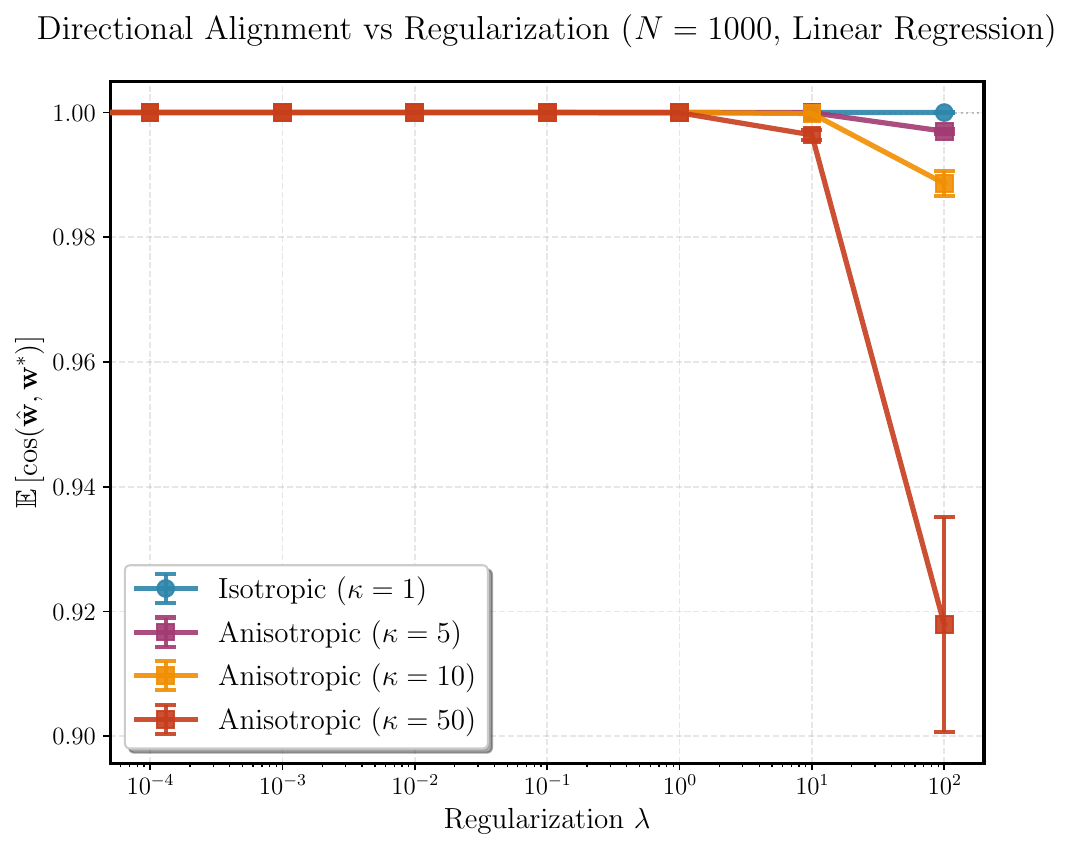}
    \caption{Depiction of accuracy ({\bf top}) and cosine similarity between estimated and true estimator ({\bf bottom}) for the OLS setting with varying strength of Tikhonov regularization ({\bf x-axis)} comparing isotropic and anisotropic embeddings. As per \cref{thm:gradient_bias}, the anisotropic distribution creates a bias in the OLS estimation for nonzero regularization.}
    \label{fig:ols_bias}
\end{figure}

\begin{figure}[t!]
    \includegraphics[width=0.33\linewidth]{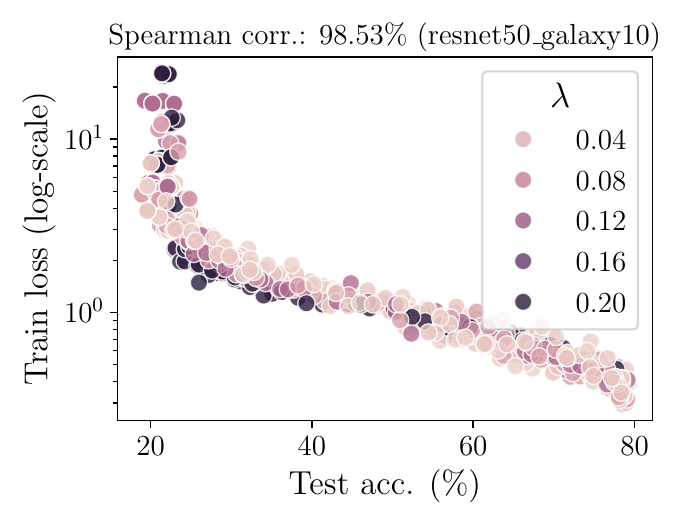}
    \includegraphics[width=0.33\linewidth]{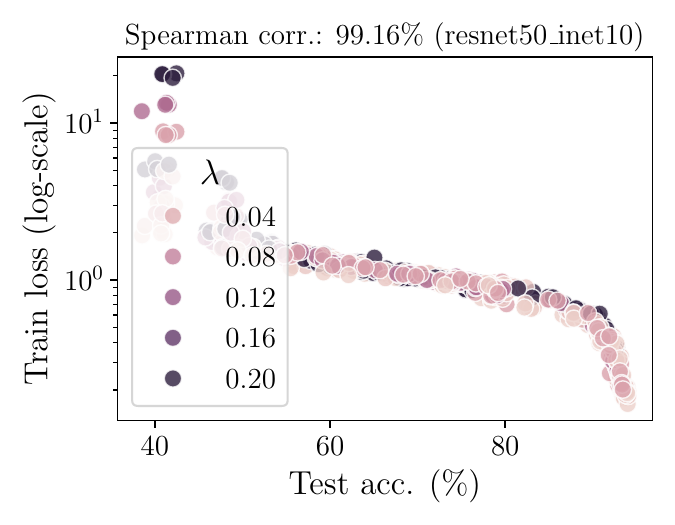}
    \includegraphics[width=0.33\linewidth]{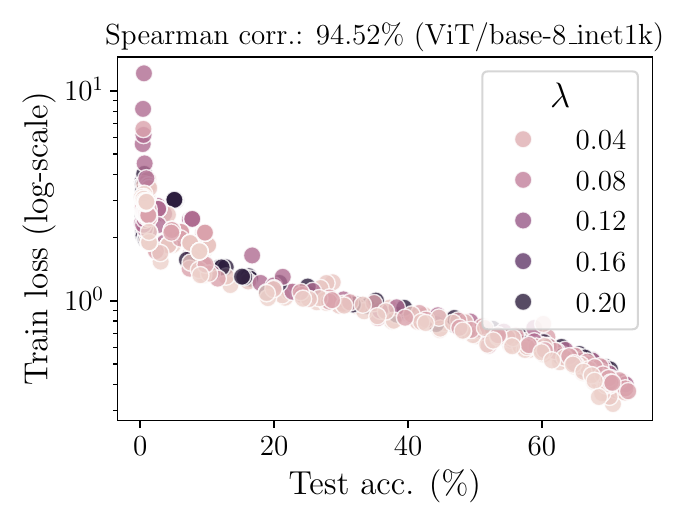}
    \includegraphics[width=0.33\linewidth]{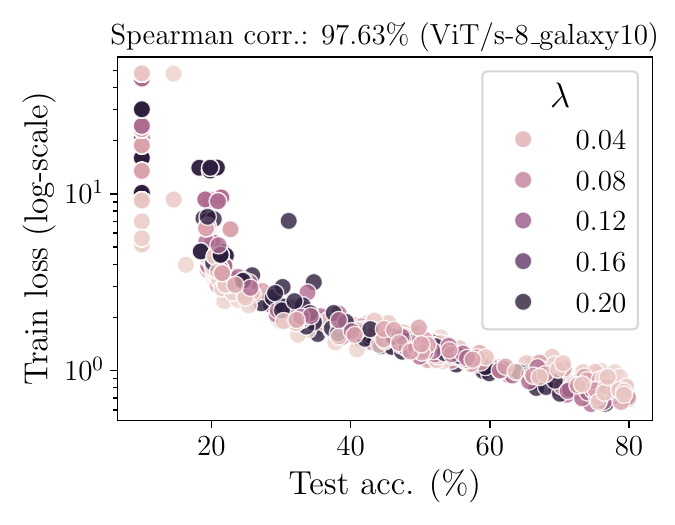}
    \includegraphics[width=0.33\linewidth]{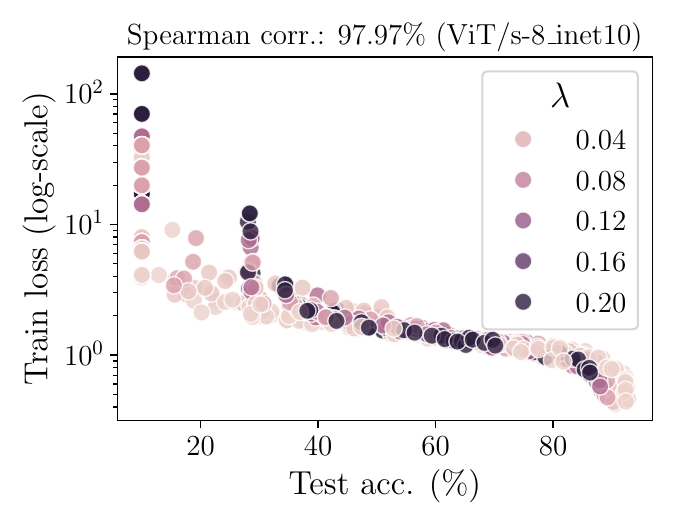}
    \includegraphics[width=0.33\linewidth]{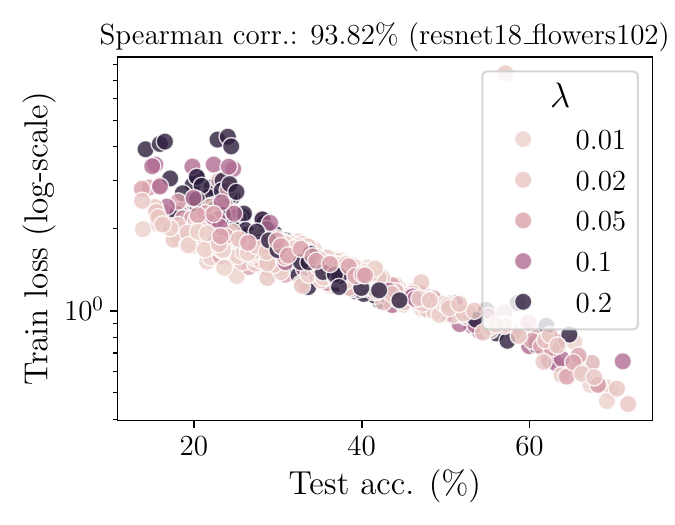}
    \caption{Additional figures provides in \cref{fig:corr_loss_extra}}
    \label{fig:corr_loss_extra}
\end{figure}

\begin{table}[t!]
\caption{Performance metrics across different sample sizes from \cref{fig:galaxy10}}
\label{tab:galaxy10}
\centering
\begin{tabular}{llrrrrrrr}
\toprule
\textbf{Freeze Backbone} & \textbf{Model Name} & \multicolumn{7}{c}{\textbf{Samples per Class}} \\
\cmidrule(lr){3-9}
 &  & \textbf{All} & \textbf{1} & \textbf{2} & \textbf{5} & \textbf{10} & \textbf{100} & \textbf{1000} \\
\midrule
\multirow[c]{6}{*}{\textbf{No}} 
 & \multicolumn{8}{l}{\textit{LeJEPA (Ours)}} \\
 & ConvNeXt-V2 Nano & 82.72 & \textbf{29.42} & \textbf{36.65} & \textbf{50.94} & \textbf{59.85} & \textbf{75.34} & 81.97 \\
 & LeViT-128 & 79.41 & 18.45 & 24.08 & 33.11 & 41.76 & 64.59 & 77.59 \\
 & ResNet-18 & 82.15 & 23.34 & 31.56 & 43.82 & 54.64 & 73.53 & 81.41 \\
 & ResNet-34 & \textbf{83.28} & 24.27 & 31.51 & 44.23 & 53.95 & 74.93 & \textbf{82.32} \\
\cmidrule(lr){2-9}
 & \multicolumn{8}{l}{\textit{Baselines}} \\
 & DINOv2 Small & 78.34 & 21.05 & 21.71 & 30.33 & 36.23 & 60.81 & 75.55 \\
 & DINOv3 ViT-S/16 & 81.60 & 24.71 & 29.43 & 37.71 & 44.71 & 69.87 & 80.54 \\
\midrule
\multirow[c]{6}{*}{\textbf{Yes}} 
 & \multicolumn{8}{l}{\textit{LeJEPA (Ours)}} \\
 & ConvNeXt-V2 Nano & 76.52 & 28.74 & 36.65 & 50.60 & 59.50 & 72.62 & 77.24 \\
 & LeViT-128 & 69.00 & 25.85 & 33.30 & 45.52 & 52.43 & 64.37 & 69.39 \\
 & ResNet-18 & 75.95 & 30.48 & 38.22 & 50.85 & 58.86 & 72.70 & 76.39 \\
 & ResNet-34 & \textbf{78.17} & \textbf{31.08} & \textbf{38.33} & \textbf{52.26} & \textbf{60.63} & \textbf{74.77} & \textbf{78.62} \\
\cmidrule(lr){2-9}
 & \multicolumn{8}{l}{\textit{Baselines}} \\
 & DINOv2 Small & 67.62 & 27.68 & 32.22 & 40.72 & 47.72 & 62.49 & 67.89 \\
 & DINOv3 ViT-S/16 & 71.38 & 30.17 & 36.65 & 45.74 & 51.51 & 65.90 & 71.35 \\
\bottomrule
\end{tabular}
\end{table}

\begin{figure}
    \centering
    \includegraphics[width=\linewidth]{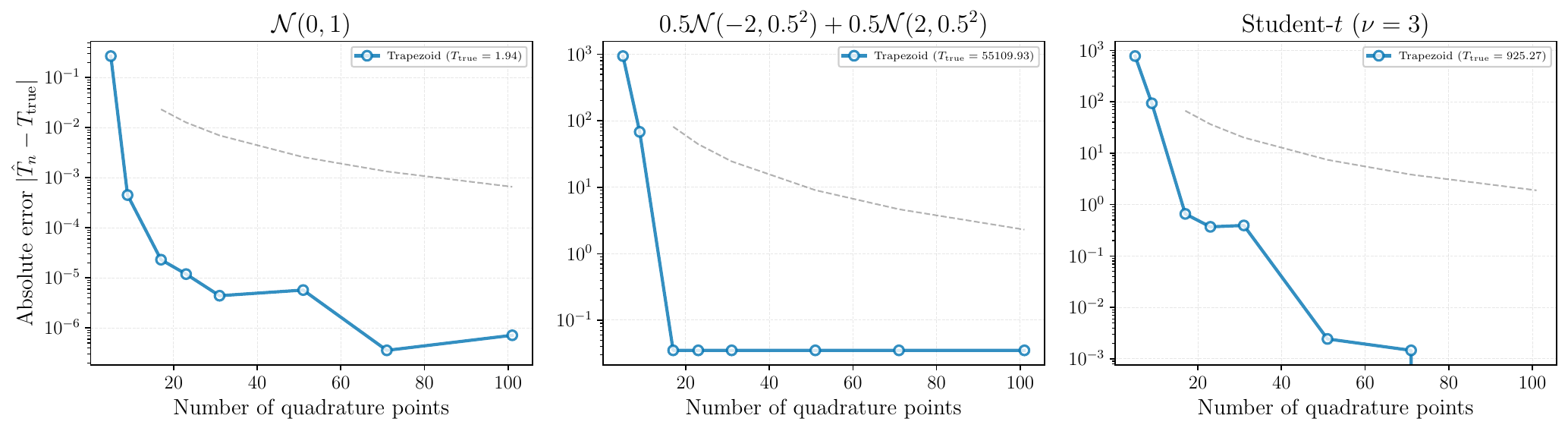}
    \caption{Proposed trapezoid quadrature for the Epps-Pulley statistic as implemented in \cref{lst:epps-pulley-pytorch}. We depict the approximation error of the integral for various distributions, demonstrate rapid convergence (faster than quadratic show in {\bf grey line}) across possible embedding distributions.}
    \label{fig:quadrature}
\end{figure}

\begin{figure*}[t!]
    
    \includegraphics[width=0.33\linewidth]{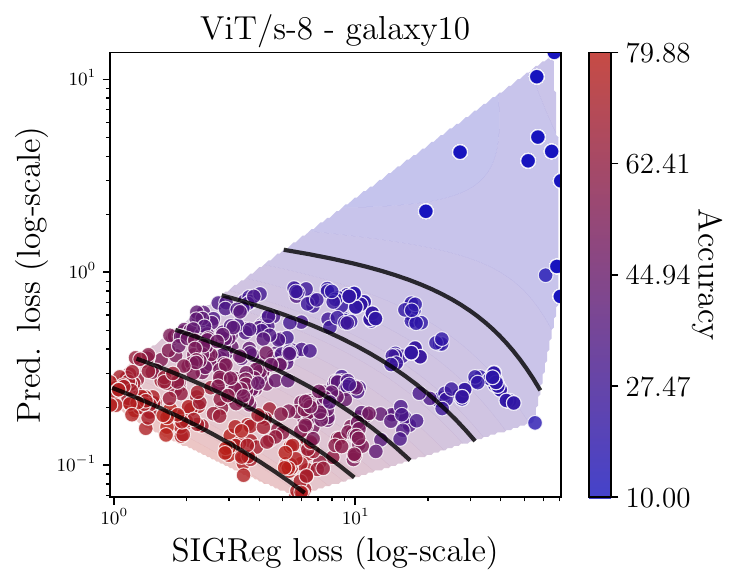}
    \includegraphics[width=0.33\linewidth]{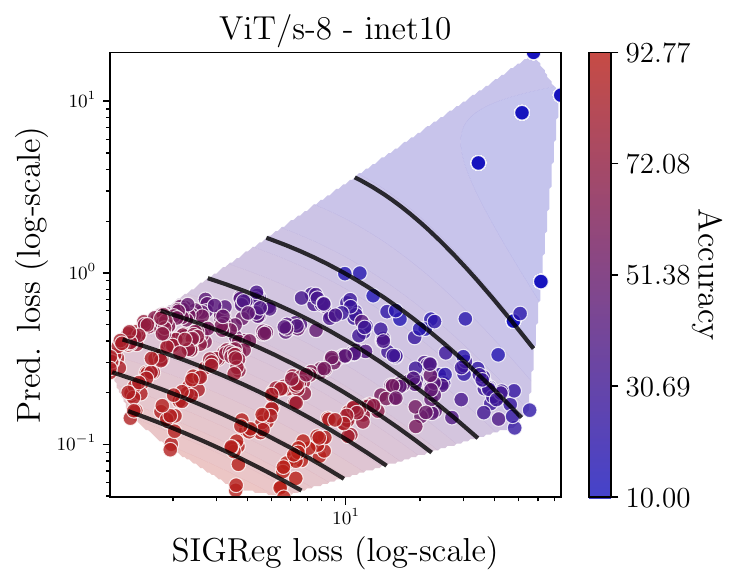}
    \includegraphics[width=0.33\linewidth]{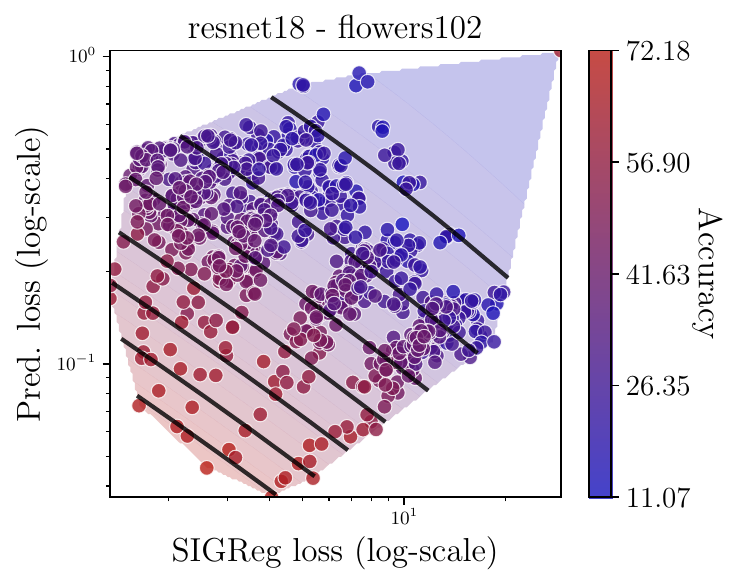}
    \caption{Additional figures for \cref{fig:heatmap_loss}.}
    \label{fig:extra_heatmap_loss}
\end{figure*}

\begin{table*}[t!]
\caption{Top 1 accuracy (in \%) with LeJEPA pretraining on Imagenet-100 for 400 epochs (All values are percentages)}
\label{tab:proj_pred}
\begin{tabular}{ll|lll|lll|lll}
\toprule
 & backbone & \multicolumn{3}{c}{resnet50} & \multicolumn{3}{c}{vit\_small\_patch8\_224} & \multicolumn{3}{c}{vit\_tiny\_patch8\_224} \\
 & Projector & 1-layer & 2-layer & 3-layer & 1-layer & 2-layer & 3-layer & 1-layer & 2-layer & 3-layer \\
w/ predictor & w/ SWA &  &  &  &  &  &  &  &  &  \\
\midrule
\multirow[c]{2}{*}{False} & False & 79.71 & 82.44 & 83.93 & 76.59 & 80.77 & 81.07 & 71.79 & 76.87 & 80.37 \\
 & True & 79.79 & 82.69 & 83.50 & 79.96 & 83.63 & 84.12 & 75.86 & 82.36 & 80.50 \\
\multirow[c]{2}{*}{True} & False & 79.41 & 82.44 & 83.57 & 77.58 & 79.41 & 81.91 & 67.74 & 77.64 & 80.73 \\
 & True & 78.87 & 82.04 & 82.82 & 77.11 & 81.77 & 82.58 & 69.53 & 78.27 & 79.77 \\
\bottomrule
\end{tabular}
\end{table*}

\begin{table*}[t!]
\caption{{\bf Small architecture in-domain LeJEPA pretraining} from random initialization across datasets and architectures, with frozen backbone linear evaluation. First, {\bf LeJEPA is able to produce near state-of-the-art performances on tiny dataset with only a thousand samples}, e.g., flowers102. Second, {\bf on non-natural image data, LeJEPA clearly outperforms the latest frontier vision models}, e.g., Galaxy10. See \cref{fig:galaxy10} for additional experiments with varying number of training samples and with full finetuning.}
\label{tab:in_domain}
\begin{tabular}{llllllll}
\toprule
 & Pretraining & flowers102 & cifar100 & food101 & inet10 & cifar10 & galaxy10 \\
 & \# train. samples & 1020 & 50000 & 75750 & 13000 & 50000 & 11008 \\
\midrule
LeJEPA (convnextv2\_nano) 14M & in-domain & 64.34 & 69.26 & 69.59 & 90.81 & 92.22 & 76.05 \\
LeJEPA (resnet18) 11M & in-domain & 74.57 & 69.94 & 73.57 & 92.36 & 92.51 & 75.32 \\
LeJEPA (resnet34) 21M & in-domain & 71.85 & 70.44 & 74.95 & 92.80 & 93.16 & 77.29 \\
LeJEPA (resnext26ts) 8M & in-domain & 82.19 & 69.10 & 76.77 & 92.82 & 91.59 & 73.78 \\
LeJEPA (swin\_tiny) 27M & in-domain & 63.94 & 65.08 & 78.40 & 92.87 & 92.67 & 74.89 \\
IJEPA-inet22k (ViT-H/14) 630M & inet1k & 85.76 & 86.93 & 81.06 & 98.65 & 97.77 & 62.93
\end{tabular}
\end{table*}

\begin{table}[t!]
    \centering
    \setlength{\tabcolsep}{0.5em}
    \caption{Time (in millisecond) to compute the proposed SIGReg loss from \cref{lst:epps-pulley-pytorch} on a Tesla V100-SXM2-16GB for varying mini-batch size ($N$), number of slices ($M$), integration points. Results are computed over $10$ runs.}
    \label{tab:times}
    \begin{tabular}{rrrrr}
\toprule
N & M & \makecell{\# integration \\ points} & mean (ms) & std (ms) \\
\midrule
 512 & 512 &  16 & 0.465236 & 0.011642 \\
 512 & 512 &  64 & 0.461317 & 0.003894 \\
 512 & 512 &  256 & 0.627644 & 0.003337 \\
 2048 & 512 &  16 & 1.406441 & 0.002415 \\
 8192 & 512 &  16 & 6.188304 & 0.007226 \\
 8192 & 8192 &  16 & 8.685009 & 0.038829 \\
 32768 & 512 &  16 & 26.373118 & 0.012732 \\
 512 & 2048 &  16 & 0.465614 & 0.005274 \\
 512 & 8192 &  16 & 0.670379 & 0.006854 \\
\bottomrule
\end{tabular}
\end{table}

\begin{table}[t!]
\caption{Number of \cref{fig:lambda_views}.}
\label{tab:lambda_perf}
\begin{tabular}{lllllllllllll}
\toprule
 & \multicolumn{12}{c}{resnet50} \\
$\lambda$ & 0.001 & 0.005 & 0.010 & 0.020 & 0.025 & 0.050 & 0.100 & 0.150 & 0.200 & 0.300 & 0.400 & 0.500 \\
\#views &  &  &  &  &  &  &  &  &  &  &  &  \\
\midrule
2 & 81.41 & 82.73 & 83.49 & 82.99 & 82.23 & - & - & - & - & - & - & - \\
4 & 79.88 & 83.04 & 84.36 & 84.68 & 84.33 & 83.00 & 82.91 & 81.05 & 78.58 & - & - & - \\
8 & 76.67 & 81.58 & 83.59 & 83.49 & 83.76 & 84.32 & 83.66 & 83.07 & 82.16 & 81.00 & 79.25 & 77.72  \\
\bottomrule
\end{tabular}
\end{table}

\section{Details on Low-Discrepancy Sequences}
\label{sec:low_discrepancy}

Quasi-Monte Carlo (QMC) methods, such as the Sobol sequence, are widely used to generate low-discrepancy samples in the unit hypercube, providing improved uniformity over purely random sampling. To obtain samples uniformly distributed on the hypersphere, each QMC point is mapped to a standard normal vector via the inverse cumulative distribution function (CDF), and then projected onto the sphere by normalization. This approach leverages the rotational invariance of the multivariate normal distribution, ensuring that the resulting directions are uniformly distributed on the sphere's surface. While the low-discrepancy property is not strictly preserved under this nonlinear mapping, the resulting samples are empirically more uniform than random samples and are standard in high-dimensional applications \cite{marsaglia1972choosing,dick2010digital,caflisch1998monte}.
% \begin{algorithm}[H]
% \caption{Quasi-Monte Carlo Sampling on the $d$-Dimensional Unit Hypersphere}
% \label{alg:qmc_hypersphere}
\begin{algorithmic}[1]
\Require Number of points $N$, dimension $d$
\Ensure Points $\{\mathbf{y}_i\}_{i=1}^N$ quasi-uniformly distributed on $\mathbb{S}^{d-1}$
\For{$i = 1$ to $N$}
    \State Generate $\mathbf{x}_i \in [0,1]^d$ as the $i$-th point of a Sobol sequence
    \State Transform each component: $z_{i,j} = \Phi^{-1}(x_{i,j})$ for $j = 1, \ldots, d$ \Comment{$\Phi^{-1}$ is the inverse CDF of the standard normal}
    \State Normalize: $\mathbf{y}_i = \mathbf{z}_i / \|\mathbf{z}_i\|_2$
\EndFor
\end{algorithmic}
% \end{algorithm}

\section{Shapiro-Wilk Test}
\label{sec:shapiro_wilk}

Let X1 < X2 < . . . < Xn denote an ordered random sample of size n from a standard normal distribution. Also, let mÂ 5 (m1,m2,...,mn) be the vector of expected values of standard normal order statistics, and let V 5 (vij ) be the corresponding n 3 n covariance matrix, so that
\begin{equation}
    E\left(X_{i}\right)=m_{i} \quad \text { and } \quad \operatorname{cov}\left(X_{i}, X_{j}\right)=v_{i j}, \quad i, j=1,2, \ldots, n
\end{equation}
The W test statistic \cite{shapiro1965analysis} for normality is then denoted by
\begin{equation}
    \begin{array}{l}
W=\frac{\left(\sum_{i=1}^{n} a_{i} Y_{i}\right)}{\sum_{i=1}^{n}\left(Y_{i} -\bar{Y}\right)^{2}}=\frac{(\mathbf{a} \mathbf{Y})}{S^{2}}\\
\mathbf{a}^{\prime}=\left(a_{1}, a_{2}, \ldots, a_{n}\right)=\mathbf{m} \mathbf{V}^{-1}\left(\mathbf{m} \mathbf{V}^{-1} \mathbf{V}^{-1} \mathbf{m}\right)^{-1 / 2}\\
\mathrm{S}^{2}=\sum_{i=1}^{n}\left(Y_{i}-\bar{Y}\right)^{2}
\end{array}
\end{equation}

\cite{shapiro1972approximate} suggested replacing the covariance matrix V by the identity matrix I, because for large samples, the observations {Yi} may be treated as if they are independent (see \cite{gupta1952estimation}). Another asymptotic extension was suggested by \cite{weisburg1975approximate}
\begin{equation}
    E\left(X_{i}\right)=m_{i} \approx\Phi^{-1}\left(\frac{i-\frac{3}{8}}{n+\frac{1}{4}}\right) \quad i=1,2, \ldots, n
\end{equation}
building atop \cite{elfving1947asymptotical}'s approximation but using $3/8$ instead of $\pi/8$.

\cite{rahman1997modification} proposed another variation using the approximation for the expected values of order statistics given by \cite{blom1958statistical} and the approximations for the elements of the variance± covariance matrix given by \cite{blom1958statistical,mosteller2006some}. These approximations are
\begin{equation}
    E\left(X_{i}\right)=m_{i} \approx \Phi^{-1}\left(\frac{i}{N+1}\right), \quad i=1,2, \ldots, n
\end{equation}
\begin{equation}
    \operatorname{cov}\left(X_{i}, X_{j}\right)=v_{i j} \approx \frac{p_{i} p_{j}}{(n+2) f\left(m_{i}\right) f\left(m_{j}\right)}, \quad i, j=1,2, \ldots, n
\end{equation}
\begin{equation}
    p_{i}=\frac{i}{n+1}
\end{equation}

We know (see \cite{hammersley1954estimation,plackett1958linear})
\begin{equation}
    \begin{array}{l}
\mathbf{V}^{-1}=(n+1)(n+2) \\
\times\left(\begin{array}{cccccc}
2 \phi^{2}\left(m_{1}\right) & -\phi\left(m_{1}\right) \phi\left(m_{2}\right) & 0 & 0 & \ldots & 0 \\
-\phi\left(m_{1}\right) \phi\left(m_{2}\right) & 2 \phi^{2}\left(m_{2}\right) & -\phi\left(m_{2}\right) \phi\left(m_{3}\right) & 0 & \ldots & 0 \\
0 & -\phi\left(m_{2}\right) \phi\left(m_{3}\right) & 2 \phi^{2}\left(m_{3}\right) & -\phi\left(m_{3}\right) \phi\left(m_{4}\right) & \ldots & 0 \\
\vdots & & & & & \\
0 & 0 & 0 & 0 & \ldots & 2 \phi^{2}\left(m_{n}\right)
\end{array}\right)
\end{array}
\end{equation}

\section{Multivariate Statistics}
\label{sec:multivariate_tests}

% The entire goal of most SSL methods is to produce
% \begin{align*}
%     f(\vx) \sim \mathcal{N}(\mathbf{0},\mI),
% \end{align*}
% under the argument that this is the maximum Entropy distribution hereby preserving as much information about the input as possible. Note that the Gaussian with covariance $\mSigma$ is the density that has maximum entropy amongst all densities with covariance $\mSigma$. The choice of $\mSigma = \mI$ in particular may feel more arbitrary and convenient. There is an argument on having an isotropic covariance matrix as those are the ones occupying move volume (TBD).

% Note that the von Mises–Fisher distribution is the one of maximum Entropy when the circular moment and circular Covariance is specified\footnote{\url{https://arxiv.org/pdf/2010.10918}}. It is obtained from an isotropic Gaussian conditioned on normalized samples. That may be a benefit of isotropic Covariance.

We ideally would like to compare the distributions. One slight variation is to compare the Characteristic function of the distributions. Given samples $\vx_1,\dots,\vx_N$, the Empirical Characteristic Function (ECF) is defined as 
\begin{align*}
    \hat{\psi}_{N}(\vt) = \frac{1}{N}\sum_{n=1}^{N} e^{-i \vt^\top \vy_n}.
\end{align*}
We can now compare our ECF to the one of the target distribution and build the statistic
\begin{align*}
    N\int |\hat{\psi}_{N}(\vt) - \psi_{0}(\vt) |^2\omega(\vt) dt=N\int |\hat{\psi}_{N}(\vt) - e^{-\|\vt\|_2/2} |^2\omega(\vt) dt,
\end{align*}
if the weighting function is given by $\omega(\vt) = (2\pi \beta ^2 )^{-d/2}e^{-\frac{\|\vt\|_2^2}{2}}$ then the following simplification can be made
\begin{align*}
\mathrm{BHEP}_{n, \beta}= & \frac{1}{n} \sum_{j, k=1}^{n} \exp \left(-\frac{\beta^{2}\left\|Y_{n, j}-Y_{n, k}\right\|^{2}}{2}\right) \\
& -\frac{2}{\left(1+\beta^{2}\right)^{d / 2}} \sum_{j=1}^{n} \exp \left(-\frac{\beta^{2}\left\|Y_{n, j}\right\|^{2}}{2\left(1+\beta^{2}\right)}\right)+\frac{n}{\left(1+2 \beta^{2}\right)^{d / 2}} .
\end{align*}
with $\beta>0$, Baringhaus-Henze-Epps-Pulley. From \footnote{\url{https://www.routledge.com/Density-Estimation-for-Statistics-and-Data-Analysis/Silverman/p/book/9780412246203?srsltid=AfmBOoodlL-CtlqL0JVC-LcP6mOWw6VTt51_YstdZOW4W3iuicu1VFyg}}
leading to the HZ test \footnote{\url{https://www.tandfonline.com/doi/abs/10.1080/03610929008830400}} uses
\begin{equation}
    \beta_{n}=2^{-1 / 2}((2 d+1) n / 4)^{1 /(d+4)}
\end{equation}

the same can be done with the moment generating function \footnote{\url{https://arxiv.org/pdf/1711.07199}}
\begin{align*}
T_{n, \beta}=\pi^{d / 2}\left(\frac{1}{n} \sum_{i, j=1}^{n} \frac{1}{\beta^{d / 2}} \exp \left(\frac{\left\|Y_{n, i}+Y_{n, j}\right\|^{2}}{4 \beta}\right)+\frac{n}{(\beta-1)^{d / 2}}\right. \\
\left.-2 \sum_{j=1}^{n} \frac{1}{(\beta-1 / 2)^{d / 2}} \exp \left(\frac{\left\|Y_{n, j}\right\|^{2}}{4 \beta-2}\right)\right),
\end{align*}
here with $\beta>2$

There is also one combining both\footnote{\url{https://arxiv.org/pdf/1706.03029}}!
\begin{equation}
    \begin{array}{l}
T_{n, \gamma}:=\int_{\mathbb{R}^{d}} U_{n}^{2}(t) w_{\gamma}(t) \mathrm{d} t\\
U_{n}(t):=\sqrt{n}\left(R_{n}(t) M_{n}(t)-1\right)
\end{array}
\end{equation}
\begin{equation}
    \begin{aligned}
T_{n, \gamma}= & \left(\frac{\pi}{\gamma}\right)^{d / 2}\left\{\frac { 1 } { 2 n ^ { 3 } } \sum _ { j , k , l , m = 1 } ^ { n } \left[\exp \left(\frac{\left\|Y_{j k}^{+}\right\|^{2}-\left\|Y_{\ell m}^{-}\right\|^{2}}{4 \gamma}\right) \cos \left(\frac{Y_{j k}^{+\top} Y_{\ell m}^{-}}{2 \gamma}\right)\right.\right. \\
+ & \left.\exp \left(\frac{\left\|Y_{j k}^{+}\right\|^{2}-\left\|Y_{\ell m}^{+}\right\|^{2}}{4 \gamma}\right) \cos \left(\frac{Y_{j k}^{+\top} Y_{\ell m}^{+}}{2 \gamma}\right)\right] \\
& \left.-\frac{2}{n} \sum_{j, k=1}^{n} \exp \left(\frac{\left\|Y_{n, j}\right\|^{2}-\left\|Y_{n, k}\right\|^{2}}{4 \gamma}\right) \cos \left(\frac{Y_{n, j}^{\top} Y_{n, k}}{2 \gamma}\right)+n\right\},
\end{aligned}
\end{equation}
and its simplified version
\begin{equation}
    \widetilde{T}_{n, \gamma}:=\int_{\mathbb{R}^{d}} U_{n}(t) w_{\gamma}(t) \mathrm{d} t .
\end{equation}
\begin{equation}
    \widetilde{T}_{n, \gamma}=\left(\frac{\pi}{\gamma}\right)^{d / 2} \sqrt{n}\left(\frac{1}{n^{2}} \sum_{j, k=1}^{n} \exp \left(\frac{\left\|Y_{n, j}\right\|^{2}-\left\|Y_{n, k}\right\|^{2}}{4 \gamma}\right) \cos \left(\frac{Y_{n, j}^{\top} Y_{n, k}}{2 \gamma}\right)-1\right)
\end{equation}

Also one testing the derivative \footnote{\url{https://arxiv.org/pdf/1901.03986}}

\begin{equation}
    \mathrm{HV}_{n, \gamma}:=n \int\left\|\nabla M_{n}(t)-t M_{n}(t)\right\|^{2} \widetilde{w}_{\gamma}(t) \mathrm{d} t
\end{equation}
\begin{equation}\mathrm{HV}_{n, \gamma}=\frac{1}{n}\left(\frac{\pi}{\gamma}\right)^{d / 2} \sum_{j, k=1}^{n} \exp \left(\frac{\left\|Y_{n, j, k}^{+}\right\|^{2}}{4 \gamma}\right)
\left(Y_{n, j}^{\top} Y_{n, k}-\frac{\left\|Y_{n, j, k}^{+}\right\|^{2}}{2 \gamma}+\frac{d}{2 \gamma}+\frac{\left\|Y_{n, j, k}^{+}\right\|^{2}}{4 \gamma^{2}}\right) .
\end{equation}

skewness
\footnote{\url{https://www.jstor.org/stable/2334770}}:
\begin{equation}
    b_{1, d}=\frac{1}{n^{2}} \sum_{j, k=1}^{n}\left(Y_{n, j}^{\top} Y_{n, k}\right)^{3}
\end{equation}
skewness
\footnote{\url{https://link.springer.com/article/10.1007/s13171-020-00211-6}}:
\begin{equation}
    \widetilde{b}_{1, d}=\frac{1}{n^{2}} \sum_{j, k=1}^{n} Y_{n, j}^{\top} Y_{n, k}\left\|Y_{n, j}\right\|^{2}\left\|Y_{n, k}\right\|^{2}
\end{equation}
which should be 0 for Gaussian and Kurtosis which should be d(d+2)
\begin{equation}
    b_{2, d}=\frac{1}{n} \sum_{j=1}^{n}\left\|Y_{n, j}\right\|^{4}
\end{equation}

\end{document}